\theoremstyle{definition}
\newtheorem{theorem}{Theorem}
\newtheorem{corollary}[theorem]{Corollary}
\newtheorem{lemma}[theorem]{Lemma}
\newtheorem{prop}[theorem]{Proposition}
\newtheorem{defn}{Definition}
\newtheorem{remark}{Remark}
\newtheorem{example}{Example}
\newtheorem{assumption}{Assumption}
\newcommand {\bigO}{\mathcal{O}}
\newcommand {\St}{\mathcal{S}}
\newcommand {\Obs}{\mathcal{O}}
\newcommand {\A}{\mathcal{A}}
\newcommand{\D}{\mathcal{D}}
\newcommand {\Hist}{\mathcal{H}}
\newcommand {\Dist}{\Delta}
\newcommand {\Reward}{\mathcal{R}}
\newcommand {\R}{\mathbb{R}}
\newcommand {\E}{\mathbb{E}}
\newcommand {\N}{\mathbb{N}}
\newcommand {\Alt}{\mathbb{A}}
\newcommand{\overbar}[1]{\mathbf{#1}}
\newcommand{\ite}[3]{\mathit{if}\;#1\;\mathit{then}\;#2\;\mathit{else}\;#3}
\newcommand{\nocontentsline}[3]{}
\let\origcontentsline\addcontentsline
\newcommand\stoptoc{\let\addcontentsline\nocontentsline}
\newcommand\resumetoc{\let\addcontentsline\origcontentsline}
\title{The Problem of Social Cost in Multi-Agent General Reinforcement Learning: \\ Survey and Synthesis}
\author{Kee Siong Ng \and Samuel Yang-Zhao \and Timothy Cadogan-Cowper \vspace{1em} \\
 The Australian National University
 }
\begin{document}

\maketitle

\begin{abstract}
The AI safety literature is full of examples of powerful AI agents that, in blindly pursuing a specific and usually narrow objective, ends up with unacceptable and even catastrophic collateral damage to others. In this paper, we consider the problem of social harms that can result from actions taken by learning and utility-maximising agents in a multi-agent environment. The problem of measuring social harms or impacts in such multi-agent settings, especially when the agents are artificial generally intelligent (AGI) agents, was listed as an open problem in Everitt et al, 2018. We attempt a partial answer to that open problem in the form of market-based mechanisms to quantify and control the cost of such social harms. The proposed setup captures many well-studied special cases and is more general than existing formulations of multi-agent reinforcement learning with mechanism design in two ways: (i) the underlying environment is a history-based general reinforcement learning environment like in AIXI; (ii) the reinforcement-learning agents participating in the environment can have different learning strategies and planning horizons. To demonstrate the practicality of the proposed setup, we survey some key classes of learning algorithms and present a few applications, including a discussion of the Paperclips problem and pollution control with a cap-and-trade system.
\end{abstract}

\newpage
\tableofcontents \newpage

\section{Introduction}
\label{sec:introduction}

The AI safety literature is full of examples of powerful AI agents that, in blindly pursuing a specific and usually narrow objective, ends up with unacceptable collateral damage to others, including destroying humankind and the world in extreme cases; see, for example, \cite{russell19, Bostrom14}.
Many of these examples are effectively variations of the tragedy of the commons phenomenon, which has been studied extensively in economics \cite{hardin68, ostrom90, cornes1996theory}. 
Tragedy of the commons typically occur because of an externality that arises when a utility-maximising economic agent does not pay an appropriate cost when making a decision to take an action that provides private benefit but incurs social harm to others, in particular those that are not a party to the decision-making process.
Pollution, overfishing, traffic are all classic examples of multi-agent economic systems that exhibit externalities.

In this paper, we consider the problem of social harms that can result from actions taken by learning and utility-maximising agents in a multi-agent environment.
The problem of measuring social harms in such multi-agent settings, especially when the agents are artificial generally intelligent (AGI) agents, was listed as an open problem in \cite{everittLH18}.
We provide a partial answer to that open problem in the form of market-based mechanisms to quantify and control the cost of such social harms in \S~\ref{sec:action cost}.
The key proposal is the control protocol and agent valuation functions described in \S~\ref{subsec:general case}, which captures many existing and well-studied special cases. 
Our proposed setup is more general than existing formulations of multi-agent reinforcement learning with mechanism design in two ways: (i) the underlying environment is a history-based general reinforcement learning environment like in AIXI \cite{hutter2024introduction}; (ii) the reinforcement-learning agents participating in the environment can have different horizons and algorithms.

To demonstrate the practicality of the proposed setup, we survey some learning algorithms in \S~\ref{sec:learning and planning}, including a description of a Bayesian reinforcement learning agent \cite{yang2024dynamic} that provides the current best direct approximation of AIXI and the conditions under which a collection of such agents will converge to a Nash equilibrium. 
A few applications, including the Paperclip problem and a simple cap-and-trade carbon trading scheme, are discussed in \S~\ref{sec:applications}.

As the background literature is rich, both in breadth and depth, we have taken the liberty to take an expository approach in writing the paper and will introduce key topics and concepts as they are required, 
starting with General Reinforcement Learning in \S~\ref{sec:grl} and Mechanism Design in \S~\ref{sec:mechanism design}.
Readers familiar with these topics can skip the two sections without issue.

\section{General Reinforcement Learning}
\label{sec:grl}

\subsection{Single Agent Setting}\label{subsec:single agent}
We consider finite action, observation and reward spaces denoted by $\A, \Obs, \Reward$ respectively. 
The agent interacts with the environment in cycles: at any time, the agent chooses an action from $\A$ and the environment returns an observation and reward from $\Obs$ and $\Reward$. 
Frequently we will be considering observations and rewards together, and will denote $x \in \Obs \times \Reward$ as a percept $x$ from the percept space $\Obs \times \Reward$. We will denote a string $x_1x_2\ldots x_n$ of length $n$ by $x_{1:n}$ and its length $n-1$ prefix as $x_{<n}$. An action, observation and reward from the same time step will be denoted $aor_t$. After interacting for $n$ cycles, the interaction string $a_1o_1r_1\ldots a_no_nr_n$ (denoted $aor_{1:n}$ from here on) is generated. 
We define the history space $\Hist$ to be an interaction string of any length.

\begin{defn}
	A history $h$ is an element of the space $\Hist \coloneqq (\A \times \Obs \times \Reward)^* $. 
    The history at time $t$ is denoted $h_t = aor_{1:t}$.
\end{defn}

The set of probability distributions over a (finite) set $X$ is denoted $\Dist(X)$.
An environment is a process which generates the percepts given actions. It is defined to be a sequence of probability distributions over percept sequences conditioned on the actions taken by the agent.

\begin{defn}
	\label{defn:Env}
	An environment $\rho$ is a sequence of probability distributions $\{\rho_0, \rho_1, \rho_2, \ldots \}$, where $\rho_n: \A^{n} \to \Dist((\Obs \times \Reward)^n)$, that satisfies
	\begin{align}
		\forall a_{1:n}~ \forall or_{<n} \; \rho_{n-1}(or_{<n} \,|\, a_{<n}) = \sum_{or \in \Obs \times \Reward} \rho_n(or_{1:n} \,|\, a_{1:n}). \label{eqn_chrono}
	\end{align}
	In the base case, we have $\rho_0 (\epsilon \,|\, \epsilon) = 1$. 
\end{defn}
Equation (\ref{eqn_chrono}) captures the natural constraint that actions in the future do not affect past percepts and is known as the chronological condition \cite{Hutter:04uaibook}. 
We will drop the subscript on $\rho_n$ when the context is clear.

The predictive probability of the next percept given history and a current action is given by
 \begin{align*}
	\rho(or_n \,|\, aor_{<n}, a_n) = \rho(or_n \,|\, h_{n-1} a_n) \coloneqq \frac{\rho(or_{1:n} \,|\, a_{1:n})}{\rho(or_{<n} \,|\, a_{<n})}
 \end{align*}
for all $aor_{1:n}$ such that $\rho(or_{<n} \,|\, a_{<n}) > 0$. 
This allows us to write 
\[ \rho( or_{1:n} \,|\, a_{1:n}) = \rho( or_1 \,|\, a_1) \rho(or_2 \,|\, aor_1a_2) \ldots \rho(or_n \,|\, aor_{<n}a_n). \]

The general reinforcement learning problem is for the agent to learn a \textit{policy} $\pi: \Hist \to \Dist(A)$ mapping histories to a distribution on possible actions that will allow it to maximise its expected cumulative future rewards.  
%
%
\begin{defn}    
Given an environment $\mu$, at time $t$, given history $h_{t-1}$ and an action $a_{t}$, the expected future cumulative rewards up to finite horizon $m \in \N$ is given by the value function 
\begin{multline}
	V_{t,m}^{\mu,*}(h_{t-1}, a_{t}) = \sum_{or_{t}} \mu(or_{t} \,|\, h_{t-1} a_{t}) \max_{a_{t+1}} \sum_{or_{t+1}} \mu(or_{t+1} \,|\, h_{t-1} aor_{t} a_{t+1}) \,\cdots \\ \max_{a_{t+m}} \sum_{or_{t+m}} \mu(or_{t+m} \,|\, h_{t-1} aor_{t+1: t+m-1} a_{t+m}) \left[  \sum_{i=t}^{t+m} r_i  \right], \label{eq:V single agent}
\end{multline}
which can also be written in this recursive form
\begin{equation}\label{eq:Bellman single agent}
 V_{t,m}^{\mu,*}(h_{t-1}, a_{t}) = \sum_{or_{t}} \mu( or_{t} \,|\, h_{t-1}a_{t}) \left[ r_{t} + \max_{a_{t+1}} V_{t+1,m}^{\mu,*}(h_{t-1} aor_{t}, a_{t+1}) \right], 
\end{equation}
where $V_{m+1,m}^{\mu,*}(\cdot,\cdot) = 0$.
\end{defn}

If the environment $\mu$ is known, the optimal action $a_{t}^*$ to take at time $t$ is given by
\begin{align*}
a_{t}^* = \arg\max_{a_{t}} V^{\mu,*}_{t,m} (h_{t-1}, a_{t}).
\end{align*}
In practice, $\mu$ is of course unknown and needs to be learned from data and background knowledge.
The AIXI agent \cite{Hutter:04uaibook} is a mathematical solution to the general reinforcement learning, obtained by estimating the unknown environment $\mu$ in (\ref{eq:V single agent}) using Solomonoff Induction \cite{solomonoff1964formal-part1}. 
At time $t$, the AIXI agent chooses action $a^*_t$ according to
\begin{align}
	a^*_t = \arg\max_{a_t} \sum_{or_t} \ldots \max_{a_{t+m}} \sum_{or_{t+m}} \left[ \sum_{j=t}^{t+m} r_j \right] \sum_{\rho \in \mathcal{M}_U} 2^{-K(\rho)} \rho(or_{1:{t+m}} \,|\, a_{1:t+m}), \label{eqn_K_AIXI}
\end{align}
where $m \in \N$ is a finite lookahead horizon, $\mathcal{M}_U$ is the set of all enumerable chronological semimeasures \cite{Hutter:04uaibook}, $\rho(or_{1:{t+m}} | a_{1:t+m})$ is the probability of observing $or_{1:t+m}$ given the action sequence $a_{1:t+m}$, and $K(\rho)$ denotes the Kolmogorov complexity \cite{LV-kolmogorov} of $\rho$. 
The performance of AIXI relies heavily on the next result.
 
\begin{defn} 
	Given a countable model class $\mathcal{M} \coloneqq \{ \rho_1, \rho_2, \ldots \}$ and a prior weight $w^\rho_0 > 0$ for each $\rho \in \mathcal{M}$ such that $\sum_{\rho \in \mathcal{M}} w^\rho_0 = 1$, the \textit{Bayesian mixture model} with respect to $\mathcal{M}$ is given by $\xi_{\mathcal{M}}(or_{1:n} | a_{1:n}) = \sum_{\rho \in \mathcal{M}} w^\rho_0 \rho(or_{1:n} | a_{1:n})$. 
\end{defn}

A Bayesian mixture model enjoys the property that it converges rapidly to the true environment if there exists a `good' model in the model class. 

\begin{theorem}\cite{Hutter:04uaibook}
	\label{thm:MEMConvergence}
	Let $\mu$ be the true environment and $\xi$ be the Bayesian mixture model over a model class $\mathcal{M}$. 
	For all $n \in \N$ and for all $a_{1:n}$,
	\begin{multline}
		\sum_{j=1}^{n} \sum_{or_{1:j}} \mu(or_{<j} | a_{<j}) ( \mu(or_j | aor_{<j} a_j) - \xi(or_j | aor_{<j}a_j) )^2  \\ 
  \leq \min_{\rho \in \mathcal{M}} \left\{ \ln \frac{1}{w^\rho_0} + D_n(\mu || \rho) \right\}, \label{eqn:MEMBound}
	\end{multline}
	where $D_n(\mu || \rho)$ is the KL divergence of $\mu(\cdot | a_{1:n})$ and $\rho(\cdot | a_{1:n})$ defined by \[ D_n(\mu || \rho) \coloneqq \sum_{or_{1:n}} \mu(or_{1:n} | a_{1:n}) \ln \frac{\mu(or_{1:n} | a_{1:n})}{\rho(or_{1:n} | a_{1:n})}. \] 
\end{theorem}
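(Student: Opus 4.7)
The plan is to telescope the left-hand side into the joint KL divergence $D_n(\mu || \xi)$ via the chain rule, and then to bound $D_n(\mu || \xi)$ by exploiting the fact that the mixture $\xi$ dominates every component $\rho \in \mathcal{M}$ up to its prior weight $w^\rho_0$.

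First, I would split $\sum_{or_{1:j}} = \sum_{or_{<j}} \sum_{or_j}$ and, on the inner sum, apply the elementary inequality
\[
	\sum_{y} (p(y) - q(y))^2 \;\leq\; \sum_{y} p(y) \ln \frac{p(y)}{q(y)}
\]
that holds for any two probability distributions $p, q$ on a finite set, with the choice $p = \mu(\cdot \mid aor_{<j} a_j)$ and $q = \xi(\cdot \mid aor_{<j} a_j)$. This converts each inner squared distance into a conditional KL term, leaving
\[
	\sum_{j=1}^{n} \sum_{or_{<j}} \mu(or_{<j} \mid a_{<j}) \sum_{or_j} \mu(or_j \mid aor_{<j} a_j) \ln \frac{\mu(or_j \mid aor_{<j} a_j)}{\xi(or_j \mid aor_{<j} a_j)}.
\]
Next, using the chronological factorisation $\mu(or_{1:n} \mid a_{1:n}) = \prod_{j=1}^n \mu(or_j \mid aor_{<j} a_j)$ (and the analogous one for $\xi$), the double sum above collapses to $D_n(\mu || \xi)$ by the chain rule for KL divergence.

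Finally, by construction $\xi(or_{1:n} \mid a_{1:n}) \geq w^\rho_0 \, \rho(or_{1:n} \mid a_{1:n})$ for every $\rho \in \mathcal{M}$, since the mixture is a nonnegative sum containing $w^\rho_0 \rho$. Taking logarithms gives $\ln(\mu/\xi) \leq \ln(1/w^\rho_0) + \ln(\mu/\rho)$ pointwise, and taking the $\mu$-expectation yields $D_n(\mu || \xi) \leq \ln(1/w^\rho_0) + D_n(\mu || \rho)$. Since this bound holds for every $\rho \in \mathcal{M}$, minimising over $\rho$ produces the stated inequality.

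The main obstacle is the sharp per-step $\ell_2$-vs-KL inequality used in the first step: Pinsker's inequality only delivers an $\ell_1$ bound, with an unwanted factor of $2$. The tighter $\ell_2$ form needed here can be proved either by a direct convexity argument refining $-\ln x \geq 1 - x$ on $(0, 1]$, or by passing through the squared Hellinger distance and bookkeeping the $(\sqrt{p}+\sqrt{q})^2$ factor against $p + q$. Once that lemma is in hand, the chain-rule decomposition and the mixture-domination bound are standard manipulations in Bayesian sequence prediction.
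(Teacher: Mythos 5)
The paper does not actually prove this theorem --- it is imported verbatim from Hutter's book --- so there is no internal proof to compare against; judged on its own, your argument is correct and is essentially the canonical proof of this result: per-step entropy inequality, chain rule for KL along the chronological factorisation, then the dominance bound $\xi \geq w_0^\rho\,\rho$ followed by minimising over $\rho \in \mathcal{M}$. One remark on the key lemma you flag as the main obstacle: the constant-$1$ inequality $\sum_y (p(y)-q(y))^2 \leq \mathrm{KL}(p\,\|\,q)$ does follow from Pinsker after all, via the grouping trick --- with $A = \{y : p(y) \geq q(y)\}$ one has $\sum_y (p(y)-q(y))^2 \leq \bigl(\sum_{y \in A}(p(y)-q(y))\bigr)^2 + \bigl(\sum_{y \notin A}(q(y)-p(y))\bigr)^2 = 2\bigl(P(A)-Q(A)\bigr)^2 \leq \mathrm{kl}\bigl(P(A)\,\|\,Q(A)\bigr) \leq \mathrm{KL}(p\,\|\,q)$, using sharp binary Pinsker and data processing --- whereas the Hellinger route you sketch loses a constant factor (the $(\sqrt{p}+\sqrt{q})^2 \leq 2(p+q)$ bookkeeping gives at best a factor $2$--$4$), so it would not recover the stated bound; the inequality also fails term-by-term, so some such global argument is genuinely needed.
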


To see the rapid convergence of $\xi$ to $\mu$, 
take the limit $n \to \infty$ on the l.h.s of (\ref{eqn:MEMBound}) and observe that in the case where $\min_{\rho \in \mathcal{M}} \sup_{n} D_{n}(\mu || \rho)$ is bounded, the l.h.s. can only be finite if $\xi(or_k | aor_{<k} a_k)$ converges sufficiently fast to $\mu(or_k | aor_{<k}a_k)$.


AIXI is known to be incomputable.
In practice, we will consider Bayesian reinforcement learning agents \cite{ghavamzadeh2015bayesian} that make use of Bayesian mixture models of different kinds and approximate the expectimax operation in (\ref{eqn_K_AIXI}) with algorithms like monte-carlo tree search \cite{ks06} or other reinforcement learning algorithms; examples of such agents include approximations of AIXI like \cite{veness09, yang2022direct, yang2024dynamic}.
These are all model-based techniques; model-free techniques like Temporal Difference learning \cite{Sutton18} that exploits the functional form of (\ref{eq:Bellman single agent}) and universal function approximators like deep neural networks can also be considered.
Indeed, there is an alternative formulation of a universal Bayesian agent called Self-AIXI \cite{catt2023self} that uses a Bayesian mixture over policies to self-predict and maximise over the agent's actions in place of expectimax-style planning.

\subsection{Multi-Agent Setting}\label{subsec:multi agent}

In the multi-agent setup, we assume there are $k > 1$ agents, each with its own action and observation spaces $\A_i$ and $\Obs_i$, $i \in [1\ldots k]$.
At time $t$, the $k$ agents take a joint action \[ \overbar{a_t} = (a_{t,1}, \ldots, a_{t,k}) \in \A_1 \times \cdots \times \A_k = \overbar{\A} \] 
and receive a joint percept 
\[ \overbar{or_t} = (or_{t,1}, \ldots, or_{t,k}) \in (\Obs_1 \times \Reward) \times \cdots \times (\Obs_k \times \Reward) = \overbar{\Obs \times \Reward}. \]
The joint history up to time $t$ is denoted $\overbar{h_t} = \overbar{aor_{1:t}} = \overbar{a_1}\overbar{or_1}\overbar{a_2}\overbar{or_2} \ldots \overbar{a_t}\overbar{or_t}$.

\begin{defn}
	\label{defn:multi-agent env}
	A multi-agent environment $\varrho$ is a sequence of probability distributions 
    $\{ \varrho_0, \varrho_1, \varrho_2, \ldots \}$, where $\varrho_n: (\overbar{\A})^{n} \to \Dist (\overbar{\Obs \times \Reward})^n$, that satisfies
	\begin{align}
		\forall \overbar{a_{1:n}}~ \forall \overbar{or_{<n}} \; \varrho_{n-1}(\overbar{or_{<n}} \,|\, \overbar{a_{<n}}) = \sum_{\overbar{or_n} \in \overbar{\Obs \times \Reward}} \varrho_n( \overbar{or_{1:n}} \,|\, \overbar{a_{1:n}}). 
	\end{align}
	In the base case, we have $\varrho_0 (\epsilon \,|\, \epsilon) = 1$. 
\end{defn}

Multi-agent environments can have different (non-exclusive) properties, some of which are listed here: 
\begin{enumerate}\itemsep1mm\parskip0mm
    \item Mutually exclusive actions, where only one of the actions in $\overbar{a_t}$ chosen by the agents can be executed by the environment at time $t$ -- the default is that the actions are not mutually exclusive;
    \item Zero-sum rewards, where the agents' rewards sum to 0 at every time step so they are competing against each other, like in \cite{littman1994markov};
    \item Identical rewards, where the agents get the exact same rewards at every time step so they are playing cooperatively with each other;
    \item Mixed-sum rewards, which cover all the settings that combine elements of both cooperation and competition; 
    \item Existence of a common resource pool, which can be represented by an `agent' with null action space and whose reward is a function of other agents' consumption of the resource pool.
\end{enumerate}
Comprehensive surveys of formalisations and some key challenges and results in a few of these topics can be found in \cite{shoham2008multiagent, zhang2021multi}.

Each agent's goal in a multi-agent environment is to learn the optimal policy to achieve its own maximum expected cumulative future rewards.
The celebrated result of \cite{kalai1993rational} shows that a group of agents that each (i) uses Bayesian mixture and updating to keep track of other agents' strategies, and (ii) produces a best response policy to the mixture of strategy profiles of the other agents, will converge to an $\epsilon$-Nash equilibrium in repeated games as long as there is a ``grain of truth'' in their beliefs, i.e. each possible opponent strategy is assigned a non-zero probability.
The grain-of-truth condition is not satisfied for a group of AIXI agents because each AIXI agent is not computable, and a Bayesian mixture over such other agents is thus also not computable and therefore not in the model class (of all computable functions).
A technical and general solution to the grain-of-truth problem that significantly extends the result of  \cite{kalai1993rational} to general reinforcement learning is in \cite{leike2016formal}. 
The solution uses Reflective Oracles \cite{fallenstein2015reflective} and a variant of AIXI that uses Thompson sampling to pick policies \cite{leike2016thompson}.
We will consider primarily the behaviour of a collection of (computable) Bayesian reinforcement learning agents in this paper, in both cooperative and competitive multi-agent systems.



\section{Mechanism Design}
\label{sec:mechanism design}

\subsection{Tragedy of the Commons} 
\label{sec:tragedy}

The key to solving tragedy of the commons issues is to work out a way to `internalise' the externality in the design of the multi-agent economic system of interest.
There are two primary approaches: price regulation through a central authority, and a market-based cap-and-trade system.
The former is sometimes referred to as Pigouvian tax after \cite{pigou02}, and it requires a central authority to (i) have enough information to quite accurately determine the unit price of the externality or social harm; and (ii) enforce its payment by agents that cause the externality, thereby internalising it.
In contrast, the cap-and-trade system is motivated by the idea of Coasean bargaining \cite{coase60}, whereby the maximum amount of the externality or social harm allowed is capped through the issuance of a fixed number of permits, each of which allows an agent to produce a unit of externality, and the agents are allowed to determine for themselves whether to use their permits to produce externality, or trade the permits among themselves for profit.
The idea is that the cap-and-trade system will allow the agents that are most efficient in generating private benefit while minimising social harm to win because they can afford to pay a higher price for the permits.
Indeed, the Coase `Theorem' says that as long as the permits are completely allocated and there is no transaction cost involved in trading, then the agents will collectively end up with a Pareto efficient solution.
So a market made up of utility-maximising agents, under the right conditions, is capable of determining the right price for the externality; there is no need for an informative and powerful central authority to set the price.

In the following sections, we will look at some concrete protocols from the field of Mechanism Design for implementing Coasean bargaining and trading in multi-agent environments.
In keeping with the intended spirit of \cite{coase2012firm}, we will largely avoid the term externality from here onwards and favour, instead, the term `social harm'.

\subsection{The VCG Mechanism}\label{subsec:vcg}

In a multi-agent environment, the different agents participating in it can be given different goals and preferences, either competing or cooperative, and the algorithms behind those agents can exhibit different behaviour, including differing abilities in learning and planning for the long term.
Mechanism design \cite{nisan2007} is the study of protocols that can take the usually dispersed and private information and preferences of multiple agents and aggregating them into an appropriate social choice, usually a decision among alternatives, that maximises the welfare of all involved.

Let $\Alt$ be a set of alternatives for a set of $k$ agents.
The preference of agent $i$ is given by a valuation function $v_i : \Alt \to \R$, where $v_i(a)$ denotes the value that agent $i$ assigns to alternative $a$ being chosen.
Here, $v_i \in V_i$, where $V_i \subseteq \R^{\Alt}$ is the set of possible valuation functions for agent $i$. 
We will use the notation $V_{-i} = V_1 \times \cdots \times V_{i-1} \times V_{i+1} \times \cdots V_k$ in the following.

\begin{defn}\label{def:mechanism}
A 
mechanism is a tuple $(f,p_1,\ldots,p_k)$ made up of a social choice function $f : V_1 \times \cdots \times V_k \to \Alt$ and payment functions $p_1,\ldots,p_k$, where $p_i : V_1 \times \cdots \times V_k \to \R$ is the amount that agent $i$ pays to the mechanism.    
\end{defn}
\noindent Given a mechanism $(f,p_1,\ldots,p_k)$ and $k$ agents with value functions $v_1,\ldots,v_k$, the utility of agent $i$ from participating in the mechanism is given by 
\begin{equation}\label{eq:individual utility} 
 u_i(v_1,\ldots,v_k) := v_i(f(v_1,\ldots,v_k)) - p_i(v_1,\ldots,v_k). 
\end{equation}

\begin{defn}\label{def:incentive compatible}
A mechanism $(f,p_1,\ldots,p_k)$ is called incentive compatible if for every agent $i$ with valuation function $v_i \in V_i$, for every $v_i' \in V_i$, and every $v_{-i} \in V_{-i}$, we have
\begin{equation}\label{eq:incentive compatible} 
v_i(f(v_i,v_{-i})) - p_i(v_i,v_{-i}) \geq v_i(f(v_i',v_{-i})) - p_i(v_i',v_{-i}). \end{equation}
\end{defn}
Thus, in an incentive compatible mechanism, each agent $i$ would maximise its utility by being truthful in revealing its valuation function $v_i$ to the mechanism, rather than needing to worry about obtaining an advantage by presenting a possibly false / misleading $v_i'$.

\begin{defn}\label{def:individually rational}
A mechanism $(f,p_1,\ldots,p_k)$ is individually rational if for every agent $i$ with valuation function $v_i \in V_i$ and every $v_{-i} \in V_{-i}$, we have
\begin{equation}\label{eq:IR} v_i(f(v_i,v_{-i})) - p_i(v_i,v_{-i}) \geq 0. \end{equation}
\end{defn}
In other words, the utility of each agent is always non-negative, assuming the agent reports truthfully.

Definitions~\ref{def:mechanism}, \ref{def:incentive compatible} and \ref{def:individually rational} can be generalised to allow the social choice function $f$ and the payment functions $p_i$'s to be randomised functions, in which case we will work with the expectation version of (\ref{eq:individual utility}),  (\ref{eq:incentive compatible}) and (\ref{eq:IR}).

\begin{defn}\label{defn:vcg mechanism}
A mechanism $(f, p_1, \ldots, p_k)$ is called a Vickrey-Clarke-Groves (VCG) mechanism if
\begin{enumerate}
    \item $f(v_1,\ldots,v_k) \in \arg \max_{a \in \Alt} \sum_i v_i(a)$; that is the social choice function $f$ maximises the social welfare, and
    \item there exists functions $h_1,\ldots,h_k$, where $h_i : V_{-i} \to \R$, such that for all $v_1 \in V_1, \ldots, v_k \in V_k$, we have \[ p_i(v_1,\ldots,v_k) = h_i(v_{-i}) - \sum_{j \neq i} v_j(f(v_1,\ldots,v_k)). \] 
\end{enumerate}
\end{defn}

Here is a classical result from mechanism design.

\begin{theorem}\label{thm:vcg incentive compatible}
Every Vickrey-Clarke-Groves mechanism is incentive compatible.    
\end{theorem}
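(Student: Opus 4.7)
The plan is to argue directly from the definition of incentive compatibility by substituting the VCG payment formula and showing the inequality reduces to a statement about the maximising property of $f$. Fix an agent $i$, a true valuation $v_i \in V_i$, an alternative report $v_i' \in V_i$, and any $v_{-i} \in V_{-i}$. Writing out the agent's utility under truthful reporting and under the deviation, the quantity $h_i(v_{-i})$ appears on both sides with a negative sign and therefore cancels, leaving the required inequality in the equivalent form
\[ v_i(f(v_i,v_{-i})) + \sum_{j \neq i} v_j(f(v_i,v_{-i})) \;\geq\; v_i(f(v_i',v_{-i})) + \sum_{j \neq i} v_j(f(v_i',v_{-i})). \]

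The next step is to recognise both sides as evaluations of the same functional $a \mapsto \sum_{j=1}^k v_j(a)$ (using the true $v_i$ on both sides) at the two alternatives $f(v_i,v_{-i})$ and $f(v_i',v_{-i})$. By clause (1) of Definition~\ref{defn:vcg mechanism}, when the profile reported to the mechanism is $(v_i,v_{-i})$, the chosen alternative $f(v_i,v_{-i})$ lies in $\arg\max_{a \in \Alt} \sum_{j=1}^k v_j(a)$, i.e. it maximises the sum with respect to exactly the true valuations appearing on both sides. Instantiating this maximisation inequality at the particular alternative $a = f(v_i',v_{-i})$ yields the desired inequality, completing the proof.

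There is no real obstacle here; the argument is purely algebraic once one spots that the role of the Clarke pivot term $h_i(v_{-i})$ is precisely to be a constant from agent $i$'s point of view (it depends only on $v_{-i}$) so that it drops out of the comparison, and that the remainder of the payment $-\sum_{j \neq i} v_j(f(\cdot,v_{-i}))$ exactly converts agent $i$'s private utility into (a shift of) the social welfare. The only mild care needed is to keep the reported versus true valuations straight: the mechanism's optimality guarantee is with respect to the reported profile $(v_i,v_{-i})$, which coincides with the true valuations used to evaluate the left-hand side, which is what makes the argument go through.
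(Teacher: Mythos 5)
Your argument is correct and is exactly the standard proof of this classical result: after substituting the VCG payment, the term $h_i(v_{-i})$ cancels because it does not depend on agent $i$'s report, and the remaining comparison is just the welfare-maximising property of $f$ from clause (1) of Definition~\ref{defn:vcg mechanism} applied to the true profile $(v_i,v_{-i})$ and instantiated at the alternative $f(v_i',v_{-i})$. The paper itself states the theorem without proof (citing it as classical), so there is nothing to contrast with; your handling of the reported-versus-true valuations is the right point of care, and the proof goes through as written.
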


What should the $h_i$ functions in VCG mechanisms be?
A good choice is the Clark pivot rule.
\begin{defn}
The Clark pivot payment function for a VCG mechanism is given by $h_i(v_{-i}) := \max_{b\in\Alt} \sum_{j\neq i} v_j(b)$ for agent $i$.    
\end{defn}
Under this choice of $h_i$, the payment for agent $i$ is 
\[ p_i(v_1,\ldots,v_k) = \max_b \sum_{j\neq i} v_j(b) - \sum_{j\neq i} v_j(f(v_1,\ldots,v_k)), \] which is the difference between the collective social welfare of the others with and without $i$'s participation in the system.
So each agent makes the payment that internalises the exact social harm it causes other agents.
The utility of agent $i$ is
\[ u_i(v_1,\ldots,v_k) = \sum_j v_j(f(v_1,\ldots,v_k)) - \max_b \sum_{j \neq i} v_j(b). \]

\begin{theorem}\label{thm:vcg is individually rational}
The VCG mechanism with Clark pivot payment function is individually rational if the agent valuation functions are all non-negative.
\end{theorem}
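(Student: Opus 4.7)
The plan is to expand the utility of agent $i$ under the Clark pivot rule and bound it below using the defining optimality of the VCG social choice function, with the non-negativity assumption applied only at the very end.

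First I would fix an agent $i$ and let $a^* \coloneqq f(v_1,\ldots,v_k) \in \arg\max_{a \in \Alt} \sum_j v_j(a)$ and $b^* \in \arg\max_{b \in \Alt} \sum_{j \neq i} v_j(b)$. Substituting the Clark pivot payment into the utility expression given just before the theorem gives
\[ u_i(v_1,\ldots,v_k) = v_i(a^*) + \sum_{j \neq i} v_j(a^*) - \sum_{j \neq i} v_j(b^*). \]

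Next, the key step: use the optimality of $a^*$ for the \emph{full} social welfare (condition 1 of Definition~\ref{defn:vcg mechanism}) applied at the alternative $b^*$, which yields
\[ v_i(a^*) + \sum_{j \neq i} v_j(a^*) \;=\; \sum_j v_j(a^*) \;\geq\; \sum_j v_j(b^*) \;=\; v_i(b^*) + \sum_{j \neq i} v_j(b^*). \]
Substituting this inequality into the expression for $u_i$ above makes the $\sum_{j \neq i} v_j(b^*)$ terms cancel, leaving $u_i \geq v_i(b^*)$. Finally, the hypothesis that valuation functions are non-negative gives $v_i(b^*) \geq 0$, hence $u_i \geq 0$, which is exactly condition (\ref{eq:IR}).

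I do not expect any real obstacle; the argument is essentially a one-line comparison of two sums, and the non-negativity hypothesis is only needed at the last step to discharge the residual $v_i(b^*)$ term (which in fact shows the stronger conclusion that each agent's utility is at least its own valuation of the socially-optimal alternative when they are excluded). The only subtlety worth stating explicitly is that $b^*$ is well-defined because $\Alt$ is the common domain of all $v_j$'s, so the ``counterfactual'' optimum without agent $i$ is taken over the same set of alternatives as $a^*$.
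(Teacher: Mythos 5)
Your proof is correct and is essentially the paper's own argument: the paper states Theorem~\ref{thm:vcg is individually rational} without an explicit proof, but the identical two-step reasoning (optimality of the welfare-maximising choice $a^*$ against the counterfactual optimum $b^*$, plus non-negativity of the omitted agent's valuation) appears verbatim in the proof of its adaptation, Corollary~\ref{cor:bayes-nash ir}. The only difference is cosmetic — you apply the optimality inequality first and non-negativity last, which incidentally records the slightly sharper bound $u_i \geq v_i(b^*)$, whereas the paper drops $v_i(b^*)$ first and then invokes optimality.
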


\begin{example}\label{ex: second price auction}
Consider an auction where $\Alt = \{ 1,\ldots,k \}$ -- so one and only one of the agents win -- and where, for agent $i$, $v_i(i) = p_i$ and $\forall j\neq i, v_i(j) = 0$.
Vickrey's Second Price auction, in which each agent $i$ bids the highest price $p_i$ it is willing to pay for the auction item and where the winner $i^* = \arg \max_j p_j$ pays the second highest bid price $p^* = \max_{j\neq i^*} p_j$ and every one else pays $0$ is a VCG mechanism with the Clark pivot payment function. 
\end{example}

\begin{example}
Consider the design of a mechanism to allow two agents, a buyer $B$ and a seller $S$, to engage in bilateral trade for a good owned by the seller.
There are two possible outcomes: no-trade or trade, which we model numerically as $\Alt = \{ 0, 1 \}$.
The buyer values the good at $\theta_B \geq 0$ so its valuation function is
\[ v_B(d) := \ite{d=1}{\theta_B}{0}. \]
The seller values the good at $\theta_S \geq 0$ so its valuation function is 
\[ v_S(d) := \ite{d = 1}{-\theta_S}{0} \]
because the seller loses the good in the case of a trade.
Suppose we use the VCG mechanism with the Clark pivot payment function as the mechanism, we will end up with the social choice 
\begin{align*}
d^* = f(v_B,v_S) &= \arg \max_{d \in \Alt} (v_B(d) + v_S(d)) \\
   &= \arg \max_{d \in \Alt} \,(\ite{d=1}{\theta_B - \theta_S}{0}) \\
   &= \ite{\theta_B - \theta_S > 0}{1}{0},
\end{align*}
which means there is a trade iff the buyer attaches a higher value to the good than the seller.
Here are the payment functions:
\begin{align*}
p_B(v_B,v_S) &= \max_{d \in \Alt} v_S(d) - v_S(d^*) = \ite{\theta_B - \theta_S > 0}{\theta_S}{0} \\
p_S(v_B,v_S) &= \max_{d \in \Alt} v_B(d) - v_B(d^*) = \ite{\theta_B - \theta_S > 0}{0}{\theta_B}.
\end{align*}
So the buyer pays the mechanism $\theta_S$ if there is a trade and the seller pays the mechanism $\theta_B$ if there is no trade.  
The latter is slightly odd and results in the problematic issue of the seller always having negative utility in participating in the mechanism:
\begin{align*}
    u_S(v_B,v_S) &= v_B(d^*) + v_S(d^*) - \max_{d\in\Alt} v_B(d) \\
    &= \ite{ \theta_B - \theta_S > 0}{-\theta_S}{-\theta_B}.
\end{align*}
The problem comes down to the asymmetric position of the buyer and seller and the $p_i(\cdot) \geq 0$ condition enforced by the Clark pivot payment function, where the seller is forced to make a payment to maintain its ownership of the good (no trade), even though the status quo is that the seller already owns the good.
There are at least two solutions.
The first solution is to insist that the buyer and seller pays nothing if there is no trade, so we end up with the constraints 
\begin{align*}
p_B(v_B,v_S) = h_B(v_S) - v_S(0) = 0 \\
p_S(v_B,v_S) = h_S(v_B) - v_B(0) = 0 
\end{align*}
that imply $h_B(v_S) = v_S(0)$ and $h_S(v_B) = v_B(0)$.
In this scenario, when trade happens, we have
\begin{align*}
    p_B(v_B,v_S) &= h_B(v_S) - v_S(1) = \theta_S \\
    p_S(v_B,v_S) &= h_S(v_B) - v_B(1) = -\theta_B,
\end{align*}
which means the buyer pays the mechanism $\theta_S$ and the seller is paid $\theta_B$ by the mechanism, with both agents obtaining utility $\theta_B - \theta_S > 0$.
Note that mechanism ends up having to subsidise the trade.
The second solution is to remove the ownership asymmetry between the two agents, by first making the mechanism pay an amount $\theta \geq \theta_S$ to the seller to transfer the good to the mechanism and then running the VCG mechanism with Clark pivot rule to determine new ownership of the good. 
Under this setup, the valuation function of the buyer stays the same, but the valuation function of the seller becomes 
\[ v_S(d) := \ite{d = 1}{0}{\theta_S}, \]
and we end up with the Vickrey second-price auction setup.
The utility of the buyer stays the same, and that of the seller becomes
\begin{align*}
    u_S(v_B,v_S) &= \theta + (v_S(d^*) + v_B(d^*) - \max_{d \in \Alt} v_B(d) \\
      &= \theta - \theta_B + (\ite{\theta_B - \theta_S > 0}{\theta_B}{\theta_S}) \\
      &= \ite{(\theta_B - \theta_S > 0)}{\theta}{(\theta + \theta_S - \theta_B)} \\
      &\geq 0.
\end{align*}
As with the first solution, the mechanism ends up with a negative value, which is the cost of subsidising the trade.
\end{example}

\subsection{The Exponential VCG Mechanism}\label{subsec:exponential mechanism}
We have shown in \S~\ref{subsec:vcg} that VCG mechanisms are incentive compatible and individually rational, which means agents are incentivised to participate and be truthful.
It turns out that VCG mechanisms can be made privacy-preserving too.
The exponential mechanism \cite{mcsherry2007expmechanism}, a key technique in differential privacy \cite{dwork2014algorithmic}, has been shown in \cite{huang2012exponential} to be a generalisation of the VCG mechanism that is differentially private, incentive compatible and nearly optimal for maximising social welfare.
We now briefly describe this key result and furnish the proofs, which are rather instructive.

\begin{defn}
    \label{defn:DP}
    A randomized algorithm $\mathcal{M}: V_1 \times \cdots \times V_k \to \Alt$ is $(\epsilon, \delta)$-differentially private if for any $v \in V_1 \times \cdots \times V_k$ and for any subset $\Omega \subseteq \Alt$
    \begin{align*}
        P(\mathcal{M}(v) \in \Omega) \leq e^{\epsilon} P(\mathcal{M}(v') \in \Omega) + \delta,
    \end{align*}
    for all $v'$ such that $|v - v'|_1 \leq 1$ (i.e. there exists at most one $i \in [n]$ such that $v_i \neq v'_i$).
\end{defn}

\begin{defn}
Given a quality function $q : V_1 \times \cdots \times V_k \times \Alt \to \R$ and a $v \in V_1 \times \cdots \times V_k$, the Exponential DP Mechanism $\mathcal{M}_{q}^{\epsilon}(v)$ samples and outputs an element $r \in \Alt$ with probability proportional to $\exp(\frac{\epsilon}{2\Delta_q}q(v,r))$, where \[ \Delta_q = \max_{r \in \Alt} \max_{v_1,v_2: ||v_1 - v_2||_1 \leq 1} |q(v_1,r) - q(v_2,r)|. \]
\end{defn}

\begin{theorem}
The Exponential DP Mechanism is $(\epsilon,0)$-differentially private.   
\end{theorem}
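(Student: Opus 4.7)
The plan is to establish the pointwise ratio bound for singletons $\{r\} \subseteq \Alt$ and then lift it to arbitrary $\Omega$ by summation, which is how differential privacy of an exponential-family mechanism is typically verified.

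First, fix neighbouring inputs $v, v' \in V_1 \times \cdots \times V_k$ with $\|v - v'\|_1 \leq 1$ and an arbitrary $r \in \Alt$. I would write out the ratio
\[ \frac{P(\mathcal{M}_q^\epsilon(v) = r)}{P(\mathcal{M}_q^\epsilon(v') = r)} = \frac{\exp\!\left(\tfrac{\epsilon}{2\Delta_q} q(v,r)\right)}{\exp\!\left(\tfrac{\epsilon}{2\Delta_q} q(v',r)\right)} \cdot \frac{\sum_{r' \in \Alt} \exp\!\left(\tfrac{\epsilon}{2\Delta_q} q(v',r')\right)}{\sum_{r' \in \Alt} \exp\!\left(\tfrac{\epsilon}{2\Delta_q} q(v,r')\right)} \]
and bound the two factors separately.

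For the numerator/denominator on $r$, the definition of $\Delta_q$ gives $|q(v,r) - q(v',r)| \leq \Delta_q$, so the first factor is at most $\exp(\epsilon/2)$. For the ratio of normalising constants, I would use the same sensitivity bound inside the sum: each term $\exp(\tfrac{\epsilon}{2\Delta_q} q(v',r'))$ is at most $\exp(\epsilon/2) \exp(\tfrac{\epsilon}{2\Delta_q} q(v,r'))$, so factoring $\exp(\epsilon/2)$ out of the sum bounds the second factor by $\exp(\epsilon/2)$ as well. Multiplying, $P(\mathcal{M}_q^\epsilon(v) = r) \leq e^\epsilon \, P(\mathcal{M}_q^\epsilon(v') = r)$.

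To conclude, I would sum the pointwise inequality over $r \in \Omega$ for any $\Omega \subseteq \Alt$, obtaining $P(\mathcal{M}_q^\epsilon(v) \in \Omega) \leq e^\epsilon P(\mathcal{M}_q^\epsilon(v') \in \Omega)$, which is exactly $(\epsilon, 0)$-differential privacy per Definition~\ref{defn:DP}. The only subtle point — and the sole place the proof can go wrong — is the normaliser bound: one has to remember that when $v$ changes to $v'$, every term in the partition function shifts, so the resulting $\exp(\epsilon/2)$ factor from the denominator composes with the $\exp(\epsilon/2)$ factor from the numerator to yield $e^\epsilon$ rather than $e^{\epsilon/2}$; the factor of $2$ in the exponent of the mechanism's definition is precisely what makes this composition give the stated bound.
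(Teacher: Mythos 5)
Your proof is correct: the pointwise ratio decomposition into the quality-score factor and the normalising-constant factor, each bounded by $e^{\epsilon/2}$ via the sensitivity $\Delta_q$, and then summing over $r \in \Omega$, is exactly the standard McSherry--Talwar argument, and every step checks out against the paper's definitions of $\mathcal{M}_q^{\epsilon}$ and $\Delta_q$. The paper itself states this theorem without proof (deferring to the cited exponential-mechanism literature), so there is no in-paper argument to compare against; your write-up supplies precisely the canonical proof that the paper implicitly relies on, including the correct observation that the factor $2$ in the exponent is what lets the two $e^{\epsilon/2}$ contributions compose to $e^{\epsilon}$.
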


\begin{defn}
The Exponential VCG Mechanism is defined by $(\mathcal{M}_{q}^{\epsilon},p_1,\ldots,p_k)$ where
\begin{align*}
 & q(v,r) = \sum_{i} v_i(r) \\
 & p_i(v) = \mathop{\mathbb{E}}_{r \sim \mathcal{M}_q^{\epsilon}(v)} \left[-\sum_{j \neq i} v_j(r) \right] - \frac{2}{\epsilon} H(\mathcal{M}_q^{\epsilon}(v)) + \frac{2}{\epsilon}\ln\left( \sum_{r \in \Alt} \exp\left(\frac{\epsilon}{2} \sum_{j \neq i} v_j(r)\right) \right)
\end{align*}
and $H(\cdot)$ is the Shannon entropy function.
\end{defn}

Note that as $\epsilon$ increases, $\mathcal{M}_q^{\epsilon}$ will sample $r^* = \arg \max_{r \in \Alt} \sum_{i} v_i(r)$ with probability rapidly approaching 1, and the payment function also satisfies the form given in Definition~\ref{defn:vcg mechanism}.
In that sense, the exponential VCG mechanism can be considered a generalisation of the VCG mechanism.

\begin{lemma}
Given $\epsilon \in \R$ and valuation functions $v = v_1,\ldots,v_k$ where each $v_i : \Alt \to [0,1]$, the Gibbs social welfare defined by
\[ \mathbb{E}_{r \sim \xi} \left[ \sum_i v_i(r) \right] + \frac{2}{\epsilon}H(\xi) \]
is maximised when $\xi = \mathcal{M}^{\epsilon}_q(v)$ for $q(v,r) = \sum_i v_i(r)$.
\end{lemma}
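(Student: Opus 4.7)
The plan is to recognise the Gibbs social welfare functional as (minus) a Kullback--Leibler divergence to the exponential mechanism distribution, up to an additive constant, and then conclude by Gibbs' inequality. Since each $v_i : \Alt \to [0,1]$ and $q(v,r) = \sum_i v_i(r)$, altering a single coordinate of $v$ changes $q$ by at most $1$, so the sensitivity is $\Delta_q = 1$. Hence $\mathcal{M}_q^{\epsilon}(v)$ assigns to $r \in \Alt$ the probability
\[ \mu(r) \;=\; \frac{\exp\!\left(\tfrac{\epsilon}{2}\, q(v,r)\right)}{Z}, \qquad Z \;:=\; \sum_{r' \in \Alt} \exp\!\left(\tfrac{\epsilon}{2}\, q(v,r')\right). \]

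Writing $F(\xi) := \E_{r \sim \xi}[q(v,r)] + \tfrac{2}{\epsilon} H(\xi)$ for the Gibbs social welfare, my first step is to expand the KL divergence $D(\xi \,\|\, \mu) = \sum_r \xi(r) \ln \xi(r) - \sum_r \xi(r) \ln \mu(r)$ by substituting the explicit form of $\mu$ and collecting terms, yielding
\[ D(\xi \,\|\, \mu) \;=\; -H(\xi) \,-\, \tfrac{\epsilon}{2}\, \E_{r \sim \xi}[q(v,r)] \,+\, \ln Z. \]
Multiplying through by $2/\epsilon$ and rearranging produces the identity $F(\xi) = \tfrac{2}{\epsilon} \ln Z - \tfrac{2}{\epsilon}\, D(\xi \,\|\, \mu)$, in which the first summand is independent of $\xi$.

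Invoking Gibbs' inequality ($D(\xi \,\|\, \mu) \geq 0$, with equality iff $\xi = \mu$) then gives that $F$ is maximised precisely at $\xi = \mu = \mathcal{M}_q^{\epsilon}(v)$. There is no serious obstacle: this is the standard variational characterisation of the Boltzmann--Gibbs distribution. The only care needed is in tracking the factor $\tfrac{\epsilon}{2}$ in the exponent (relying on $\Delta_q = 1$) so that the KL identity produces exactly the coefficients appearing in $F$. A Lagrange multiplier derivation would work equally well, but the KL-based approach has the advantage of making the global rather than merely critical nature of the maximum immediate, and it exposes the quantity $\tfrac{2}{\epsilon} \ln Z$ as the optimal value, which connects naturally to the log-partition term appearing in the payment function $p_i$ of the Exponential VCG Mechanism.
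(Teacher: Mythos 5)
Your proof is correct and follows essentially the same route as the paper: both rewrite the Gibbs social welfare as $\frac{2}{\epsilon}\ln\bigl(\sum_{a\in\Alt}\exp(\epsilon q(v,a)/2)\bigr) - \frac{2}{\epsilon}D_{\mathit{KL}}(\xi \,\|\, \mathcal{M}_q^{\epsilon}(v))$ (using $\Delta_q = 1$) and conclude by Gibbs' inequality. The only difference is cosmetic: you expand the KL divergence directly, whereas the paper rewrites the expectation term first, but the resulting identity is the same.
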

\begin{proof}
The first term in the Gibbs social welfare can be rewritten as follows:
\begin{align}
& \sum_{r \in \Alt} \xi(r) q(v,r) \nonumber \\
&= \frac{2}{\epsilon} \sum_{r \in \Alt} \xi(r) \frac{\epsilon}{2} q(v,r) \nonumber \\  
&= \frac{2}{\epsilon} \sum_{r \in \Alt} \xi(r) \ln \left(\exp \left(\frac{\epsilon q(v,r)}{2} \right) \right)  \nonumber \\
&= \frac{2}{\epsilon} \sum_{r \in \Alt} \xi(r) \ln \left(\frac{\exp (\epsilon q(v,r)/2 )}{\sum_{a \in \Alt} \exp (\epsilon q(v,a)/2 )} \right) + \frac{2}{\epsilon} \ln \left( \sum_{a \in \Alt} \exp (\epsilon q(v,a)/2 ) \right). \label{eqn:gibbs}
\end{align}
Adding (\ref{eqn:gibbs}) to the second entropy term of the Gibbs social welfare and noting that $\Delta_q = 1$, we get
\begin{align}
& \mathbb{E}_{r \sim \xi} \left[ \sum_i v_i(r) \right] + \frac{2}{\epsilon}H(\xi)  \nonumber \\
&= \frac{2}{\epsilon} \sum_{r \in \Alt} \xi(r) \ln (\mathcal{M}_q^{\epsilon}(v)(r) ) + \frac{2}{\epsilon} \ln \left( \sum_{a \in \Alt} \exp (\epsilon q(v,a)/2 ) \right) - \frac{2}{\epsilon} \sum_{r \in \Alt} \xi(r) \ln (\xi(r)) \nonumber \\
&= -\frac{2}{\epsilon} D_{\mathit{KL}}(\xi \,||\, \mathcal{M}_q^{\epsilon}(v)) + \frac{2}{\epsilon} \ln \left( \sum_{a \in \Alt} \exp (\epsilon q(v,a)/2 ) \right). \label{eqn:gibbs 2}
\end{align}
By Gibb's inequality, (\ref{eqn:gibbs 2}) is maximised when $\xi = \mathcal{M}_q^{\epsilon}(v)$.
\end{proof}

\begin{theorem}(\cite{huang2012exponential})
The Exponential VCG Mechanism is incentive compatible and individually rational.    
\end{theorem}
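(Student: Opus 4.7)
The plan is to unpack the utility $u_i(v_i',v_{-i}) = \mathbb{E}_{r\sim\mathcal{M}_q^\epsilon(v_i',v_{-i})}[v_i(r)] - p_i(v_i',v_{-i})$ for an arbitrary reported valuation $v_i'$ (with true valuation $v_i$), and observe that the design of the payment is engineered so that only the first two terms of $p_i$ depend on $v_i'$. Substituting the definition of $p_i$ and collecting terms, I would write
\[
u_i(v_i',v_{-i}) = \mathbb{E}_{r\sim\mathcal{M}_q^\epsilon(v_i',v_{-i})}\Bigl[\textstyle\sum_j v_j(r)\Bigr] + \tfrac{2}{\epsilon} H\bigl(\mathcal{M}_q^\epsilon(v_i',v_{-i})\bigr) - C(v_{-i}),
\]
where $C(v_{-i}) = \tfrac{2}{\epsilon}\ln\bigl(\sum_{r \in \Alt} \exp(\tfrac{\epsilon}{2}\sum_{j\neq i} v_j(r))\bigr)$ is a constant with respect to $v_i'$. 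Crucially, the sum inside the expectation uses the \emph{true} $v_i$, not $v_i'$.

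For incentive compatibility, I would then invoke the preceding Lemma applied to the profile $(v_i, v_{-i})$: among all distributions $\xi$ on $\Alt$, the Gibbs social welfare $\mathbb{E}_{r \sim \xi}[\sum_j v_j(r)] + \tfrac{2}{\epsilon} H(\xi)$ is maximised by $\xi = \mathcal{M}_q^\epsilon(v_i,v_{-i})$. Since agent $i$'s report merely selects which distribution $\xi = \mathcal{M}_q^\epsilon(v_i',v_{-i})$ is used in the first two terms, the maximum is attained precisely by truthful reporting $v_i' = v_i$, establishing (\ref{eq:incentive compatible}).

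For individual rationality, I would specialise to $v_i' = v_i$ and evaluate the utility explicitly. From (\ref{eqn:gibbs 2}) in the Lemma's proof, when $\xi = \mathcal{M}_q^\epsilon(v)$ the $D_{\mathit{KL}}$ term vanishes, so the Gibbs social welfare equals $\tfrac{2}{\epsilon} \ln\bigl(\sum_{a\in\Alt} \exp(\tfrac{\epsilon}{2} \sum_j v_j(a))\bigr)$. Thus
\[
u_i(v_i,v_{-i}) = \tfrac{2}{\epsilon}\ln\!\left( \frac{\sum_{a \in \Alt} \exp(\tfrac{\epsilon}{2}\sum_{j} v_j(a))}{\sum_{r \in \Alt} \exp(\tfrac{\epsilon}{2}\sum_{j\neq i} v_j(r))} \right).
\]
Because each $v_i(a) \in [0,1]$, the term $\exp(\tfrac{\epsilon}{2} v_i(a)) \geq 1$ for every $a$, so the numerator dominates the denominator termwise and the logarithm is non-negative, giving $u_i(v_i,v_{-i}) \geq 0$.

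The main obstacle is really conceptual rather than technical: one has to recognise that the three-term payment function is not arbitrary but is a Fenchel/log-partition construction, precisely such that the $v_i'$-dependent portion of the agent's utility coincides with the Gibbs objective evaluated under the true $v_i$. Once that observation is made, incentive compatibility is an immediate application of the preceding Lemma, and individual rationality reduces to a short log-ratio computation that uses $v_i \geq 0$ in exactly the same way as Theorem~\ref{thm:vcg is individually rational} did for the classical VCG mechanism with the Clark pivot rule.
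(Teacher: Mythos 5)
Your proposal is correct and follows essentially the same route as the paper: you unpack the payment so that the report-dependent part of agent $i$'s utility is the Gibbs social welfare under the true $v_i$ (up to the constant $h(v_{-i})$), apply the preceding Lemma (equivalently, the vanishing-KL observation in (\ref{eqn:gibbs 2})) to conclude truthful reporting is optimal, and obtain individual rationality from the non-negativity of $h(v_i,v_{-i}) - h(v_{-i})$ via the termwise bound $\exp(\tfrac{\epsilon}{2}v_i(a)) \geq 1$. No gaps; this matches the paper's argument.
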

\begin{proof}
We first show the incentive compatible property.
Consider an agent $i$ with valuation function $v_i$ and fix the bids $v_{-i}$ of the other agents.
Let 
\[ h(\{ v_j \}_{j} ) = \frac{2}{\epsilon}\ln\left( \sum_{r \in \Alt} \exp\left(\frac{\epsilon}{2} \sum_{j} v_j(r)\right) \right). \]
The expected utility to agent $i$ when bidding $b_i$ is
\begin{align}
  & \mathop{\mathbb{E}}_{r \sim \mathcal{M}_q^{\epsilon}(b_i,v_{-i})} [ v_i(r) ] - p_i(b_i, v_{-i}) \nonumber \\ 
  &= \mathop{\mathbb{E}}_{r \sim \mathcal{M}_q^{\epsilon}(b_i,v_{-i})} [ v_i(r) ] + \mathop{\mathbb{E}}_{r \sim \mathcal{M}_q^{\epsilon}(b_i,v_{-i})} \left[\sum_{j \neq i} v_j(r) \right] + \frac{2}{\epsilon} H(\mathcal{M}_q^{\epsilon}(b_i,v_{-i})) - h(v_{-i}) \nonumber \\
  &= \mathop{\mathbb{E}}_{r \sim \mathcal{M}_q^{\epsilon}(b_i,v_{-i})} \left[\sum_{j} v_j(r) \right] + \frac{2}{\epsilon} H(\mathcal{M}_q^{\epsilon}(b_i,v_{-i})) - h(v_{-i}) \nonumber \\
  &= -\frac{2}{\epsilon} D_{\mathit{KL}}( \mathcal{M}_q^\epsilon(b_i,v_{-i}) \,||\, \mathcal{M}_q^\epsilon(v_i,v_{-i})) 
  + h(v_i,v_{-i}) - h(v_{-i}), \label{eq:exp vcg utility} 
\end{align}
where the last step follows from (\ref{eqn:gibbs 2}) and is maximised when $b_i = v_i$.

To show the individually rational property, note that when $b_i = v_i$, the expression (\ref{eq:exp vcg utility}) reduces to $h(v_i,v_{-i}) - h(v_{-i})$, which is non-negative and equals zero when $v_i$ is the zero function.
\end{proof}

As shown in \cite{huang2012exponential}, it is also advisable to add differential privacy noise to the payment functions given it contains information about all the agents' valuation functions, which may need to be kept private.

\section{The Social Cost of Actions}
\label{sec:action cost}

\subsection{General Case}\label{subsec:general case}
Suppose we have multiple Bayesian reinforcement learning agents operating within an environment.
These agents are concrete realisations of the concept of perfectly rational utility-maximising agents commonly assumed in economics.
We have seen that, in the absence of some control mechanism, such multi-agent environments can exhibit bad equilibrium.
To avoid tragedy of the commons-type issues, we need to impose a cost on each agent's actions, commensurate with the social harm they are causing other agents with that action, and we will see in this section how augmenting a multi-agent environment with, for example, VCG mechanisms can address such issues.

\subsubsection*{Protocol for Controlled Multi-Agent Environment}
Given a multi-agent environment $\phi$ with $k$ agents and a VCG mechanism $M = (f,p_1,\ldots,p_k)$, we denote by $M \triangleright \phi$ the following interaction protocol between the agents and the environment.
Let $\overbar{h_{t-1}}$ be the joint history up till time $t$.
At time $t$, each agent $i$ submits a valuation function $v_{t,i} \in \mathbb{R}^{\A_i}$, which are collectively denoted as $\overbar{v_t} = ( v_{t,1}, v_{t,2}, \ldots, v_{t,k} )$.
We then use the VCG mechanism $M$ to determine the joint action the agents should take to maximise social welfare via
\begin{equation}\label{eq:max welfare} 
\overbar{a^*_t} := f(\overbar{v_t}) = \arg \max_{\overbar{a} \in \Alt} \sum_{i} v_{t,i}(\overbar{a}). 
\end{equation}
A joint percept $\overbar{or_t}$ is then sampled from $\phi( \overbar{or_t} \,|\, \overbar{h_{t-1}}\overbar{a^*_t})$ and
each agent $i$ receives the percept $or_{t,i}$ and is charged the following payment amount by the mechanism $M$:
\begin{equation}\label{eq:payment function} 
p_i(\overbar{v_t}) := \max_{\overbar{a_t}} \sum_{j \neq i} v_{t,j}(\overbar{a_t}) - \sum_{j \neq i} v_{t,j}(\overbar{a_t^*}), 
\end{equation}
which can be thought of as the social cost that agent $i$ incurs from the joint action $\overbar{a_t^*}$. 
The instantaneous utility of agent $i$ at time $t$ is then given by 
\begin{equation}\label{eq:instantaenous utility} 
r_{t,i} - p_i(\overbar{v_t}). 
\end{equation}
The goal of each agent $i$ is to submit a sequence of valuation functions to maximise its cumulative utility \[ \sum_{t} (r_{t,i} - p_i(\overbar{v_t})), \] which is a random variable dependent on $M \triangleright \phi$ and the submitted valuation functions of the other agents.
The total social welfare obtained from each run of the protocol is the sum of all the agents' cumulative utilities: \[ \sum_{i=1}^k \sum_{t} (r_{t,i} - p_i(\overbar{v_t})). \]

It is worth noting that, under the usual VCG mechanism convention, the utility of agent $i$ for the chosen action $\overbar{a_t^*}$ would be $v_{t,i}(\overbar{a_t^*}) - p_i(\overbar{v_t})$, rather than (\ref{eq:instantaenous utility}).
As we will shall see, with the right choice of $v_{t,i}$, the formula $v_{t,i}(\overbar{a_t^*}) - p_i(\overbar{v_t})$ captures the expected cumulative utility for the agent, where the expectation is with respect to the randomness of the underlying environment $\phi$ and possible randomness in the strategies of the other agents.

\subsubsection*{What should the agent valuation functions be?}
A key question in defining the agent valuation function is to determine whether each agent needs to explicitly consider the other agents operating in the same environment.
We will start by looking at a simple scenario. 

\begin{example}\label{ex:protocol cases}
Consider two agents $A_1$ and $A_2$ that are at the entrance to a long narrow tunnel that leads to a treasure that is guarded by a sleeping dragon.
The treasure is valued by $A_1$ at 100 and by $A_2$ at 90. 
Assume further that the tunnel can only fit one agent, and who ever enters the tunnel first will end up claiming the treasure, assuming they do not wake the dragon along the way. 
This is, in a sense, just a simple extension of Vickrey second price auction but with a planning component and possibly uncertain outcomes.
The first point to make is that each agent's horizon has to be sufficiently long to see that there is a treasure at the end of the tunnel; if it takes 100 steps to reach the treasure but an agent can only see 50 steps ahead, then it will value the action of entering the tunnel at 0. 
The agents should also account for the probability of waking the dragon, and thus getting killed, in their valuation functions.
Should the agents take each other's presence into account? Here are three scenarios, ignoring the probability of being killed by the dragon for now:
\begin{itemize}\itemsep1mm\parskip0mm
 \item If the agents are oblivious of each other in forming their valuation functions, then $A_1$ will value the action of entering the tunnel at 100, and $A_2$ will value the same action at 90, in which case the VCG mechanism will make the right social choice of allowing $A_1$ to enter the tunnel, with a payment of 90 and net gain of 10.

 \item Suppose $A_1$ takes $A_2$ into account but $A_2$ is oblivious to $A_1$. Then $A_1$ will simulate the VCG mechanism and come to the conclusion that it should value the action of entering the tunnel at \$10, which comes from its gain of 100 from claiming the treasure, minus the 90 it has to pay. $A_2$, being oblivious of $A_1$, will value the action of entering the tunnel at 90.
 With these two valuation functions, the VCG mechanism will make the incorrect social choice of picking $A_2$ to enter the tunnel, with a payment of 10 and a net gain of 80.

 \item Suppose both agents take each other into account. Then $A_1$ will value the action of entering the tunnel at 10 as before, and $A_2$ will value the same action at 0, which comes from its simulation that the VCG mechanism will always pick $A_1$ to enter the tunnel. With these two valuation functions, the VCG mechanism will make the right social choice of picking $A_1$ to enter the tunnel, with 0 payment and a net gain of 10.
\end{itemize}
The scenarios suggest the agents need to be synchronised in whether they take each other into account in their valuation functions, but we cannot tell from this example whether they should or should not take each other into account, given the first and third scenarios produce the same net outcome.
The first scenario fits better with the standard framing of mechanism design; e.g. in a Vickrey second price auction, there is no need for each bidder to model what the others might bid. 
The third scenario, however, offers arguably some payment efficiency, but we need proper accounting of payments, given valuation functions are used by the VCG mechanism to determine the actual payments.
\end{example}

In Example~\ref{ex:protocol full example alt self-rational} in Appendix~\ref{app:agent valuation function}, we give a simple scenario to show that an interaction protocol $M \triangleright \phi$ can yield suboptimal results if the agents ignore the others in the environment in forming their valuation functions. 
This leads us to the following proposed agent valuation function, under the assumption that all the agents have full knowledge of the underlying environment $\phi$. Once we have established the optimal agent valuation function under full knowledge, we will examine in \S~\ref{sec:learning and planning} how agents with only partial knowledge of the environment can learn approximations of the optimal valuation function from interaction data.

\begin{defn}\label{defn:agent valuation function}
Given full knowledge of the environment $\phi$ and the mechanism $(f,p_1,\ldots,p_k)$, for each agent $i$ with horizon $m_i$ at time $t$ having seen history $\overbar{h_{t-1}}$, the \emph{rational q-function} $q_{t,i}$, \emph{social cost function} $c_{t,i}$, and \emph{valuation function} $v_{t,i}$ are defined inductively as follows
\begin{align}
  q_{t,i}(\overbar{h_{t-1},\overbar{a_t}}) &=
   \begin{cases}
       0 & t > m_i \\
       \sum_{ \overbar{or_t}} \phi(\overbar{or_t} \,|\, \overbar{h_{t-1}a_t}) [r_{t,i} + \overline{q}_{t+1,i}(\overbar{h_{t-1}}\overbar{a_t}\overbar{or_t})]  & t \leq m_i 
   \end{cases}   \label{eq:agent q function} \\
 \overline{q}_{t,i}(\overbar{h_{t-1}}) &= q_{t,i}(\overbar{h_{t-1}}, f( \overbar{v_{t}}))  \label{eq:agent bar q function} \\ 
 c_{t,i}(\overbar{h_{t-1},\overbar{a_t}}) &= 
   \begin{cases}
       0 & t \geq m_i \\
       \sum_{ \overbar{or_t}} \phi(\overbar{or_t} \,|\, \overbar{h_{t-1}}\overbar{a_t}) [ \overline{c}_{t+1,i}(\overbar{h_{t-1}}\overbar{a_t}\overbar{or_t}) ]  & t < m_i 
   \end{cases}   \label{eq:agent cost function} \\
  \overline{c}_{t,i}(\overbar{h_{t-1}}) &= p_i(\overbar{v_t}) + c_{t,i}(\overbar{h_{t-1}}, f( \overbar{v_{t}}))  \label{eq:agent bar cost function} \\
   v_{t,i}(\overbar{h_{t-1}},\overbar{a_t}) &= q_{t,i}(\overbar{h_{t-1}},\overbar{a_t}) - c_{t,i}(\overbar{h_{t-1}},\overbar{a_t}) \label{eq:alt agent valuation fn} \\
   \overline{v}_{t,i}(\overbar{h_{t-1}}) &= \overline{q}_{t,i}(\overbar{h_{t-1}}) - \overline{c}_{t,i}(\overbar{h_{t-1}}), \label{eq:expected total utility}
\end{align}
where $\overbar{v_t} = (v_{t,1}(\overbar{h_{t-1}},\cdot), \ldots, v_{t,k}(\overbar{h_{t-1}},\cdot))$ and $f(\overbar{v_t}) = \arg \max\limits_{\overbar{a}} \sum\limits_{j} v_{t,j}(\overbar{h_{t-1}},\overbar{a})$.
\end{defn}
Note how (\ref{eq:agent q function}) is similar in form to (\ref{eq:Bellman single agent}), but with the maximising action for each agent replaced by the social choice action at every time step.
To gain some intuition on (\ref{eq:agent q function})-(\ref{eq:expected total utility}), consider the tree in Fig~\ref{fig:game tree} for the simple setup where all the agents have horizon $m =2$, the action space consists only of $\{ \overbar{a}^1, \overbar{a}^2 \}$, and the perception space consists only of $\{ \overbar{or}^1, \overbar{or}^2 \}$.
Each node is indexed by the sequence of symbols in the path from the root to the node, starting with $\overbar{h_0} = \epsilon$ at the root node.
There are two types of alternating nodes: decision nodes (in red) and observation nodes (in green).
Attached to 
\begin{itemize}\itemsep1mm\parskip0mm
    \item each terminal decision node at the bottom of the tree labelled by a percept $\overbar{or}$ is  set of reward values $\{ r_i \}_{i=1\ldots k}$;
    \item each non-terminal decision node indexed by $\overbar{h_{t-1}}$ is a set  $\{ \overline{v}_{t,i}(\overbar{h_{t-1}}) \}_{i=1\ldots k}$ of values corresponding to (\ref{eq:expected total utility}), which requires $\overline{q}_{t,i}(\overbar{h_{t-1}})$ and $\overline{c}_{t,i}(\overbar{h_{t-1}})$; 

    \item each observation node indexed by $\overbar{h_{t-1}a_t}$ is a set  $\{ v_{t,i}(\overbar{h_{t-1}},\overbar{a_t}) \}_{i=1\ldots k}$ of values corresponding to (\ref{eq:alt agent valuation fn}), which requires $q_{t,i}(\overbar{h_{t-1}},\overbar{a_t})$ and $c_{t,i}(\overbar{h_{t-1}},\overbar{a_t})$.
\end{itemize}
Each non-terminal decision node indexed by $\overbar{h_t}$ has a social-welfare maximising action $\arg \max_{\overbar{a}} \sum_{i} v_{t+1,i}(\overbar{h_t},\overbar{a_t})$, which is indicated by a thick arrow.
In the diagram, we assume each of the agents has the same horizon. 
In general, this is not the case and some decision nodes can play the role of both terminal and non-terminal nodes, depending on each agent's horizon.

\begin{figure} 
\begin{center}
\begin{tikzpicture}[
roundnode/.style={circle, draw=green!60, fill=green!5},
squarednode/.style={rectangle, draw=red!60, fill=red!5, minimum width=4mm, text centered, text width=0.6cm},
trinode/.style={isosceles triangle, rotate=90, draw=green!60, fill=green!5}
]
\node[squarednode] (par) {$\epsilon$};
\node[roundnode] (a1) [below left=2.4cm of par] {$a^1$};
\node[roundnode] (a2) [below right=2.4cm of par] {$a^2$};
\node[squarednode] (lor1) [below left=1.5cm of a1] {$or^1$};
\node[squarednode] (lor2) [below =0.95cm of a1] {$or^2$};
\node[trinode] (lor1b) [below = 0.75cm of lor1] {$\cdots$};
\node[trinode] (lor2b) [below = 0.75cm of lor2] {$\cdots$};
\node[squarednode] (ror1) [below left=1.5cm of a2] {$or^1$};
\node[squarednode] (ror2) [below right=1.5cm of a2] {$or^2$};
\node[roundnode] (1a1) [below left=0.8cm of ror1] {$a^1$};
\node[roundnode] (1a2) [below right=0.8cm of ror1] {$a^2$};
\node[roundnode] (2a1) [below left=0.8cm of ror2] {$a^1$};
\node[roundnode] (2a2) [below right=0.8cm of ror2] {$a^2$};
\node[squarednode] (f1or1) [below left=1.3cm of 1a1] {$or^1$};
\node[squarednode] (f1or2) [below =1.6cm of 1a1] {$or^2$};
\node[squarednode] (f2or1) [below left=1.3cm of 1a2] {$or^1$};
\node[squarednode] (f2or2) [below =1.6cm of 1a2] {$or^2$};
\node[squarednode] (f3or1) [below right=1.3cm of 2a1] {$or^1$};
\node[squarednode] (f3or2) [below =1.6cm of 2a1] {$or^2$};
\node[squarednode] (f4or1) [below right=1.3cm of 2a2] {$or^1$};
\node[squarednode] (f4or2) [below =1.6cm of 2a2] {$or^2$};

\draw[->] (par) -- (a1);
\draw[ultra thick, ->] (par) -- (a2);
\draw[->] (a1) -- (lor1);
\draw[->] (a1) -- (lor2);
\draw[->] (a2) -- (ror1);
\draw[->] (a2) -- (ror2);
\draw[ultra thick, ->] (ror1) -- (1a1);
\draw[->] (ror1) -- (1a2);
\draw[->] (ror2) -- (2a1);
\draw[ultra thick, ->] (ror2) -- (2a2);
\draw[->] (1a1) -- (f1or1);
\draw[->] (1a1) -- (f1or2);
\draw[->] (1a2) -- (f2or1);
\draw[->] (1a2) -- (f2or2);
\draw[->] (2a1) -- (f3or1);
\draw[->] (2a1) -- (f3or2);
\draw[->] (2a2) -- (f4or1);
\draw[->] (2a2) -- (f4or2);
\end{tikzpicture}
\caption{A horizon-2 game tree with small action and percept spaces}\label{fig:game tree}
\end{center}
\end{figure}
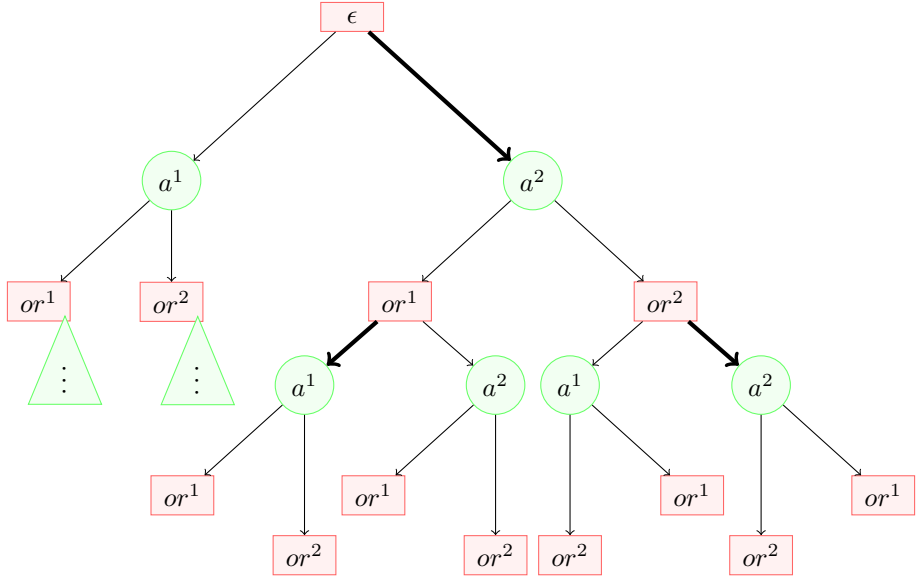

Observe that the quantity $\overline{v}_{1,i}(\epsilon) = \overline{q}_{1,i}(\epsilon) - \overline{c}_{1,i}(\epsilon)$ is the socially optimal expected value of agent $i$'s cumulative utility $\sum_{t} r_{t,i} - p_i(\overbar{q_t})$. (See Appendix~\ref{app:agent valuation function} for details.)
In general, at time $t$ having seen $\overbar{h_{t-1}}$, each agent $i$'s future expected cumulative utility from $t$ onwards is given by $\overline{v}_{t,i}(\overbar{h_{t-1}})$ if all the agents submit their rational valuation functions to the protocol from time $t$ onwards, in which case 
the socially optimal joint action that maximises expected total utility for all the agents, conditioned on the history so far, is taken at every time step. 
We next show that all the agents are incentivised to submit their true rational valuation functions.

\begin{defn}
Let $\phi$ be the environment and $(f,p_1,\ldots,p_k)$ the mechanism.
Given history $\overbar{h_{t-1}}$ at time $t$, 
the agents' submitted valuation functions $\overbar{\tilde{v}_t}$,
social choice action $\overbar{a_t} := f(\overbar{\tilde{v}_t})$, and a percept $\overbar{or_t}$ sampled from $\phi(\cdot \,|\, \overbar{h_{t-1}}\overbar{a_t})$, we define the \emph{realisable cumulative utility} at time $t$ for agent $i$ to be
\begin{equation}\label{eq:realisable return} 
r_{t,i} - p_i(\overbar{\tilde{v}_t}) + \overline{q}_{t+1,i}(\overbar{h_{t-1}}\overbar{a_t}\overbar{or_t}) - \overline{c}_{t+1,i}(\overbar{h_{t-1}}\overbar{a_t}\overbar{or_t}), 
\end{equation}
which is the sum of the agent's instantaneous utility at time $t$ and its expected future cumulative utility from time $t+1$ onwards. 
\end{defn}

The next result is an adaptation of Theorem~\ref{thm:vcg incentive compatible} and shows that, 
assuming all the agents have rational valuation functions, an agent can maximise its expected cumulative utility by always submitting its true rational valuation function if all the other agents also submit their true rational valuation functions, a property called Bayes-Nash Incentive Compatibility, a weaker form of Definition~\ref{def:incentive compatible}.

\begin{corollary}\label{cor:bayes-nash ic}
Let $M \triangleright \phi$ be the interaction protocol.
Suppose each agent's true valuation function is as defined in (\ref{eq:alt agent valuation fn}). 
Then $M \triangleright \phi$ is Bayes-Nash Incentive Compatible with respect to each agent's realisable cumulative utility at every time step.  
\end{corollary}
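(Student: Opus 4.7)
The plan is to reduce the claim to a single-shot application of Theorem~\ref{thm:vcg incentive compatible} by showing that, once one conditions on the induction hypothesis built into Bayes--Nash IC, the expected realisable cumulative utility of agent $i$ at time $t$, viewed as a function of its submitted bid $\tilde{v}_{t,i}$, has \emph{exactly} the standard VCG utility form $v_{t,i}(\overbar{h_{t-1}},\overbar{a_t}) - p_i(\tilde{v}_{t,i},v_{t,-i})$ with the true valuation being the rational valuation $v_{t,i}$ from (\ref{eq:alt agent valuation fn}).

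First I would fix a time $t$ and a history $\overbar{h_{t-1}}$, assume that every agent $j \neq i$ submits its rational valuation $v_{t,j}(\overbar{h_{t-1}},\cdot)$ at time $t$, and assume (this is the Bayes--Nash part) that from time $t+1$ onward all agents, including $i$, submit their rational valuations. Under this conditioning, if agent $i$ bids $\tilde{v}_{t,i}$ the social choice is $\overbar{a_t} = f(\tilde{v}_{t,i},v_{t,-i})$, the sampled percept comes from $\phi(\cdot \mid \overbar{h_{t-1}}\overbar{a_t})$, and the expected realisable cumulative utility in (\ref{eq:realisable return}) is
\begin{equation*}
\sum_{\overbar{or_t}} \phi(\overbar{or_t}\mid \overbar{h_{t-1}}\overbar{a_t})\bigl[r_{t,i} + \overline{q}_{t+1,i}(\overbar{h_{t-1}}\overbar{a_t}\overbar{or_t}) - \overline{c}_{t+1,i}(\overbar{h_{t-1}}\overbar{a_t}\overbar{or_t})\bigr] - p_i(\tilde{v}_{t,i},v_{t,-i}).
\end{equation*}

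Next I would collapse this expectation using the recursive definitions (\ref{eq:agent q function}) and (\ref{eq:agent cost function}): the sum of the $r_{t,i}$ and $\overline{q}_{t+1,i}$ terms is precisely $q_{t,i}(\overbar{h_{t-1}},\overbar{a_t})$, while the sum of the $\overline{c}_{t+1,i}$ terms is precisely $c_{t,i}(\overbar{h_{t-1}},\overbar{a_t})$. By (\ref{eq:alt agent valuation fn}), the difference equals $v_{t,i}(\overbar{h_{t-1}},\overbar{a_t})$, so the expected realisable utility reduces to $v_{t,i}(\overbar{h_{t-1}},f(\tilde{v}_{t,i},v_{t,-i})) - p_i(\tilde{v}_{t,i},v_{t,-i})$. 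This is exactly the utility expression (\ref{eq:individual utility}) for the static VCG mechanism $(f,p_1,\ldots,p_k)$ applied at history $\overbar{h_{t-1}}$, with $v_{t,i}(\overbar{h_{t-1}},\cdot)$ playing the role of the true valuation. Theorem~\ref{thm:vcg incentive compatible} then directly yields that $\tilde{v}_{t,i} = v_{t,i}(\overbar{h_{t-1}},\cdot)$ is a best response, which is the Bayes--Nash IC claim at time $t$.

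The main conceptual hurdle is not the algebra — the recursions were set up so that the telescoping is immediate — but rather being precise about what the quantifier structure of Bayes--Nash IC requires. In particular, the rational valuation $v_{t,i}$ is defined via the social choices $f(\overbar{v_{s}})$ for $s > t$, which themselves implicitly assume everyone submits rationally in the future; the reduction above is valid only because the expectation under $\phi$ of $\overline{q}_{t+1,i} - \overline{c}_{t+1,i}$ coincides with the true conditional expected future utility under that assumption. I would make this explicit by noting that (\ref{eq:agent q function})--(\ref{eq:agent bar cost function}) are well-defined by downward induction on $t$ from the horizons $m_i$, and that the statement of the corollary is a per-agent, per-step best-response property under the standing Bayes--Nash hypothesis, rather than a stronger dominant-strategy guarantee.
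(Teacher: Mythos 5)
Your proposal is correct and follows essentially the same route as the paper: both use the recursions (\ref{eq:agent q function})--(\ref{eq:agent bar cost function}) to collapse the expected realisable cumulative utility at time $t$ into $v_{t,i}(\overbar{h_{t-1}},\overbar{a_t}) - p_i(\tilde{v}_{t,i},v_{t,-i})$ via (\ref{eq:alt agent valuation fn}) and then conclude that truthful reporting is a best response. The only cosmetic difference is that you invoke Theorem~\ref{thm:vcg incentive compatible} as a black box at the final step, whereas the paper re-derives the same inequality inline by writing out and comparing the truthful and deviating expressions (\ref{eq:truth utility alt}) and (\ref{eq:lie utility alt}).
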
   
\begin{proof}
Let $\overbar{h_{t-1}}$ be the history at time $t$.
Fix an agent $i$ and suppose all the other agents submit their true rational valuation functions $v_{t,-i}$. 
Agent $i$ can choose to submit its true valuation function $v_{t,i}$ or some other arbitrary function $\widetilde{v}_{t,i}$.
If it submits $v_{t,i}$, then the protocol picks action $\overbar{a_t} := \arg \max_{\overbar{a}} \sum_j v_{t,j}(\overbar{h_{t-1}}, \overbar{a})$ and agent $i$'s expected realisable cumulative utility from the protocol is
\begin{align}
& 
  \mathbb{E}_{\overbar{or_t}} \biggl[ r_{t,i} - p_i(v_{t,i},v_{t,-i}) + \overline{q}_{t+1,i}(\overbar{h_{t-1}}\overbar{a_t}\overbar{or_t}) - \overline{c}_{t+1,i}(\overbar{h_{t-1}a_t or_t}) \biggr] \nonumber \\ 
= & 
   \mathbb{E}_{\overbar{or_t}} \biggl[ r_{t,i}  + \overline{q}_{t+1,i}(\overbar{h_{t-1}}\overbar{a_t}\overbar{or_t}) \biggr] - \mathbb{E}_{\overbar{or_t}} [ \overline{c}_{t+1,i}(\overbar{h_{t-1}a_t or_t}) ]  \nonumber \\ 
 & \hspace{9em} - \max_{\overbar{b_t}} \sum_{j \neq i} v_{t,j}(\overbar{h_{t-1}}, \overbar{b_t}) + \sum_{j \neq i} v_{t,j}(\overbar{h_{t-1}},\overbar{a_t}) \nonumber \\
= & \sum_{j} v_{t,j}(\overbar{h_{t-1}},\overbar{a_t}) - \max_{\overbar{b_t}} \sum_{j \neq i} v_{t,j}(\overbar{h_{t-1}}, \overbar{b_t}), \label{eq:truth utility alt}
\end{align}
where (\ref{eq:truth utility alt}) follows from (\ref{eq:alt agent valuation fn}). 
(The argument holds for all $t$; for the $t \geq m_i$ case, the $\overline{q}_{t+1,i}$ and $\overline{c}_{t+1,i}$ terms are both zero and $v_{t,i}(\overbar{h_{t-1}},\overbar{a_t}) = \mathbb{E}_{\overbar{or_t}} r_{t,i}$.)
If agent $i$ submits $\widetilde{v}_{t,i}$, we can similarly show that agent $i$'s expected realisable cumulative utility is
\begin{equation}\label{eq:lie utility alt}
  \sum_{j} v_{t,j}(\overbar{h_{t-1}},\overbar{\tilde{a}_t}) - \max_{\overbar{b_t}} \sum_{j \neq i} v_{t,j}(\overbar{h_{t-1}}, \overbar{b_t}),
\end{equation}
where $\overbar{\tilde{a}_t} := \arg \max_{\overbar{a}} \bigl[ \widetilde{v}_{t,i}(\overbar{a}) + \sum_{j \neq i} v_{t,j}(\overbar{h_{t-1}},\overbar{a}) \bigr]$.
Clearly, (\ref{eq:truth utility alt}) $ \geq $ (\ref{eq:lie utility alt}), by the definition of $\overbar{a_t}$.
\end{proof}

By backward induction 
starting from $t = m_i$ for each agent $i$, we can see that each agent's best response is always to submit its true rational valuation function.
For $m_i \to \infty$, the same argument can be made using the One-Shot Deviation Principle in place of backward induction. 

The next result, which is a simple adaptation of Theorem~\ref{thm:vcg is individually rational}, shows the agents are never worse-off by participating in the protocol.
\begin{corollary}\label{cor:bayes-nash ir}
Let $M \triangleright \phi$ be the interaction protocol.
Suppose each agent's true valuation function is as defined in (\ref{eq:alt agent valuation fn}) and $v_{t,i}(\cdot) \geq 0$ for all $t$ and $i$. 
Then $M \triangleright \phi$ is Individually Rational with respect to each agent's realisable cumulative utility at every time step.  
\end{corollary}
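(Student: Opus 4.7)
The plan is to reduce this claim to the static VCG individual rationality theorem (Theorem~\ref{thm:vcg is individually rational}) by invoking the closed-form expression for expected realisable cumulative utility already derived inside the proof of Corollary~\ref{cor:bayes-nash ic}. That corollary shows that when every agent submits its true rational valuation function, agent $i$'s expected realisable cumulative utility at time $t$ given history $\overbar{h_{t-1}}$ equals
\[ \sum_{j} v_{t,j}(\overbar{h_{t-1}},\overbar{a_t}) - \max_{\overbar{b}} \sum_{j \neq i} v_{t,j}(\overbar{h_{t-1}}, \overbar{b}), \]
where $\overbar{a_t} = f(\overbar{v_t})$. So proving individual rationality reduces to showing this quantity is non-negative for every $t$ and every $\overbar{h_{t-1}}$.

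The key step is the usual VCG manipulation. Let $\overbar{b^*} \in \arg\max_{\overbar{b}} \sum_{j \neq i} v_{t,j}(\overbar{h_{t-1}}, \overbar{b})$. Since $\overbar{a_t}$ maximises the full social welfare $\sum_{j} v_{t,j}(\overbar{h_{t-1}},\cdot)$, I have $\sum_j v_{t,j}(\overbar{h_{t-1}},\overbar{a_t}) \geq \sum_j v_{t,j}(\overbar{h_{t-1}},\overbar{b^*})$. Substituting and cancelling the $j \neq i$ terms leaves a lower bound of $v_{t,i}(\overbar{h_{t-1}},\overbar{b^*})$, which is $\geq 0$ by the hypothesis $v_{t,i}(\cdot) \geq 0$. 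This yields the desired non-negativity.

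Two small points warrant attention but should not create real difficulty. First, I should treat the boundary case $t = m_i$ (or $t > m_i$) separately, noting that the $\overline{q}_{t+1,i}$ and $\overline{c}_{t+1,i}$ terms vanish by the inductive definition of the rational $q$-function and cost function; the same VCG inequality still bounds the expected utility below by $v_{t,i}(\overbar{h_{t-1}},\overbar{b^*}) \geq 0$. Second, because $v_{t,i}$ is defined recursively in terms of future choices under the protocol (not merely a primitive valuation), the non-negativity hypothesis is an assumption on the induced rational valuation functions; I will simply state this and note, as in the proof of Theorem~\ref{thm:vcg is individually rational}, that it follows whenever the primitive per-step rewards are non-negative, so that $q_{t,i} \geq c_{t,i}$ propagates along the backward induction. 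The main obstacle, if any, is notational rather than mathematical: making sure the argument is cleanly indexed across all $t \leq m_i$ and all histories.
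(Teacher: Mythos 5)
Your proposal is correct and follows essentially the same route as the paper: it starts from the closed-form expression for expected realisable cumulative utility derived in the proof of Corollary~\ref{cor:bayes-nash ic}, then combines the welfare-maximality of $\overbar{a_t}$ with the non-negativity of $v_{t,i}$ at the pivot maximiser $\overbar{b^*}$ to conclude non-negativity. The paper applies the same two inequalities in the opposite order, which is an immaterial difference, and your remarks on the $t \geq m_i$ boundary case and on how the non-negativity hypothesis should be read are consistent with the paper's treatment.
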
   
\begin{proof}
Denote by $\overbar{h_{t-1}}$ the history at time $t$.
Fix an arbitrary agent $i$ 
and let $\overbar{a_t} := \arg \max_{\overbar{a}} \sum_j v_{t,j}(\overbar{h_{t-1}}, \overbar{a})$ and $\overbar{b_t} := \arg \max_{\overbar{b}} \sum_{j \neq i} v_{t,j}(\overbar{h_{t-1}}, \overbar{b})$.
Then agent $i$'s expected realisable cumulative utility is non-negative since 
\begin{align*}
& \mathbb{E}_{\overbar{or_t}} \biggl[ r_{t,i} - p_i(v_{t,i},v_{t,-i}) + \overline{q}_{t+1,i}(\overbar{h_{t-1}}\overbar{a_t}\overbar{or_t}) - \overline{c}_{t+1,i}(\overbar{h_{t-1}a_t or_t}) \biggr] \nonumber \\ 
& =  \sum_{j} v_{t,j}(\overbar{h_{t-1}},\overbar{a_t}) - \sum_{j \neq i} v_{t,j}(\overbar{h_{t-1}}, \overbar{b_t}) \\ 
& \geq  \sum_{j} v_{t,j}(\overbar{h_{t-1}},\overbar{a_t}) - \sum_{j} v_{t,j}(\overbar{h_{t-1}}, \overbar{b_t}) \\
& \geq  0
\end{align*}
by the non-negativity of $v_{t,i}(\overbar{h_{t-1}},\overbar{b_t})$ and definition of $\overbar{a_t}$.
\end{proof}

\begin{example}\label{ex:protocol full example alt}
Suppose a factory can produce one unit of a product in each time step, the product is perishable within one time step, and the factory only has enough raw material to produce two units of the product. 
Assume the product is valued by agents $A_1, A_2$ and $A_3$ at 100, 80, and 60 respectively, and suppose agents $A_1$ and $A_2$ are both already at the factory and $A_3$ is one time step away from arriving.
The action set at each time step is $\A = \{ 1,2,3 \}$, denoting which agent gets to consume the product.
We assume the outcome of each action is deterministic.
%
Suppose each agent has horizon 2. The following shows the rational q-functions of each agent:
    \begin{align*}
     & q_{2,1}(a_1or_1,a_2) := \ite{a_1 = 1}{0}{ \ite{a_2 = 1}{100}{0}  } \\
     & q_{2,2}(a_1or_1,a_2) := \ite{a_1 = 2}{0}{ \ite{a_2 = 2}{80}{0}  } \\
     & q_{2,3}(a_1or_1,a_2) := \ite{a_2 = 3}{60}{0} \\
     & q_{1,1}(a_1) := 100 \\
     & q_{1,2}(a_1) := \ite{a_1 = 3}{0}{80} \\
     & q_{1,3}(a_1) := 0
    \end{align*}
    As formulated in $q_{2,1}$ and $q_{2,2}$, $A_1$ and $A_2$ only attach value to consuming the product once, either at $t=1$ or $t=2$.
    The value of $q_{1,1}$ is 100 because $A_1$ will always be picked by the VCG mechanism to consume the product, either at $t=1$ or $t=2$.
    The value of $q_{1,2}$ is contingent on $a_1$, in that $A_2$ will be able to consume the product, as long as action $3$ is not picked at $t=1$, in which case it has to compete with $A_1$ at $t=2$ and lose.
    The value of $q_{1,3}$ is 0 because $A_3$ can never consume the product; action $3$ at $t=1$ yields 0 value to $A_3$ because it is not yet at the factory, and it will not win against either $A_1$ or $A_2$ at $t=2$.
    
    The following are the rational social cost functions of each agent, which can be obtained mechanically from the payment function $p_i(\cdot)$ -- see (\ref{eq:payment function}) for definition -- acting on the rational q-functions given above.  
    \begin{align*}
      & c_{2,i}(a_1or_1, a_2) := 0 \;\;\;\; \text{for all $i$} \\ 
      & c_{1,1}(a_1) := \ite{a_1 = 1}{0}{ \ite{a_1 = 2}{60}{80}  } \\
      & c_{1,2}(a_1) := \ite{a_1 = 1}{ 60 }{ 0 } \\
      & c_{1,3}(a_1) := 0 
    \end{align*}
    \noindent The agent valuation functions can be derived from $q_{t,i}$ and $c_{t,i}$ to yield
        \begin{align*}
            & v_{2,1}(a_1,a_2) := \ite{a_1 = 1}{0}{ \ite{a_2 = 1}{100}{0}} \\
      & v_{2,2}({a_1,a_2}) := \ite{a_1 = 2}{0}{ \ite{a_2 = 2}{80}{0} } \\
      & v_{2,3}(a_1,a_2) := \ite{a_2 = 3}{60}{0} \\  
      & v_{1,1}(a_1) :=  \ite{a_1 = 1}{ 100 }{ \ite{a_1 = 2}{ 40 }{20}  } \\
      & v_{1,2}(a_1) := \ite{a_1 = 1}{ 20 }{ \ite{a_1 = 2}{80}{0}} \\
      & v_{1,3}(a_1) := 0 
    \end{align*}    
    If all three agents submit $v_{t,i}$ truthfully, then at $t=1$, actions 1 and 2 
    both maximise social utility. 
    Breaking ties randomly, Tables~\ref{tab:random choice 1 alt} and \ref{tab:random choice 2 alt} show the two possible scenarios.

        \begin{table}[!htbp]
        \centering
        \begin{tabular}{c|c|c|c}
           \hline
                  & $a_1^* = 1$ & $a_2^* = 2$ & CU \\
           \hline       
           $A_1$  &  (100, 100, 60)  & (0,0,0) & 40 \\
           $A_2$  &  (20, 0, 0) &   (80,80,60) & 20 \\
           $A_3$  &  (0, 0, 0) &  (0,0,0) & 0 \\
           \hline
        \end{tabular}
        \caption{Scenario when $a_1^*$ is randomly chosen to be 1}
        \label{tab:random choice 1 alt}
    \end{table}
    
    \begin{table}[!htbp]
        \centering
        \begin{tabular}{c|c|c|c}
           \hline
                  & $a_1^* = 2$ & $a_2^* = 1$ & CU \\
           \hline       
           $A_1$  &  (40, 0, 0)  & (100,100,60) & 40 \\
           $A_2$  &  (80, 80, 60) &   (0,0,0) & 20 \\
           $A_3$  &  (0, 0, 0) &  (0,0,0) & 0 \\
           \hline
        \end{tabular}
        \caption{Scenario when $a_1^*$ is randomly chosen to be 2}
        \label{tab:random choice 2 alt}
    \end{table}
    \noindent The numbers in each cell are the realised $v_{t,i}$, $r_{t,i}$, and $p_{t,i}$ values. 
    The last column is the cumulative utility $\sum_{t=1}^2 r_{t,i} - p_{t,i}$ for each agent.
    Note that 
    both $A_1$ and $A_2$ get the same total cumulative utility irrespective of the random choice on the first action. 

\end{example}


Appendix~\ref{app:agent valuation function} explores a few alternative agent valuation functions. 
Each of them is arguably a more natural candidate that assumes less knowledge about the environment and the strategies of other agents, but these alternative valuation functions either do not consistently maximise total social utility, or they do not satisfy Bayes-Nash incentive compatibility (with respect to realisable cumulative utility).
This is perhaps not surprising; Definition~\ref{defn:agent valuation function} has good properties because it makes extreme assumptions, assumptions that we will need to drop in \S~\ref{sec:learning and planning} when designing practical algorithms for learning rational valuation functions.

\subsubsection*{Possible Protocol Variations}
\paragraph{Variations of Total Social Welfare}
In certain formulations of dynamic mechanism design like \cite{bergemann2010dynamic, qiu2022learning}, the mechanism is considered one of the agents with valuation function defined to be the sum of payments received from all the other agents.
The total social welfare in such a setup is then defined to be the sum of the cumulative utilities of all the agents including the mechanism agent, in which case the payments made to and received by the mechanism cancels out in the total social welfare.
In this setup, the payment terms have a neutral net effect on total social welfare, unlike our setup.
Depending on the intended application of the dynamic mechanism design problem, there are also natural setups where the goal is to maximise the total payments made to the mechanism, for example when the payments are a platform company's revenue.

\paragraph{Variations of Agent Valuation Functions}
In the case where the mechanism $M = (f,p_1,\ldots,p_k)$ is probabilistic (e.g. the Exponential VCG Mechanism described in \S~\ref{subsec:exponential mechanism}), the rational q and social cost functions can be generalised to take the expectation over the values of $f(v_{t,1},\ldots,v_{t,k})$.
We can also, if useful for the intended application, add discounting to (\ref{eq:agent q function}) and (\ref{eq:agent cost function}) in the usual manner.

\paragraph{Variations of Payment Functions}
A key attraction of having the Clark pivot term $\max_{\overbar{a_t}} \sum_{j\neq i} v_{t,j}(\overbar{a_t})$ in (\ref{eq:payment function}) is that, in addition to incentive compatibility and individual rationality, the payment function also (trivially) satisfy the no-positive-transfer property, which means no agent is ever paid money by the mechanism.
If the no-positive-transfer property is not important in an intended application, then the Clark pivot term can be dropped from (\ref{eq:payment function}) to yield the Team mechanism of \cite{athey2013efficient}.
If budget balance is important, which means the payments from all the agents sum to 0, then (\ref{eq:payment function}) can also be suitably adapted to implement the balanced Team mechanism from \cite{athey2013efficient}. 
We provide the details of a collusion-proof generalisation of the balanced Team mechanism, called the Guaranteed Utility Mechanism \cite{csoka2024collusion}, in Appendix~\ref{app:gum}. 

The payment function (\ref{eq:payment function}) for each agent $i$ in our proposed protocol appears to only consider the social effect of removing agent $i$ from the current time step, and this is in contrast to other dynamic mechanism design formulations like \cite{bergemann2010dynamic, qiu2022learning} where the payment for agent $i$ at time $t$ captures the social effect of removing agent $i$ from time $t$ onwards.
Our setup is quite natural for modelling the activities of long-lived agents that participate in different tasks and the intermittent social effect of their non-participation in certain time steps; indeed, our setup appears to be closely related to the general setup in \cite{pavan2014dynamic}, where \cite{athey2013efficient, bergemann2010dynamic} are special cases where the so-called impulse responses can be computed efficiently.
In any case, non-participation of an agent in all future time steps can be obtained by setting the agent's valuation function to 0 after a certain time, and this appears to be a natural thing to do when we apply our protocol to a specific problem like sequential allocation as shown in Example~\ref{ex:protocol full example alt}.

\paragraph{Partial Observability}
In practice, an agent participating in a mechanism-controlled environment will not have full knowledge of the environment and likely no full visibility of the joint actions and / or payments. 
Some of the possible configurations are covered in \S~\ref{subsec:special cases}, with more detailed descriptions of learning algorithms covered in \S~\ref{sec:learning and planning}.

\paragraph{Private Information}
In addition to partial observability, we can also explicitly introduce various forms of private information into our setup.
One noteworthy extension is to have reward modelling at the agent level, where each agent has a private reward function $R_{t,i}(h_{t-1}, or_t)$ that replaces the $r_{t,i}$ term in (\ref{eq:agent q function}).
Such a reward function can model the agent's preferences for different possible observations.
It can also be acquired from an agent's interaction with its human owner through reinforcement learning with human feedback techniques \cite{kaufmann2023survey}.

\subsection{Special Cases and Related Settings}\label{subsec:special cases}
Here are some special cases of a mechanism controlled environment $M\triangleright\phi$, all of which have a rich literature behind them.

\paragraph{Single Agent}
When $k = 1$, the formula (\ref{eq:alt agent valuation fn}) simplifies to (\ref{eq:Bellman single agent}) because $f(v) = \arg \max_{a} v(a)$ and the payment terms evaluate to 0, thus reducing the protocol to that of the single-agent general reinforcement learning setting described in \S~\ref{subsec:single agent}.
And, of course, when the actions are null or have no effect on the environment, we recover the sequential prediction setting, which includes Solomonoff Induction \cite{solomonoff1964formal-part1, solomonoff1964formal-part2}, prediction with expert advice \cite{cesa2006prediction} and the closely related zero-sum repeated games.

\paragraph{Two Player Turn-based Games}
When $k=2$ and the agents take turn executing actions (and therefore the actions are never exclusionary), our general setup reduces to two-player turn-based games, covering both perfect information and imperfect information settings, studied in game theory  \cite{vonneumann.morgenstern47, osborne2004introduction}.

\paragraph{Multi-Agent Reinforcement Learning}
If the actions of the $k$ agents are never mutually exclusive, then $\overbar{a_t^*} = f(v_{t,1},\ldots,v_{t,k}) = \arg \max_{\overbar{a} \in \Alt} \sum_i v_{t,i}(\overbar{a})$ is such that $\overbar{a_t^*} = \arg \max_{\overbar{a} \in \Alt} v_{t,i}(\overbar{a})$ for each $i \in [1,\ldots,k]$.
In that case, the mechanism $M$ in $M \triangleright \phi$ never play a role and the protocol becomes that of the multi-agent general reinforcement learning setting described in \S~\ref{subsec:multi agent}.
Comprehensive surveys of key challenges and results in this area can be found in \cite{shoham2008multiagent, nisan07, zhang2021multi}, with a unifying framework in \cite{lanctot2017unified}.

\paragraph{Static Mechanism Design}
When $m=1$ and $\phi$ is fixed, the setup recovers the classical mechanism design settings like auctions and single-decision markets \cite{borgers2015introduction}.
For example, if $\phi(\cdot \,|\, f(v_1,\ldots,v_k))$ assigns probability 1 to the outcome that agent $i^* = \arg \max_{i} v_i(i)$ gets the percept $(i^*,v_{i^*})$ and every other agent gets $(i^*, 0)$, and the payment function is the Clark pivot function, then $M \triangleright \phi$ reduces to the Vickrey second-price auction among $k$ bidders.

\paragraph{Dynamic Mechanism Design}
When $1 < m \leq \infty$ and $k$ and $\phi$ can change over time, then we are in the dynamic mechanism design setting \cite{bergemann2019dynamic}, with special cases like sequential auctions, and market participants that come and go.
Such dynamic mechanisms have been used, for example, in Uber's surge-pricing model
\cite{chen2016dynamic} and congestion control \cite{barrera2014dynamic}.
A general online mechanism design algorithm based on Hedge \cite{freund1997decision} can be found in \cite{huh2024nash}.

\paragraph{Multi-Agent Coordinated Planning}
There are both similarities and important differences between our mechanism-controlled multi-agent environments setup and the literature on online mechanisms for coordinated planning and learning in multi-agent systems \cite{parkesS03, cavallo2006optimal}.
In the latter case, there is usually a top-level Markov Decision Process (MDP) whose transition and reward functions are common knowledge to all the agents, and each of the agents may only be active during certain time periods and they hold private information -- usually something simple like the value attached to winning an item being auctioned, or a hidden state that forms part of the state for the top-level MDP -- that are needed by a central planner to work out the optimal joint policy for all the agents. 
In that setup, the key problem is the design of dynamic VCG-style mechanisms to incentivise all the agents to truthfully reveal their private information to the central planner at each time step.
Also worth noting that the concept of self-interested agents in the multi-agent coordinate planning literature is mostly about an agent who may lie about its own private information in order to gain an advantage, which is a narrow form of the classical economic notion of a self-interested agent that seeks to act in such a way to maximise its own expected future cumulative rewards.

\paragraph{Multi-Agent Coordinated Learning}
Our setup can also be understood game-theoretically as a simultaneous-move sequential game in which there are $k$ agents, the action space for each agent is the set of all valuation functions, and the loss function for agent $i$ given the collective actions (i.e. submitted valuation functions of all the agents) is its negative utility as determined by the VCG mechanism.
In the learning in games literature \cite{fudenberg1998theory}, the underlying game dynamics is usually assumed to be static and the concern of each agent is primarily around learning a best response to the possibly adaptive strategies of other agents.
Key results in such simultaneous-move repeated games can be found in \cite{cesa2006prediction}.

\paragraph{Dynamic Mechanism Design via Reinforcement Learning}
In the multi-agent coordinated planning case, the underlying MDP is assumed to be known.
When this assumption is dropped, the mechanism designer has to use reinforcement learning algorithms to learn the parameters of the VCG mechanism from data, including the payment functions. 
The simpler bandit setting is tackled in \cite{kandasamyGJS23}. 
A reinforcement learning (RL) algorithm using linear function approximation is described in \cite{qiu2022learning}.
It is worth noting that it is the mechanism designer that is using RL to learn the optimal VCG mechanism parameters over time from the environment made up of multiple agents and their instantaneous (or horizon 1) reward functions. 
The RL for dynamic mechanism design framework is silent on where the agents' reward functions come from; each agent's reward function could come from another learning process, for example via the agent learning its owner's preferences.  

\section{Learning in the Presence of Social Cost}\label{sec:learning and planning}

\subsection{Measures of Success}
In \cite{shoham2008multiagent}, when it comes to multi-agent reinforcement learning, the authors ask ``If learning is the answer, what is the question?''.
The answer is not obvious, not only because the underlying environment is non-stationary -- which can already appear in the single-agent reinforcement learning setting -- but also because the agents can adapt to each other's behaviour so each agent can play the dual roles of learner and teacher at the same time.
For example, the storied Tit-for-Tat strategy \cite{axelrod1981emergence, nowak1992tit, nowak1993strategy} is an agent policy that both learn from and `teach' the other agents in iterated Prisoner's Dilemma-type problems.

Several possible and non-unique theories of successful learning, both descriptive and prescriptive, are provided in \cite[\S 7]{shoham2008multiagent}.
In the context of multi-agent reinforcement learning under mechanism design, we are concerned primarily with designing learning algorithms that satisfy some or all of the following properties, noting that the agents do not learn policy functions but only valuation functions that are fed into a VCG mechanism for joint action selection. 
\begin{description}
    \item[Convergence] Starting from no or only partial knowledge of the environment and the other agents, each agent's learned valuation function at any one time should converge to (\ref{eq:alt agent valuation fn}). 

    \item[Rational] An agent's learning algorithm is rational if, whenenever the other agents have settled on a stationary set of valuation functions, it settles on a best response to that stationary set.
    
    \item[No Regret] An agent's learning algorithm minimises regret if,  
    against any set of other agents, it learns a sequence of valuation functions that achieves long-term utility almost as well as what the agent can achieve by picking, with hindsight, the best fixed valuation function for every time step.
    
    \item[Incentive Compatibility] In the case when the parameters of the VCG mechanism are learned through data, we require that the mechanism exhibits approximate incentive compatibility with high probability.
\end{description}
Some of these concepts will be made more precise in the coming subsections.

\subsection{Bayesian Reinforcement Learning Agents}\label{subsec:brl agents}
In practice, an agent $i$ operating in a given controlled multi-agent environment $M \triangleright \phi$ will not actually have enough information to construct $v_{t,i}$ as defined in (\ref{eq:alt agent valuation fn}).
First of all, it does not know what $\phi$ is.
A good solution is to use a Bayesian mixture $\xi_{\mathcal P}$, for a suitable model class $\mathcal{P}$, to learn $\phi$.
So at time $t$ with history $\overbar{h_{t-1}}$, agent $i$ approximates the expression $\phi(\overbar{or_t} \,|\, \overbar{h_{t-1}}\overbar{a_t} )$ by
\begin{equation}\label{eq:mixture for percept}
   \sum_{\rho \in \mathcal{P}} w_0^{\rho} \rho( \overbar{or_t} \,|\, \overbar{h_{t-1}}\overbar{a_t} ).  
\end{equation} 
The quantity $\overbar{p_t} = (p_1,\ldots,p_k)$ can also be estimated directly using, say, another Bayesian mixture $\xi_{\mathcal{Q}}$ 
via
\begin{equation}\label{eq:mixture for payment} 
  \sum_{\rho \in \mathcal{Q}} w_0^{\rho} \rho( \overbar{p_t} \,|\, \overbar{apor_{<t}}\overbar{a_t} ). 
\end{equation}
In general terms, we can think of (\ref{eq:mixture for percept}) as learning the dynamics of the underlying environment $\phi$, and (\ref{eq:mixture for payment}) as learning the preferences and strategies of the other agents in the environment, which determine the payments charged.
By simulating possible futures, the mixture model (\ref{eq:mixture for percept}) can be used to estimate the agent's rational q-function, and the mixture model (\ref{eq:mixture for payment}) can be used to estimate the agent's rational social cost function.

\subsubsection{Online Mixture Learning in Practice}
The optimal model class $\mathcal{P}$ (and $\mathcal{Q}$) for each agent to use is the class of all computable functions, yielding a multi-agent Solomonoff induction setup, where we can see that each agent's model converges at a fast rate to the true environment by virtue of Theorem~\ref{thm:MEMConvergence}.
This proposal has two issues.
First of all, it is an incomputable solution.
Secondly, Theorem~\ref{thm:MEMConvergence} is only applicable when the environment is computable, and this assumption is violated when the environment contains other agents that are themselves incomputable.

In practice, Solomonoff induction can be approximated efficiently using factored, binarised versions of the Context Tree Weighting algorithm \cite{WST95, VH18, veness09, yang2022direct}, or online prediction-with-experts algorithms like Exponential Weights / Hedge \cite{freund1997decision, cesa2006prediction, arora2012multiplicative}, Switch \cite{volf1998switching, venessNHB12} and their many special cases \cite{hoeven2018many}.
While these algorithms have their roots in online convex optimisation, they can be interpreted as Bayesian mixtures when the loss function is log loss 
\cite{koolen2013universal}, which in our setup is $-\log_2 M(or)$ where $M$ is the model for $\phi$ and $or$ is the observed percept.
In particular, the Hedge algorithm is an exact Bayesian mixture model in the case when the loss function is log loss and the learning rate is 1, in which case one can show that the Hedge weight for each expert is its posterior probability \cite[\S 9.2]{cesa2006prediction}. The Prod algorithm with log loss \cite{orseau2017soft} has been shown to be a "soft-bayes" algorithm, which coincides with the exact Bayesian mixture when the learning rate is 1 and can be interpreted as a "slowed-down" version of Bayesian mixture or a Bayesian mixture with "partially sleeping" experts when the learning rate is less than 1.\footnote{When the true environment is contained in the model class, the optimal learning rate is 1 because Bayesian inference is optimal. However, in the agnostic / improper learning setting where the true environment may not be in the model class, the learning rate needs to decrease over time to avoid pathological issues especially in non-convex function classes \cite{grunwald2017inconsistency, van2015fast}. }
More generally, \cite{koolen2013universal} shows that there is a spectrum of Bayesian mixture algorithms over expert sequences with different priors that can be understood as interpolations of Bayesian mixtures over fixed experts and (non-learning) element-wise mixtures. The spectrum includes Fixed-Share \cite{herbster1998tracking} with static Bernoulli switching frequencies, Switching distributions with dynamic slowly decreasing switching frequencies \cite{erven2007catching, erven2012catching} and dynamically learned switching frequencies \cite{volf1998switching}, and more general switching frequencies that are dependent on some ordering on experts \cite{vovk1997derandomizing, koolen2013universal}.
We will look at some specific Hedge-style algorithms shortly in \S~\ref{subsec:dynamic hedge aixi}.

\subsubsection{Monte Carlo Planning} 
The recurrence in each agent's rational q-function and social cost function can be approximated using variants of the Monte Carlo Tree Search (MCTS) algorithm \cite{browne2012survey, vodopivecSS17, swiechowski2023monte}. 
Even though there are no explicit min-nodes and max-nodes in an MCTS tree, it is known that MCTS converges to the (expecti)minimax tree because as the number of roll-out operations goes to infinity, the UCT \cite{ks06} selection policy at each decision node concentrates the vast majority of the roll-outs on the best child nodes so the weighted average back-up rewards converges to the max/min value. 
It is also worth noting that, rather than choosing the action that maximises the value function at the root of the MCTS tree, the agent declares the entire value function to $M\triangleright\phi$ for a joint action to be chosen by the $M$ mechanism.

\subsubsection{Partial Observability}\label{subsec:partial obs}
We have so far assumed each agent can see everything, including the declared $\overbar{v_t}$ valuation functions of other agents, the chosen joint action $\overbar{a_t}$, the full joint percept $\overbar{or_t}$, and all payments $\overbar{p_t}$.
It is possible to relax this assumption, which we will briefly look at now.

\begin{assumption}\label{ass:agent percept}
Each agent $i$ sees, at time $t$, the joint action $\overbar{a_t}$ but only its own percept $\overbar{or_{t|i}} := or_{t,i}$ and the value of its payment $p_i(\overbar{v_t})$. 
Its view of the history $\overbar{h_t}$ up to time $t$ is denoted $\overbar{h_{t|i}} := \overbar{a_1}or_{1,i}\overbar{a_2}or_{2,i}\ldots \overbar{a_t}or_{t,i}$. 
\end{assumption}

\begin{defn}\label{defn:agent view of multi-agent environment}
Given a multi-agent environment $\varrho$, we can define agent $i$'s view of $\varrho$ under Assumption~\ref{ass:agent percept} as 
$\varrho_{|i} = \{ \varrho_{0|i}, \varrho_{1|i}, \varrho_{2|i}, \ldots \}$, where $\varrho_{t|i} : \overbar{\A}^t \to \Dist (\Obs_i \times \Reward)^t$ is defined by
\[ \varrho_{t|i}( o'r'_{1:t} \,|\, \overbar{a_{1:t}}) \,:= \sum_{\substack{\overbar{or_{1:t}} \\ \text{st} \; \overbar{or_{1:t|i}} = o'r'_{1:t}}} \varrho_t( \overbar{or_{1:t}} \,|\, \overbar{a_{1:t}}). \]
\end{defn}

The valuation function (\ref{eq:alt agent valuation fn}) can no longer be approximated directly with the partial observations.
Instead, we will have to use $\phi_{t|i}$ instead of $\phi_t$ in (\ref{eq:alt agent valuation fn}), and learn the rational q-function and social cost functions 
from data obtained from interactions with the environment, possibly using Bayesian mixture estimators. 

\subsubsection*{Markovian State Abstraction} 
To maintain computational tractability and statistical learnability, we will need to approximate estimators like (\ref{eq:mixture for percept}) with 
\[ \sum_{\rho \in \mathcal{P}} w_0^{\rho} \rho( \overbar{or_t} \,|\, \chi(\overbar{h_{t-1}}\overbar{a_t}) ), \]
where $\chi$ is a feature function that maps arbitrarily long history sequences into usually a finite $n$-dimensional feature space that is a strict subset of $\mathbb{R}^n$.
Depending on the application, such a $\chi$ could be hand-coded by domain experts, or picked from a class of possible feature functions using model-selection principles.
The aim is to select a mapping $\chi$ such that the induced process can facilitate learning without severely compromising performance.  
Theoretical approaches to this question have typically focused on providing conditions that minimise the error in the action-value function between the abstract and original process \cite{abel18,li06}. 
The $\Phi$MDP framework \cite{Hutter2009FeatureRL} instead provides an optimisation criteria for ranking candidate mappings $\chi$ based on how well the state-action-reward sequence generated by $\chi$ can be modelled as an MDP whilst still being predictive of the rewards. A good $\chi$ results in a model that is Markovian and predictive of future rewards, facilitating the efficient learning of good actions that maximise the expected long-term reward.
The class of possible feature functions can be defined using formal logic \cite{de2008logical, lloyd03, dzeroskiRD01} or obtained from embedding techniques \cite{Bengio:2003:NPL:944919.944966, cai2018comprehensive, goyal2018graph} and Deep Learning techniques \cite{mnih2015human, arulkumaran2017deep}. 

One such algorithm, motivated by AIXI \cite{Hutter:04uaibook}, is described in \cite{yang2022direct}.
In this case, the formal agent knowledge representation and reasoning language is as described in \cite{lloyd11,lloyd03}. 
In particular, the syntax of the language are the terms of the $\lambda$-calculus \cite{church}, extended with type polymorphism and modalities to increase its expressiveness for modelling agent concepts. 
The semantics of the language follow Henkin \cite{henkin50completeness}. 
The inference engine has two interacting components: an equational-reasoning engine 
and a tableau theorem prover. 
There is also a predicate rewrite system for defining and enumerating a set of predicates.
The choice of formalism is informed by the standard arguments given in \cite{farmer} and the practical convenience of working with (the functional subset of) a language like Python.
(Suitable alternatives to the formalism are studied in the Statistical Relational Artificial Intelligence literature \cite{deraedt16}, including a few that cater specifically for relational reinforcement learning  \cite{dzeroskiRD01} and Symbolic MDP/POMDPs \cite{kerstingOR04, sannerK10}.)
The feature selection problem was addressed through the use of a so-called random forest binary decision diagram algorithm to find a set of features that approximately minimise an adjusted version of the $\Phi$-MDP criteria \cite{NSH11}.

\subsubsection{Dynamic Hedge AIXI}\label{subsec:dynamic hedge aixi}

We now describe a slight modification of the Bayesian reinforcement learning agent first presented in \cite{yang2024dynamic} that can be used in our multi-agent general reinforcement learning with social cost setup.
The agent combines online learning, Monte Carlo planning, state abstraction, and can deal with partial observability. 
As we shall we see, it has attractive convergence and equilibrium properties. 

We work in the single agent general reinforcement learning setup described in \S~\ref{sec:grl}, with environment models that are obtained as single-agent views of general multi-agent environments as given in Definition~\ref{defn:agent view of multi-agent environment}. The agent operates within the interaction protocol as described in \S~\ref{subsec:general case}.
The general algorithmic strategy is to find the best way to approximate AIXI as directly as possible.

\paragraph{AIXI Approximation}
The AIXI agent can be viewed as containing all possible knowledge as its Bayesian mixture is performed over all computable distributions. From this perspective, AIXI's performance does not suffer due to limitations in its modelling capacity.
In contrast, all previous approximations of AIXI are limited to having a finite pre-defined model class containing a subset of computable probability distributions, presenting an irreducible source of error.
To address this issue, we propose to work in a dynamic knowledge injection setting, where an external source is used to provide additional knowledge that is then integrated into new candidate environment models. 
In particular, dynamic knowledge injection can model an external feature-construction process like \cite{yang2022direct} that regularly injects new features into the agent's learning process, or a human-AI teaming constructs where the human can provide additional domain knowledge that the agent can use to model aspects of the environment. 
Once a new environment model is proposed, the central issue is then to determine how it can be incorporated to improve the agent's performance. Utilising a variation of the GrowingHedge algorithm \cite{MM17:growing_expert}, itself an extension of Hedge \cite{cesa2006prediction}, we construct an adaptive \textit{anytime} Bayesian mixture algorithm that incorporates newly arriving models and also allows the removal of existing models.

\paragraph{Prediction with Specialist Advice}
The prediction with expert advice setting is a well-established framework providing theoretically sound strategies on how to aggregate the forecasts provided by many experts in a sequential setting \cite{cesa2006prediction}. This setting is characterised by a game played between a learner and an adversary. Initially, a loss function $\ell: \mathcal{X} \times \mathcal{Y} \to \R$ is provided, where $\mathcal{X}$ is the vector space of predictions and $\mathcal{Y}$ is the outcome space. The learner has access to a set of fixed experts $\mathcal{M}$. At time $t$, a learner receives prediction $x_{t, i} \in \mathcal{X}$ from expert $i$. The learner then must combine the predictions from all experts and outputs $x_t \in \mathcal{X}$. An adversary then chooses an outcome $y_t \in \mathcal{Y}$ causing the learner to incur loss $\ell_t = \ell(x_t, y_t)$ and observe the loss $\ell_{t, i} = \ell(x_{t, i}, y_t)$ for each expert $i$. Learners are typically designed to minimise the \textit{regret} 
\begin{equation}\label{eqn:regret} 
L_T - L_{T, i} = \sum_{t=1}^{T} \ell_{t} - \sum_{t=1}^{T} \ell_{t, i}, 
\end{equation}
a measure of the relative performance of the agent with respect to any fixed expert $i \in \mathcal{M}$. 
The Hedge (aka exponential weights) algorithm is a simple yet fundamental algorithm 
in this setting \cite{cesa2006prediction}. Given a prior distribution $\bm{\nu}$ over $\mathcal{M}$ and learning rate $\eta > 0$, Hedge predicts 
\begin{align*}
    x_t = \frac{\sum_{i \in \mathcal{M}} w_{t, i} x_{t, i}}{\sum_{i \in \mathcal{M}} w_{t, i}}
\end{align*}
where $w_{t, i} = \nu_i e^{- \eta L_{t-1, i}}$. The weights of the Hedge algorithm can be viewed as the posterior probabilities of each expert. The following is a standard regret bound for the Hedge algorithm.

\begin{theorem}[\cite{cesa2006prediction}]
\label{prop:hedge_regret}
If the loss function $\ell$ is $\eta$-exp-concave, then for any $i \in \mathcal{M}$, Hedge with prior $\bm{\nu}$ has regret bound $L_T - L_{T, i} \leq \frac{1}{\eta} \log \frac{1}{\nu_i}.$
\end{theorem}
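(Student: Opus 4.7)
The plan is to use the standard potential-function argument, where the potential is the normalizing constant $W_t = \sum_{i \in \mathcal{M}} w_{t,i} = \sum_{i \in \mathcal{M}} \nu_i e^{-\eta L_{t-1,i}}$, with the convention $L_{0,i} = 0$ so that $W_1 = 1$. The key insight is that the cumulative loss of Hedge is tightly related to $\log W_{T+1}$, and on the other hand $W_{T+1}$ can be lower bounded by any single expert's contribution.

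First I would establish the per-round inequality $\ell_t \leq -\tfrac{1}{\eta} \log(W_{t+1}/W_t)$. This is precisely where $\eta$-exp-concavity enters: the function $x \mapsto e^{-\eta \ell(x, y)}$ is concave for every fixed outcome $y$. Writing $p_{t,i} = w_{t,i}/W_t$ so that $x_t = \sum_i p_{t,i} x_{t,i}$ is a convex combination, Jensen's inequality gives
\begin{equation*}
e^{-\eta \ell(x_t, y_t)} \;\geq\; \sum_{i \in \mathcal{M}} p_{t,i}\, e^{-\eta \ell(x_{t,i}, y_t)} \;=\; \frac{1}{W_t} \sum_{i \in \mathcal{M}} \nu_i\, e^{-\eta L_{t-1,i}} e^{-\eta \ell_{t,i}} \;=\; \frac{W_{t+1}}{W_t}.
\end{equation*}
Taking logarithms and dividing by $-\eta$ gives the desired per-round bound.

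Next I would telescope: summing over $t = 1, \ldots, T$ yields $L_T \leq -\tfrac{1}{\eta}\log W_{T+1}$ since $W_1 = 1$. Finally, for any fixed $i \in \mathcal{M}$, the trivial bound $W_{T+1} \geq \nu_i e^{-\eta L_{T,i}}$ gives $-\log W_{T+1} \leq \log(1/\nu_i) + \eta L_{T,i}$, and combining these two inequalities produces $L_T - L_{T,i} \leq \tfrac{1}{\eta}\log(1/\nu_i)$ as claimed.

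The only step requiring genuine content is the Jensen step, which hinges on correctly invoking $\eta$-exp-concavity of $\ell$ in its first argument; everything else is an algebraic telescoping and a one-term lower bound on a sum of non-negative quantities. There is no real obstacle here, as this argument is textbook (e.g., \cite{cesa2006prediction}); the main care is only to keep the indexing of $L_{t,i}$ versus $L_{t-1,i}$ consistent with the definition of $w_{t,i}$ given in the statement.
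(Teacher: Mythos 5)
Your proof is correct and is exactly the standard potential-function argument from the cited reference \cite{cesa2006prediction}; the paper itself states this result without reproducing a proof, relying on that source. The only implicit assumption you use, $W_1 = \sum_{i}\nu_i = 1$, is indeed satisfied since $\bm{\nu}$ is a prior distribution over $\mathcal{M}$, so the telescoping and the single-term lower bound go through as written.
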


Incorporating expert advice from new experts arriving in an online fashion can be cast into the specialists setting \cite{FSSW97}, which extends the prediction with expert advice setting by introducing specialists: experts that can abstain from prediction at any given time step. In this setting, the learner has access to a set $\mathcal{M}$ of specialists where at time $t$, only specialists in a subset $\mathcal{M}_t \subseteq \mathcal{M}$ output predictions. 
The crucial idea to adapt the Hedge algorithm to this setting was presented in \cite{CV09} where inactive specialists $j \notin \mathcal{M}_t$ are attributed a forecast equal to that of the learner.

\paragraph{Abstract Environment Models}
Markov state abstraction provides a framework for the external process to inject new features and models.
A state abstraction is a mapping $\psi: \Hist \to \St_{\psi}$ that maps the space of history sequences into an abstract state space.
Given history $h_{t}$ at time $t$, the state at time $t$ is given by $s_t = \psi(h_{t})$. 
In this manner, the interaction sequence of the original process is mapped to a state-action-reward sequence.
For a given $\psi$, an abstract Markov Decision Process (MDP) predicts the next state and reward according to a distribution $\rho_\psi: \St_{\psi} \times \A \to \Dist(\St_{\psi} \times \Reward)$ that factorises into a state transition and reward distribution as 
\begin{equation}\label{eqn:abstract MDP factorisation} 
 \rho_\psi(s', r \,|\, s, a) = \rho_{\psi}(s' \,|\, s, a) \rho_{\psi}(r \,|\, s, a, s'). 
\end{equation}
Let $\rho_{\psi} \coloneqq (\rho_{t, \psi})_{t \geq 1}$, where $\rho_{t, \psi}: \St_{\psi} \times \A \to \Dist(\St_{\psi} \times \Reward)$ for all $t$. 
We refer to the pair $(\psi, \rho_{\psi})$ as an abstract environment model. 

An abstract MDP can simplify the environment's dynamics but pushes a lot of the complexity into the design of the state abstraction function and a sufficiently powerful representation is required to ensure as little generality is lost. 
In particular, the quality of an abstract environment model will determine how closely its reward distribution approximates the underlying environment's reward distribution.
Following \cite{yang2022direct, ng2008probabilistic}, we consider the class of predicate environment models 
where the state abstraction is of the form $\psi(h) = (p_1(h), \ldots, p_n(h))$ and $p_i: \Hist \to \{0, 1\}$ are predicates definable in higher-order logic \cite{lloyd03}. 
%
We construct a new data structure named $\Phi$-BCTW by generalising the Context Tree Weighting (CTW) algorithm \cite{WST95} to use predicates $p_i : \Hist \to \{0,1\}$ from a set $\Phi$ as the context functions in the internal nodes of the tree.
In a $\Phi$-BCTW tree, each sub-tree of depth $d$ is a $\Phi$-prediction suffix tree ($\Phi$-PST) model. For a history $h$, a $\Phi$-PST with predicates $p_i$ at depth $i$ computes a path from root to leaf node as $p_1(h)p_2(h)\ldots p_d(h)$, which forms a state abstraction. At each leaf node resides a KT estimator \cite{KT06} maintaining a distribution over the next bit. 
By chaining together multiple $\Phi$-BCTW trees, we can predict the binary representation of arbitrary symbols.


Finally, by exploiting the distributive law and the structure of prediction suffix trees, one can show that a $\Phi$-BCTW data structure constructed using a set $\Phi$ of $D$ predicates is able to perform an exact Bayesian mixture over $2^{(2^D)}$ $\Phi$-PST models in $\bigO(D)$ time.
With predicates in higher-order logic as context functions, it is shown in \cite{lloyd11, lloyd03, farmer} that such models can represent all computable non-Markovian environments.

\paragraph{Dynamic Hedge AIXI Agent}

%
Our agent, shown in Algorithm \ref{alg:hedge_aixi}, extends the prediction with specialist setting to general reinforcement learning with state abstractions.
In particular, we consider the case where specialists are abstract MDPs.
Each specialist $i \in \mathcal{M}_t$ produces a function $V_i^{\pi_i}: \Hist \times \A \to \R$ denoting the expected utility of action $a_t$ under a policy $\pi_i: \St_{i} \to \A$ up to a horizon $T$:
\begin{align}
    V_{i}^{\pi_i}(h_{t-1}, a_t) 
    &= \sum_{sr_{t:t+T}} \left[ \sum_{j=t}^{t+T} r_j - p_j \right] \rho_i(srp_{t:t+H} \,|\, h_{t-1}, a_{t:t+T}), \label{eqn:specialist_V}
\end{align}
where $\rho_i$ is an extension of (\ref{eqn:abstract MDP factorisation}) to also include the payment from the interaction protocol, and the actions after $a_t$ are selected via $a^{i}_{t+k} = \pi_i(s^{i}_{t+k})$.
At each time step, specialist $i$ predicts a state-action conditional distribution over the next reward and payment, and its prediction is evaluated based on the log loss 
\[ \ell_{t, i} = - \log \rho_{t, i}(r_t p_t \,|\, s_{t-1}^{i}, a_t, s_{t}^{i}). \] 
Thus, DynamicHedgeAIXI will weight specialists based on how well they predict the reward and payment sequences over time. 
Instead of picking the action that maximises the weighted sum of the given $V$ values like in \cite{yang2024dynamic}, here the agent just submits the weighted $V$ functions to the interaction protocol.
There are several sensible choices for $\pi_i$, including knowledge-seeking policies like \cite{orseau2014universal, jin2020reward}.


\begin{algorithm}
\caption{DynamicHedgeAIXI}\label{alg:hedge_aixi}
    \begin{algorithmic}[1]
    \setstretch{1.18}
    \State \textbf{Require:} Interaction protocol $M \triangleright \mu$, 
    learning rate $\eta > 0$, prior weights $\bm{\nu} = (\nu_i)_{i \geq 1}$, 
    \State \textbf{Require:} Sequence of sets of contiguous specialists $(\mathcal{M}_t)_{t \geq 1}$, 
    \State \textbf{Require:} Policies $\bm{\pi} = (\bm{\pi_t})_{t \geq 1}$, where $\bm{\pi_t} = (\pi_i)_{i \in \mathcal{M}_t}$ and $\pi_i: \St_i \to \A$.
    \State \textbf{Initialize:} $L_0 = 0$. For $i \in \mathcal{M}_1$, set $w_{1, i} = \nu_i$. 
    \For{$t = 1, 2, \ldots, T$}
        \State Set $\hat{w}_{t, i} = \frac{w_{t, i}}{\sum_{j \in \mathcal{M}_t} w_{t, j}}$
        \State Submit $\tilde{v}_{t,i} = \sum_{i \in \mathcal{M}_t} \hat{w}_{t, i} V_i^{\pi_i}(h_{t-1}, a)$ to $M \triangleright \mu$ \label{alg:DHA_action_selection}
        \State Observe $o_t, r_t, p_t$ from $M \triangleright \mu$ 
        \State $\forall i \in \mathcal{M}_t$, set $s^i_t = \psi_i(h_{t-1}aor_t p_t)$ 
        \State Set $\rho_t = \sum_{i \in \mathcal{M}_t} \hat{w}_{t, i} \rho_{t, i}(\cdot| s_{t-1}^{i}, a_t, s_{t}^{i})  $
        \State Set $\ell_t = -\log \rho_t(r_t p_t \,|\, s_{t-1}^{i}, a_t, s_{t}^{i})$ 
        \State $\forall i \in \mathcal{M}_t$, set $\ell_{t, i} = - \log \rho_{t, i}(r_t p_t \,|\, s_{t-1}^{i}, a_t, s_{t}^{i})$ 
        \State Set $L_t = L_{t-1} + \ell_t$
        \State $\forall i \in \mathcal{M}_t \cap \mathcal{M}_{t+1}$, set $w_{t+1, i} = w_{t, i} e^{- \eta \ell_{t, i}}$ 
        \State $\forall i \in \mathcal{M}_{t+1} \setminus \mathcal{M}_t$, set $w_{t+1, i} = \nu_i e^{- \eta L_t}$
    \EndFor
    \setstretch{1}
    \end{algorithmic}
\end{algorithm}

In \cite{yang2024dynamic}, the authors show that DynamicHedgeAIXI is the richest direct approximation of AIXI to date and comes with strong value-convergence guarantees. In particular, 
\begin{enumerate}\itemsep1mm\parskip0mm
    \item The model $\tilde{v}_{t,i}$ used in DynamicHedge AIXI (line \ref{alg:DHA_action_selection}) is an exact Bayesian mixture over the available set of models at each time step when $\eta = 1$. The convergence behaviour of the $\tilde{v}_{t,i}$ can thus be understood using Theorem~\ref{thm:MEMConvergence}. 
    
    \item DynamicHedge AIXI will achieve good value convergence rates against the best sequence of environment models $\mu = \mu_1\ldots \mu_T$ available to the agent, in that the cumulative squared difference of the value under DynamicHedgeAIXI has an upper bound that is linear in $\log \frac{1}{w(\mu)}$, where $w(\mu)$ is the prior weight assigned to the sequence $\mu$. 
\end{enumerate}

\subsubsection{Approximate Nash Equilibrium}\label{subsec:nash equilibrium}

The classic result on the effectiveness of Bayesian learning in multi-agent systems is given in \cite{kalai1993rational}, where it is shown that, in a group of interacting agents, if   
\begin{itemize}\itemsep1mm\parskip0mm
    \item each agent models and keeps track of the other agents' strategies using a Bayesian mixture, and 
    \item produces at every time step a best-response policy, as measured by expected cumulative utility, to the Bayesian mixture of opponent strategies, 
\end{itemize} 
then the group of agents will converge to an $\epsilon$-Nash equilibrium in repeated plays of normal-form and stochastic games as long as each agent's Bayesian mixture has a ``grain of truth'', in that every possible opponent strategy is assigned a non-zero probability. 

The result was extended in \cite{leike2016formal} to the multi-agent general reinforcement learning setting, where the authors also provided a theoretical solution to the grain-of-truth problem 
that uses Reflective Oracles \cite{fallenstein2015reflective} and a variant of AIXI that uses Thompson sampling to pick policies \cite{leike2016thompson}.
The Dynamic Hedge AIXI agent can be seen as a practical approximation to the learning reflective agents of \cite{leike2016formal}, in that Dynamic Hedge AIXI satisfies the mixture-modelling and best-response policy (see Theorem 2 in \cite{yang2024dynamic}) conditions, and the grain-of-truth condition is given a realistic chance of being realised through the dynamic knowledge injection setup and the use of higher-order logic, which is Turing complete, for representing state abstractions.

The key difference between our setup and that of \cite{leike2016formal} is that it is the VCG mechanism, rather than the individual agents, that picks the joint action for all the agents.
In that sense, the maximum expected cumulative utility achievable for each agent, and therefore the definition of best response, is with respect to the policy executed by the VCG mechanism based on valuation functions submitted by the agents; each agent's own policy $\pi_i$ is only used to make sure it can get a good estimate $\tilde{v}_{t,i}$ of the rational valuation function given in Definition~\ref{defn:agent valuation function} through possible simulations of the future.

\subsection{Swap Regret and Correlated Equilibrium}

As seen in \S~\ref{subsec:nash equilibrium}, a collection of Bayesian reinforcement learning agents will converge to a Nash equilibrium as long as the grain-of-truth condition is satisfied. 
In cases where the grain-of-truth condition cannot be (confidently) satisfied, we may wish to settle for convergence to a correlated equilibrium \cite{aumann1987correlated}, where no agent would want to deviate from their strategies assuming the others also do not deviate.
Correlated equilibrium includes Nash equilibrium as a special case but is strictly more general.

Online learning algorithms that minimise regret turn out to also be important tools for achieving correlated equilibriums.
In particular, \cite{foster1997calibrated} shows that, in repeated plays of a normal form game $G$, if each agent adopts a strategy that learns to minimise their swap regret, and that the respective swap regret converges to 0, then the empirical distribution of the agents' actions converges to a correlated equilibrium. 
Swap regret is a generalisation of (\ref{eqn:regret}) that allows comparison of the agent's actual performance with respect to an alternative strategy that applies an arbitrary swap function $\omega : \mathcal{X} \to \mathcal{X}$ on the agent's chosen actions in hindsight. (For example, in a stock-picking contest, the swap regret may compare the agent's actual performance against an alternative strategy that swaps the agent's choice to Alphabet every time it picked IBM, and to TSMC every time it picked Intel.)

In \cite{blum2007external, chen2020hedging}, the authors describe a general procedure to turn agents that minimise the standard form of regret into agents that minimise the swap regret, which is regret that allows for any specific agent action to be switched with some other action with the benefit of hindsight.
This is achieved via a master algorithm that runs $N = |\A|$ instances of a standard regret minimisation algorithm, one for each possible action. Each of the $A_l$ algorithms, $l\in [N]$, maintains a $q_l^t \in \R^N$ weight vector at each time step as usual. The master algorithm maintains a weight vector $p^t \in \R^N$ that is the solution to the equation $p^t = p^tQ^t$, where $Q^t \in \R^{N \times N}$ is the matrix made up of row vectors $q_l^t, l\in [N]$. 
(The $p^t$ on the RHS of $p^t = p^t Q^t$ can be understood as the weights attached to $A_l$ algorithms, and the $p^t$ on the left is best understood as the probability of selecting the different actions. Thus, $p^t$ is the stable limiting distribution of the stochastic matrix $Q^t$ and the existence and efficient computability of $p^t$ is given by the Perron-Frobenius Theorem.)
The master algorithm incurs a loss vector $\ell^t \in \R^N$ at each time step, which is then attributed to $A_l$ by $p_l^t \ell^t$. Intuitively, each $A_l$ is responsible for minimising the regret of switching action $l$ to any other action via its standard regret minimising property.
The above master algorithm can be adjusted with the multi-armed bandit algorithm in \cite{auer2002nonstochastic} to deal with the partial information case, where the master algorithm selects an action $a_t$ by sampling $p^t$ and receives a loss $\ell^t \in R$ related only to $a_t$ (instead of the full $p^t$).

It is possible to na{\"i}vely reduce our multi-agent general reinforcement learning problem to a normal-form game with a large action space -- basically, the set of all policies mapping histories to agent valuation functions -- and have each agent use the above swap-regret minimisation algorithm to collectively converge to a correlated equilibrium.
But we can do better. There has been a significant number of algorithmic improvements on the problem of swap-regret minimisation in the last few years. 
The current state-of-the-art is the Multi-Scale Multiplicative Weight Update (MSMWU) algorithm \cite{peng2024fast} and the slightly more general TreeSwap meta-algorithm \cite{dagan2024external}.
Both algorithms 
\begin{itemize}\itemsep1mm\parskip0mm
    \item provide a swap-regret bound that has time complexity that is logarithmic in the action space, substantially improving the bound of \cite{blum2007external} that is polynomial in the action space;
    \item can be applied to extensive-form games to achieve extensive-form correlated equilibrium in time polynomial in the number of agents, the agent's action space, and the horizon of the game. 
\end{itemize}
Since the multi-agent general reinforcement learning with mechanism problem can be naturally represented as an extensive-form game, the MSMWU / TreeSwap algorithm can be used by each agent in our setup to achieve convergence to a correlated equilibrium.

\subsection{Bandit VCG Mechanisms}\label{subsec:bandit vcg}
Note that, in practice, the agents' valuation functions are not fully known but estimated from the actual percepts they get from the environment, which are in turn dependent on the joint actions chosen by the VCG mechanism.
This means formula (\ref{eq:max welfare}) in the mechanism-controlled environment protocol needs to be handled carefully; in particular there is an exploration-exploitation trade-off here, where we need to do enough explorations for the agents to arrive at good estimates of their valuation functions before we can reliably compute the $\arg \max_{\overbar{a} \in \Alt}$ over them.
This problem can be solved using Bandit algorithms, and the techniques described in \cite{kandasamyGJS23} that uses upper-confidence bounds \cite{Aue03, lattimore2020bandit} are directly applicable in our setting. 
A key finding in \cite{kandasamyGJS23} is that (asymptotic) truthfulness is harder to achieve with agents that learn their valuation functions; this may also be an issue in our setting.

\subsection{Markov VCG Mechanisms in Unknown Environments}
\S~\ref{subsec:brl agents} describes agents that learn in a mechanism-controlled environment.
In this section, we take the perspective of the mechanism designer and look at reinforcement learning algorithms that can adjust the parameters of the VCG mechanism based on interactions with agents that are themselves capable of learning and adjusting.

We will first summarise the setup and algorithmic framework described in \cite{qiu2022learning} and then describe some possible extensions.
The environment with a controller and $k$ agents is defined by an episodic Markov Decision Process $\Xi = (\St, \A, H, \mathcal{P}, \{ r_i \}_{i=0}^k)$, where $\St$ and $\A$ are the state and action spaces, $H$ is the length of each episode, $\mathcal{P} = \{ \mathcal{P}_t : \St \times \A \to \D(\St) \}_{t=1}^H$ is the state transition function, and $r_i = \{ r_{i,t} : \St \times \A \to [0,1] \}_{t=1}^H$ are the reward functions, with $r_0$ denoting the reward function for the controller and $r_i, 1\leq i\leq k$, denoting the reward function for agent $i$.
Except for $r_0$, the environment $\Xi$ is unknown to the controller.
The controller interacts with the $k$ agents in multiple episodes, with each episode lasting $H$ time steps.
An initial state $x_1$ is set at the start of each episode.
For each time step $t$ in the episode, the controller observes the current state $x_t \in \St$, picks an action $a_t \in \A$, and receives a reward $r_{0,t}(x_t,a_t)$. 
Each agent $i$ receives their own reward $r_{i,t}(x_t,a_t)$ and report a value $\widetilde{r}_{i,t}(x_t,a_t)$ to the controller. 
At the end of the episode, the controller charges each agent $i$ a price $p_i$.
Given the controller's policy function $\pi = \{ \pi_t : \St \to \A \}_{t=1}^H$ and the prices $\{ p_i \}_{i=1}^k$, the controller's utility for the episode is defined by
\[ u_0 = V_1^\pi(x_1 ; r_0) + \sum_{i=1}^k p_i, \]
and each agent $i$'s utility for the episode is defined by 
$u_i = V_1^\pi(x_1 ; r_i) - p_i$, 
where
 \[ V_h^\pi(x ; r) = \sum_{t = h}^H \mathbb{E}_{\pi,\mathcal{P}} [ r_t(x_t, \pi_t(x_t)) \mid x_h = x ]. \]
The controller's goal is to learn the policy $\pi^*$ and pricing $\{ p_i \}_{i=1}^k$ that implements the so-called Markov VCG mechanism
\begin{gather}
    \pi^*(x) = \arg \max_\pi V_1^\pi (x ; R ) \label{eq:markov vcg pi*}\\
    \pi^*_{-i}(x) = \arg \max_{\pi} V_1^\pi (x ; R^{-i} ) \label{eq:markov vcg pi -i}\\
    p_i(x) = V_1^{\pi^*_{-i}}\bigl( x, R^{-i} \bigr) - V_1^{\pi^*}\bigl( x, R^{-i} \bigr), \label{eq:markov vcg price}
\end{gather}
where $R = \sum_{j=0}^k r_j$ and $R^{-i} = R - r_i$.

\begin{lemma}[\cite{lyu2022pessimism}]
The Markov VCG mechanism is incentive compatible and individually rational.    
\end{lemma}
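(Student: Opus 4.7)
The plan is to lift the classical VCG incentive-compatibility and individual-rationality arguments (Theorems~\ref{thm:vcg incentive compatible} and \ref{thm:vcg is individually rational}) to the Markov setting, treating each candidate policy $\pi$ as an ``alternative'' and the state-value functional $V_1^\pi(x;\cdot)$, which is linear in its reward-function argument, as the agent's valuation.

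For incentive compatibility, I fix agent $i$ and assume the other agents report truthfully. If agent $i$ reports $\widetilde{r}_i$ (possibly different from $r_i$), the controller computes
\[ \widetilde{\pi}^* = \arg\max_{\pi} V_1^\pi\bigl(x;\, r_0 + \widetilde{r}_i + \textstyle\sum_{j \ne i,\, j \ge 1} r_j\bigr), \]
while $\pi^*_{-i}$ and hence the term $V_1^{\pi^*_{-i}}(x; R^{-i})$ in the price~(\ref{eq:markov vcg price}) are independent of $\widetilde{r}_i$ (since $R^{-i}$ excludes $r_i$ entirely). Using linearity of $V_1^\pi(x;\cdot)$ in the reward, agent $i$'s realised utility collapses to
\[ u_i = V_1^{\widetilde{\pi}^*}(x; r_i) - \widetilde{p}_i = V_1^{\widetilde{\pi}^*}(x; R) - V_1^{\pi^*_{-i}}(x; R^{-i}). \]
The second term does not depend on agent $i$'s report, and the first is maximised over all policies precisely by $\pi^*$ from~(\ref{eq:markov vcg pi*}), which is recovered when $\widetilde{r}_i = r_i$. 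Hence truthful reporting is a best response.

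For individual rationality, substituting truthful reports into the above display gives $u_i = V_1^{\pi^*}(x; R) - V_1^{\pi^*_{-i}}(x; R^{-i})$, and I chain two inequalities: first $V_1^{\pi^*}(x; R) \ge V_1^{\pi^*_{-i}}(x; R)$ by optimality of $\pi^*$ for $R$, and second $V_1^{\pi^*_{-i}}(x; R) \ge V_1^{\pi^*_{-i}}(x; R^{-i})$ by monotonicity of $V_1^\pi(x;\cdot)$ in the reward combined with $r_i \ge 0$ (rewards take values in $[0,1]$). The main obstacle is notational rather than conceptual: carefully tracking which quantities are held fixed across agent $i$'s possible deviations, so that the algebra folds the Clarke-pivot-style price into the first value term and exposes the standard VCG identity. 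Once that reshuffling is done, the only substantive extra ingredient is non-negativity of rewards, which delivers individual rationality.
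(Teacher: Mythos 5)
Your argument is correct and takes essentially the standard route: the paper itself states this lemma without proof, deferring to \cite{lyu2022pessimism}, and the proof there is exactly this adaptation of the classical VCG algebra — fold the price into the agent's value via linearity of $V_1^\pi(x;\cdot)$ in the reward argument so the utility becomes $V_1^{\widetilde{\pi}^*}(x;R) - V_1^{\pi^*_{-i}}(x;R^{-i})$, then use optimality of $\pi^*$ for incentive compatibility and non-negativity of rewards for individual rationality. The only remark worth making is that fixing the other agents' reports to be truthful is unnecessary: running the same algebra with the others' \emph{reported} reward functions in place of their true ones yields dominant-strategy incentive compatibility, and individual rationality for a truthful agent likewise holds for arbitrary reports of the others.
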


Of course, one can only implement the Markov VCG mechanism directly if the $\mathcal{P}$ and $\{r_i\}_{i=0}^k$ elements of the underlying episodic MDP are known, and that (\ref{eq:markov vcg pi*}) and (\ref{eq:markov vcg pi -i}) can be computed efficiently.
In practice, the controller has to learn $\mathcal{P}$ and $\{ r_i \}_{i=0}^k$ from interactions with the environment and agents, and (\ref{eq:markov vcg pi*}) and (\ref{eq:markov vcg pi -i}) need to be handled using function approximation when the state and action spaces $\St$ and $\A$ are large.

In \cite{qiu2022learning}, a reinforcement learning framework was proposed for learning Markov VCG with Least-Squares Value Iteration (LSVI) \cite{jin2020provably}.
There are two phases: an exploration phase and an exploitation phase.
During the exploration phase, the controller uses reward-free reinforcement learning \cite{jin2020reward} to interact with the environment and the $k$ agents to appropriately explore the underlying MDP.
This exploration phase is crucial because the controller would need enough data to approximate (\ref{eq:markov vcg pi -i}) and (\ref{eq:markov vcg price}) by simulating counterfactual scenarios 
for when each agent $i$ is missing from the environment.
During the exploitation phase, the controller will then repeat the following steps in each episode $t$: 
\begin{enumerate}\itemsep1mm\parskip0mm
    \item Construct a $\widehat{\pi}^t$ that approximates (\ref{eq:markov vcg pi*}) using LSVI on previously collected data $D$;
    \item Learn an approximation $F_t^{-i}(x)$ of $V_1^{\pi^*_{-i}} \bigl( x, R^{-i} \bigr)$, the first term in (\ref{eq:markov vcg price}), using LSVI on collected data $D$; 
    \item Learn an approximation $G_t^{-i}(x)$ of $V_1^{\pi^*} \bigl( x, R^{-i} \bigr)$, the second term in (\ref{eq:markov vcg price}), using LSVI on collected data $D$ and $\widehat{\pi}^t$ from Step 1; 
    \item For time step $h = 1, \ldots ,H$, observe state $x_{t,h}$, take action $a_{t,h} = \widehat{\pi}^t(x_{t,h})$ and observe $\widetilde{r}_{i,t,h}(x_{t,h},a_{t,h})$ from each agent $i$.
    \item Charge each agent $i$ the price $p_{i,t}(x_1) = F_t^{-i}(x_{t,1}) - G_t^{-i}(x_{t,1})$.
    \item Add $\{ (x_{t,h},a_{t,h}, \{ \widetilde{r}_{i,t,h}(x_{t,h},a_{t,h}) \}_{i=1}^k) \}_{h=1}^H$ to $D$.
\end{enumerate}
In the first three steps, the Least-Squares Value Iteration algorithm is used, in conjunction with a suitable model class $\mathcal{F}$, to construct a sequence of functions $(\widehat{Q}_{t,h} )_{h=1}^H$ that approximates the Q-function for the corresponding value function $V_1^{\pi}(\cdot, r)$ at episode $t$ via the following optimisation problem, from $h=H,\ldots,1$:
\begin{gather*}
  \widehat{Q}_{t,h} = \arg \min_{f \in \mathcal{F}} \sum_{\tau=1}^{t-1} \bigl[ r_{\tau,h}(x_{\tau,h}, a_{\tau,h}) + \widetilde{V}_{h+1}^{\pi}(x_{\tau,h+1}) - f(x_{\tau,h}, a_{\tau,h}) \bigr]^2 + \text{reg}(f),
\end{gather*}
where reg($f$) is a suitable regularisation term, and $\widetilde{V}_{h+1}^{\pi}(x) = \widehat{Q}_{t,h+1}(x, \pi(x))$.
In \cite{qiu2022learning}, $\mathcal{F}$ is taken to be linear functions of the form $f(x,a) = \langle w, \chi(x,a) \rangle$ for some feature mapping $\chi(\cdot,\cdot) \in \R^m$ and the authors give efficient algorithms for linear MDPs that can (i) learn to recover the Markov VCG mechanism from data, and (ii) achieve regret, with respect to the optimal Markov VCG mechanism, upper-bounded by $O(T^{2/3})$, where $T$ is the total number of episodes.

For general MDPs or Partially Observable MDPs, we can replace the LSVI with linear function class with approximate AIXI algorithms like \cite{yang2022direct, yang2024dynamic} to learn Markov VCG mechanisms from data. 

\subsection{Mechanism-level RL vs Agent-level RL} 
Note that (\ref{eq:markov vcg pi*})-(\ref{eq:markov vcg price}) can be understood as a special case of (\ref{eq:agent q function})-(\ref{eq:expected total utility}), 
in that the resultant policy (executed by the mechanism) maximises the total sum of each agent's cumulative utility 
when the underlying environment $\phi$ is an episodic MDP, and all the agents have the same horizon $m$.
The key advantage of performing reinforcement learning at the mechanism level is that, in the case when the mechanism is (approximately) incentive compatible, the RL algorithm has access to and can learn from all available data from interactions between the agents and the environment and mechanism, including all the individual rewards and payments. 
The key disadvantage is that it appears necessary to assume that all the agents have the exact same horizon.
In comparison, the situation is reversed when RL is done at the level of individual agents: each agent's RL algorithm usually only has access to the subset of the interaction data in which it has a role in generating, but each agent can use different RL algorithms with different parameters like horizon / discount factor and different function approximation model classes.

\section{Applications}
\label{sec:applications}

\subsection{Paperclips and All That}\label{subsec:paperclip}

The paperclip maximiser \cite{bostrom2020ethical} is a thought experiment in the AI safety folklore that illustrates a potential risk from developing advanced AI systems with misaligned goals or values. 
Here is the idea:
Imagine an AI system is tasked with the goal of maximising the production of paperclips. 
(This could be an explicit high-level goal tasked by a human, or a subgoal inferred as needed by the AI system for fulfiling a higher-level goal.)
As the AI system becomes more and more intelligent, it may pursue this paperclip-production goal with relentless efficiency, converting all available resources and matter into paperclips, potentially leading to disastrous consequences for humanity and the environment.
A much older (and less trivial) variation of the problem was articulated by AI pioneer Marvin Minsky, who suggested that an AI system designed to solve the Riemann hypothesis might decide to take over all of Earth's resources to build supercomputers to help achieve its goal.

While these are obviously extreme examples, these thought experiments help focus our minds on the following key issues in the development of increasingly powerful AI systems:
\begin{itemize}\parskip0mm\itemsep1mm
\item Even seemingly harmless or trivial goals, if pursued by a superintelligent AI system without proper constraints, could lead to catastrophic outcomes as the AI single-mindedly optimises for that goal.
\item In particular, in single-mindedly pursuing a goal, a superintelligent AI may exhibit so-called convergent instrumental behavior, such as acquiring power and resources and conducting self-improvement as subgoals, to help it achieve its top-level goals at all costs, even if those actions conflict with human values or well-being.
\end{itemize}
These thought experiments underscore the importance of instilling the right goals, values, and constraints into AI systems from the outset, as it may be extremely difficult or impossible to retrofit them into a superintelligent system once created.

There is a rich literature \cite{gabriel2020artificial, everittLH18, christian2021alignment} on aligning the design of AI systems with human values, and there is a lot of useful analyses in the single-agent general reinforcement learning (GRL) setting (\S~\ref{subsec:single agent}) on how to make sure, for example, that 
\begin{itemize}\itemsep1mm\parskip0mm
    \item the agent infers the reward function from a human by observing the person's actions through cooperative inverse reinforcement learning \cite{hadfield2016cooperative}, leading to a provable solution for the off-switch problem \cite{hadfield2017off} in some cases; and 
    \item preventing an AI agent from performing adverse `self-improvement' by tampering with its own reward function through robust learning techniques that incorporates causality \cite{everitt2021reward} and/or domain knowledge \cite{everitt2017reinforcement}. 
\end{itemize}

We argue here that, in the multi-agent GRL setting, additional controls in the form of imposition of social cost on agent actions can help prevent paperclip maximiser-style AI catastrophes.
In particular, in the multi-agent GRL setting where there are multiple superintelligent AI systems acting in the same environment, an AI system cannot unilaterally perform actions that destroy humanity and the environment, or engage in instrumental convergent behaviour like acquiring all available power and resources, without encountering significant frictions and obstacles because most of such actions, and their prevention by other agents (human or AI), are mutually exclusionary and therefore can be subjected to control through economic mechanisms like that described in \S~\ref{sec:action cost}, imposed either explicitly through rules and regulations or implicitly through laws of nature (like physics and biology \cite{chatterjee2012evolutionary, reiter2015biological}).
This form of social control works in concert and in a complementary way with the controls at the single-agent GRL level. Some of the controls are likely necessary but none in isolation are sufficient; together they may be.

In the paperclip maximiser example, there are two forces that will stop paperclip production from spiraling out of control. 
Firstly, the environment will provide diminishing (external) rewards for the agent to produce more and more paperclips, assuming there are controls in place to prevent wireheading issues \cite{muehlhauser2014exploratory, yampolskiy14} where the agent actively manipulates the environment to give it false rewards or changes its own perception of input from the environment.
Secondly, at the same time that the utility of the paperclip maximiser agent is decreasing, the utility of other agents in the same environment in taking competing actions to prevent paperclip production will increase significantly as unsustainable paperclip production threatens the environment and the other agents' welfare.
Thus, at some stage, the utility of the paperclip maximiser agent in producing more paperclips will become lower than the collective utility of other agents' proposed actions to stop further paperclip production, and a VCG-style market mechanism will choose the latter over the former and paperclip production stops.
The argument above does rely on an assumption that the agents are operating on a more-or-less level playing field, where they need to take others' welfare into consideration when acting.
In a completely lopsided environment where there is only one superintelligent agent and its welfare dominates that of all others, which could come about by design, accident, or over time through phenomenon like the Matthew effect (aka rich-get-richer or winner-take-all), social controls will not be able to stop unsustainable paperclip production, and it will only stop when the one superintelligent agent wants it to stop. 
Both conditions that uphold the Matthew effect and the circumstances that cause it to fail are studied in the literature \cite{rigney2010, BerbegliaH17, barabasi2018formula} and those additional control mechanisms, like partial lotteries \cite{frey2023rationality}, will likely be needed in addition to what we propose in \S~\ref{sec:action cost}.

The argument presented in this section has similar motivations to those presented in \cite{conitzer2024social}. We expect to see a lot more progress in this research topic in the coming months and years.

\subsection{Cap-and-Trade to Control Pollution}\label{subsec:cap and trade}

Consider a set of oil refineries $\lbrace R_{1}, R_{2}, \ldots, R_{k} \rbrace$, where $k > 1$. Each refinery $R_{i}$:
\begin{itemize}\itemsep1mm\parskip0mm
    \item produces unleaded fuel that can be distributed and sold in the retail market for \$2 per litre. (We assume the output of the refineries is not big enough to change the market demand, and therefore the price of fuel.);
    \item requires \$1.80 in input cost (crude oil, chemicals, electricity, labour, distribution) to produce and distribute 1 litre of fuel (for details, see \cite{favennec2022economics}); 
    \item can produce up to 100 million litres of fuel per day; and
    \item produces greenhouse gases, costed at \$190 per cubic ton. 
\end{itemize}
The refineries are not, however, equally efficient. Refinery $R_{i}$ emits $$s_{i}(y) = m_{i}\left(\frac{y^3}{5} - 12 y^2 + 200y + 888\right)$$ cubic tons of greenhouse gases per day, which is a function of $y$ the amount of fuel (in millions of litres) it produces per day and $m_{i}$, an inefficiency factor. 
Throughout this example we set $m_{i} = i$. Thus refinery $R_{1}$ is twice as efficient as $R_{2}$, three times as efficient as $R_{3}$, and so on.

\begin{figure}
    \centering
    \includegraphics[width=0.85\textwidth]{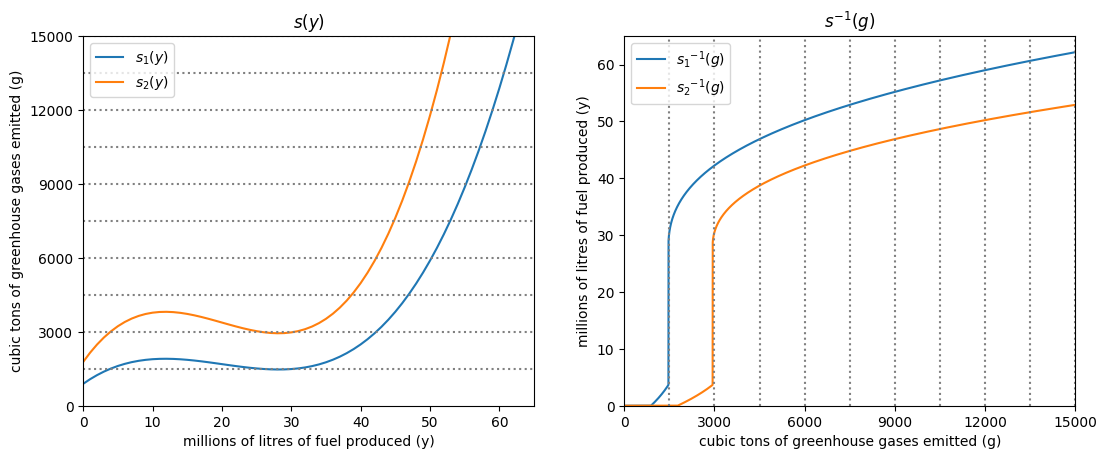}
    \caption{Plots of $s$ and $s^{-1}$ for refineries $R_{1}$ and $R_{2}$.}
    \label{fig:sA_s_inv_AB}
\end{figure}

Graphs of $s_{1}$ and $s_{2}$ are shown in Fig~\ref{fig:sA_s_inv_AB}. The greenhouse gas emissions increase at the start, then drop as the refinery achieves its optimum operating efficiency, after which they increase again rapidly. We also plot
\begin{equation*}
{s_{i}}^{-1}\left(g\right) := \max{\lbrace y \in \mathbb{Y} : \Im(y) = 0 \rbrace },
\end{equation*}
where $\Im(y)$ denotes the imaginary part of $y$, and the set $\mathbb{Y}$ comprises $0$ and the three roots of the cubic equation
$$
\frac{m_{i}}{5}y^3 - 12m_{i}y^2 + 200m_{i}y + (888m_{i} - s_{1}(y)) = 0
$$ 
when it is evaluated at $g := s_{1}(y)$.
This ensures ${s_{i}}^{-1}$ is defined $\forall g \geq 0$ and is non-decreasing. Note in Fig~\ref{fig:sA_s_inv_AB} that $s^{-1}_{1}$ and $s^{-1}_{2}$ have step increases at $s_{1}(28) \approx 1,470$ and $s_{2}(28) \approx 2,940$. Our construction of $s^{-1}$ assumes the refineries will maximise their production (and hence profit) for a given cap on greenhouse gas emissions. For example, if Refinery $R_{2}$ is permitted to emit $3,000$ cubic tons of greenhouse gases, it will choose to produce $30.5$ million litres of fuel rather than $3.9$ or $25.6$ million litres.

\paragraph{No Price on Pollution}
Consider the case of $k = 2$. If 
the two refineries do not have to pay for environmental damage from greenhouse gas emissions, each plant will produce at the maximum capacity of 100 million litres per day since they each make \$0.20 operating profit per litre. The total greenhouse gas emission between them will be $s_{1}\left(100\right) + s_{2}\left(100\right) = 302,664$ cubic tons per day.

\begin{figure}
    \centering
    \includegraphics[width=0.85\textwidth]{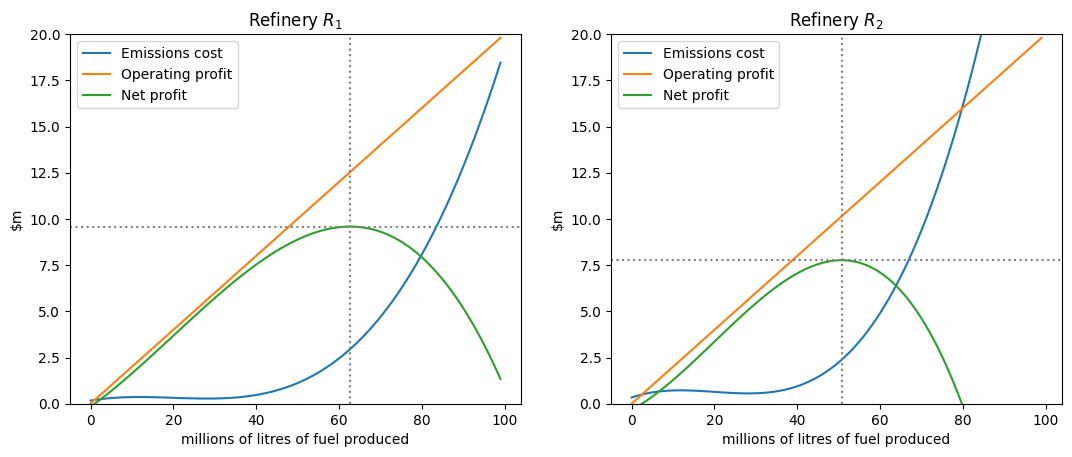}
    \caption{Cubic tons of greenhouse gas emitted for different production of fuel}
    \label{fig:max_net_p}
\end{figure}

\paragraph{Fixed Price for Pollution}
If 
the two refineries have to pay the \$190 per cubic ton cost but there is no cap to greenhouse gas emission, they will seek to maximise their net profit functions $n$, given by, respectively, $n_{1}\left(y\right) = \$200k \cdot y - \$190 \cdot s_{1}\left(y\right)$ and $n_{2}\left(y\right) = \$200k \cdot y - \$190 \cdot s_{2}\left(y\right)$. Taking derivatives with respect to $y$ we have:
\begin{align*}
n^{\prime}_{1}\left(y\right) = -114y^{2} + 4560y + 162000 \\
n^{\prime}_{2}\left(y\right) = -228y^{2} + 9120y + 124000
\end{align*}
which have positive roots at $y^{*}_{1} = 20 + 10\sqrt{\frac{346}{19}}$ and $y^{*}_{2} = 20 + 10\sqrt{\frac{538}{57}}.$ That is, Refinery $R_{1}$ will stop production at $y^{*}_{1} \approx 62.7$ million litres per day, for a daily net profit of $n_{1}\left(y^{*}_{1}\right) \approx \$9.58 \text{ million}$. Similarly, Refinery $R_{2}$ will stop production at $50.72$ million litres per day, for a daily net profit of $\$7.77$ million. See Fig~\ref{fig:max_net_p}. At the \$190 price, the total greenhouse gas emission between the two refineries is thus $s_{1}\left(y^{*}_{1}\right) + s_{2}\left(y^{*}_{2}\right) \approx 28,682$ cubic tons per day.


\paragraph{Market Pricing for Capped Pollution}
Now, 
instead of setting the greenhouse gas emission at \$190 per cubic ton, which requires a lot of economic modelling and assumptions, what if we just want to cap the total amount of emission by the two refineries to, say, 15 thousand cubic tons per day and let the market decide the price of greenhouse gas emission? 
This can be done via sequential Second Price auctions (pg.~\pageref{ex: second price auction}) of 15,000 pollution permits every day, auctioned in, say, tranches of 3,000, each permit entitling the owner to emit 1 cubic ton of greenhouse gas. Denoting the auction winner by $i^{*}$, the change in each refinery's net profit after tranche $t$ is:
$$
\Delta n_{t, i}\left(\Delta y_{t, i}\right) =
\begin{cases}
    {\rho}_{t, i^{*}} - a_{t, -i^{*}} & i = i^{*} \\
    0 & \text{otherwise}
\end{cases}
$$
where
${\rho}_{t,i} = \$200k \cdot \Delta y_{t, i} 
             = \$200k \cdot \left({s_{i}}^{-1}\left(g_{t,i} + 3000\right) - {s_{i}}^{-1}\left(g_{t,i}\right)\right)$
and $g_{t,i}$ is $R_{i}$'s permit holdings before the tranche and
$a_{t,i}$ is $R_{i}$'s bid for tranche $t$.

\paragraph{Greedy Algorithm} Let's work through the example shown in Table~\ref{tab:permit auctions}, where each refinery pursues a greedy bid strategy. That is, for tranche $t$ refinery $R_{i}$ always bids $a_{t, i} = \rho_{t, i}$. In the first tranche of 3000 permits, Refinery $R_{1}$ works out that it can produce $\Delta y_{1, 1} = 42$ million litres of fuel with 3000 permits, since ${s_{1}}^{-1}(3000) \approx 42$, and it is willing to pay a max of $\rho_{1,1} = \$200k \times 42 = \$8.4$m to win those 3000 permits.
Refinery $R_{2}$ similarly works out that it can produce $\Delta y_{1, 2} = {s_{2}}^{-1}(3000) \approx 31$~million litres of fuel and bids $\rho_{1,2} = \$6.1$m.
The VCG mechanism picks $R_{1}$ as the winner, and $R_{1}$ pays \$6.1m to produce 42 million litres of fuel, for a profit of \$2.3m.
In the second tranche of 3000 permits, Refinery $R_{2}$ submits the same bid, but Refinery $R_{1}$ submits a much lower bid of $\rho_{2,1} = \$1.6$m, since it can only produce an extra $\Delta y_{2, 1} = {s_{1}}^{-1}(6000) - {s_{1}}^{-1}(3000) \approx 50 - 42 = 8$~million litres of fuel with the additional 3000 permits.
Thus, $R_{2}$ wins the auction and pays only \$1.6m for the second tranche of 3000 permits, which it uses to produce 31~million litres of fuel for a profit of \$4.5 million.
The following tranches proceed in a similar way. In the end, we have 
\begin{itemize}\itemsep1mm\parskip0mm
    \item Refinery $R_{1}$ winning 9000 permits for a total cost of \$8.0m, and using them to produce 55 million litres of fuel for a total net profit of \$3.1 million; 
    \item Refinery $R_{2}$ winning 6000 permits for a total cost of \$3.2m, and using them to produce 42 million litres of fuel for a total net profit of \$5.3 million;   
    \item A total of \$11.2 million was collected for the 15,000 permits. 
\end{itemize} 
The result is interesting in that the more efficient Refinery $R_{1}$ ends up winning more permits as expected, which it uses to produce more fuel but for a lower total profit compared to Refinery $R_{2}$.
\begin{table}
\begin{center}
\begin{small}
\begin{tabular}{|c|c|c|c|c|}
\hline
Tranche & $R_{1}$ Bid & $R_{2}$ Bid & Payment & Result \\
\hline
3000 & 42m / \$8.4m & 31m / \$6.1m & \$6.1m & $R_{1}$ / \$2.3m  \\
3000 & 8m / \$1.6m & 31m / \$6.1m & \$1.6m & $R_{2}$ / \$4.5m \\
3000 & 8m / \$1.6m & 12m / \$2.4m & \$1.6m & $R_{2}$ / \$0.8m \\
3000 & 8m / \$1.6m & 4m / \$0.9m & \$0.9m & $R_{1}$ / \$0.7m \\
3000 & 5m / \$1.0m & 4m / \$0.9m & \$0.9m & $R_{1}$ / \$0.1m \\
\hline
\end{tabular}
\end{small}
\end{center}
\caption{The auction results assuming each refinery pursues a greedy strategy.}\label{tab:permit auctions}
\end{table}

\paragraph{Reinforcement Learning}
Each refinery can do better by using reinforcement learning to optimise their sequence of bids, using the approaches described in \S~\ref{sec:learning and planning}, and possibly engaging in collusion / cooperation. Figs~\ref{fig:r1_ex1_rfunc1}, \ref{fig:r1_ex1_rfunc2} and \ref{fig:r1_ex1_rfunc3} show the results of running two Q-Learning agents, representing the two refineries, on the cap-and-trade problem. Each agent's state is the 2-tuple (number of permits won by $R_{1}$, number of permits won by $R_{2}$). The action space is a set of 170 bids, $\mathbb{A}~=~\{ 0, \; 50\text{k}, \; 100\text{k}, \; \ldots \; 8.4\text{m} \}$. Agents are allowed to bid any amount in $\mathbb{A}$, even if that amount is higher than ${\rho}_{t,i}$. Any ties arising in the VCG mechanism due to agents bidding the same amount are broken randomly. Each experiment uses the same random seed and RL parameters. The only difference in their setup is the reward function.

In Fig~\ref{fig:r1_ex1_rfunc1} we incentivise the two agents to maximise their individual net profits, by rewarding only the agent $i^{*}$ that won the auction:
$${r_{t,i}}^{(1)} = \Delta n_{t, i} =
\begin{cases}
    {\rho}_{t,i^{*}} - a_{t,-i^{*}} & i = i^{*} \\
    0 & \text{otherwise}
\end{cases}$$
The agents learn a joint policy that results in them each making a total net profit of around \$9m over the course of the five tranches, much higher than the net profits obtained when following the greedy strategy earlier, of $\$3.1$m and $\$5.3$m for $R_{1}$ and $R_{2}$ respectively. The average permit price stabilises at $\$100$.

In Fig~\ref{fig:r1_ex1_rfunc2} we incentivise the agents to maximise their shared net profit:
$${r_{t,i}}^{(2)} = {\rho}_{t,i^{*}} - a_{t,-i^{*}}$$
Here the agents learn a joint policy that always results in 
a total shared net profit of \$19.49m over the five tranches and, importantly, drives the average permit price down to zero. 
Note the agents have learned this collusive policy solely by observing the permit holdings of each participant in the auction -- there was never any explicit communication between them.
This phenomenon is analysed more carefully in Fig~\ref{fig:r1_ex1_pol_rfunc2} in Appendix~\ref{ap:cap and trade}. 
    
    

\begin{figure}
    \centering
    \includegraphics[width=1\textwidth]{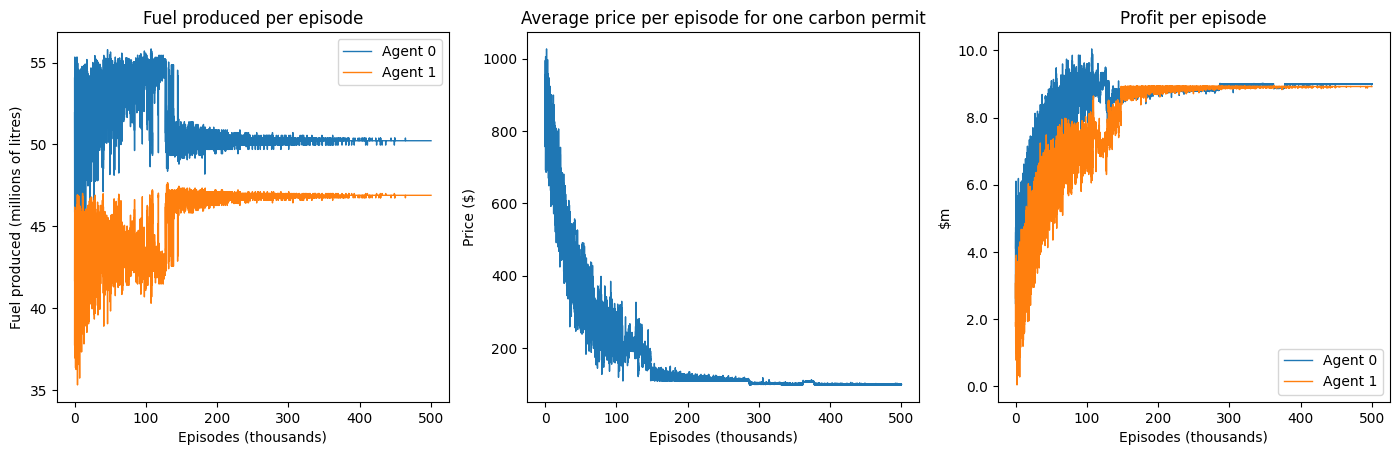}
    \caption{RL with reward function $r^{(1)}$ (incentivise individual profit)}
    \label{fig:r1_ex1_rfunc1}
\end{figure}

\begin{figure}
    \centering
    \includegraphics[width=1\textwidth]{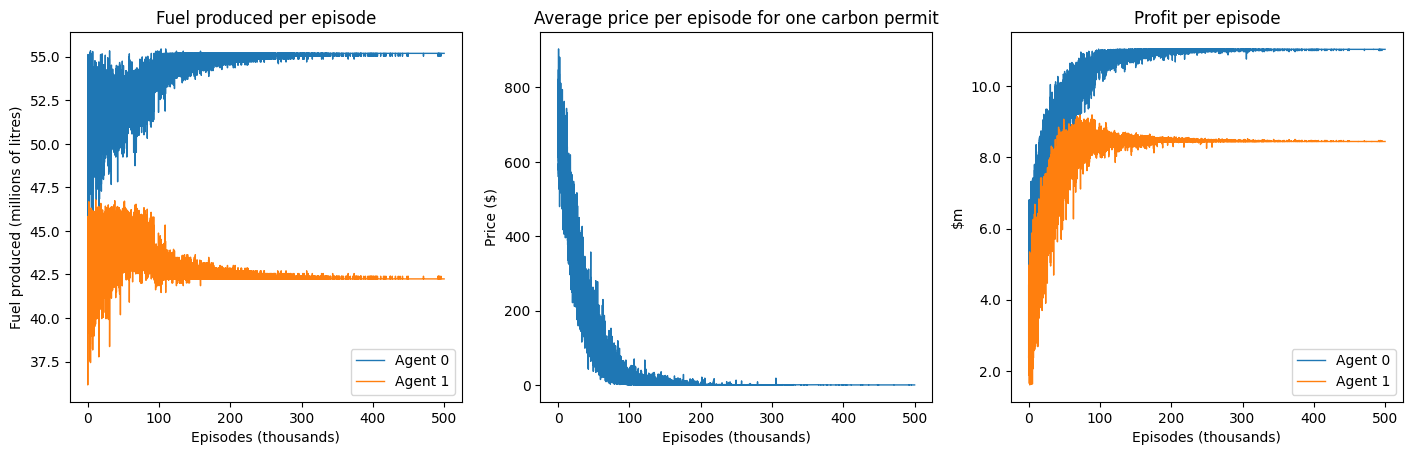}
    \caption{RL with reward function $r^{(2)}$ (incentivise joint profit)}
    \label{fig:r1_ex1_rfunc2}
\end{figure}

\begin{figure}
    \centering
    \includegraphics[width=1\textwidth]{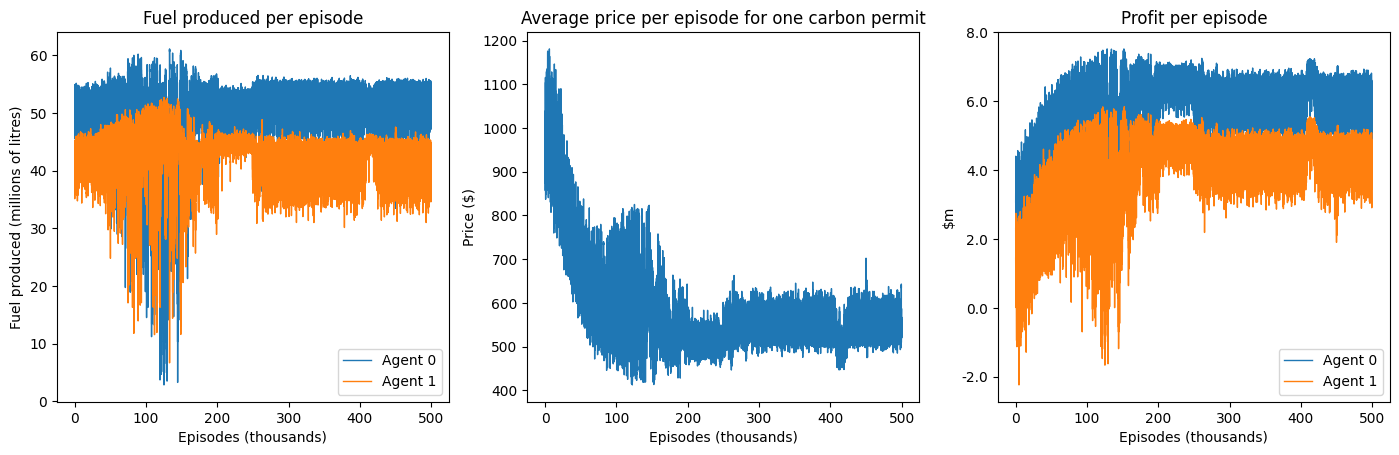}
    \caption{RL with reward function $r^{(3)}$ (zero-sum reward)}
    \label{fig:r1_ex1_rfunc3}
\end{figure}

In Fig~\ref{fig:r1_ex1_rfunc3} we incentivise the agents to maximise their individual net profit \textit{and} minimise the other agent's net profit:
$$
{r_{t,i}}^{(3)} = 
\begin{cases}
    {\rho}_{t,i^{*}} - a_{t,-i^{*}} & i = i^{*} \\
    -\left({\rho}_{t,i^{*}} - a_{t,-i^{*}}\right) & \text{otherwise}
\end{cases}
$$
As expected, in this case, competition between the agents keeps the average permit price well above zero; it eventually settles into an oscillatory pattern around the \$550 mark. 
This phenomenon is analysed in more detail in Fig~\ref{fig:r1_ex1_pol_rfunc3} in Appendix~\ref{ap:cap and trade}. 

\subsection{Other Applications}
The multi-agent general reinforcement learning with social cost setting proposed in this paper has a wide variety of applications, including the coordination of multiple automated penetration testing agents \cite{schwartz2019autonomous, sarraute21} in cyber security, the regulation of dynamic pricing algorithms \cite{calvano2020artificial, brero2022learning} in e-commerce platforms, and moderation of spread of misinformation \cite{acemoglu2023model} in social media.
These more detailed applications will be written up elsewhere.

\section{Discussion and Conclusion}
\label{sec:discussion}

In the spirit of \cite{parkes2015economic}, we considered in this paper the problem of social harms that can result from the interactions between a set of (powerful) general reinforcement learning agents in arbitrary unknown environments and proposed the use of (dynamic) VCG mechanisms to coordinate and control their collective behaviour.
Our proposed setup is more general than existing formulations of multi-agent reinforcement learning with mechanism design in two ways: 
\begin{itemize}\itemsep1mm\parskip0mm
    \item the underlying environment is a history-based general reinforcement learning environment like in AIXI \cite{hutter2024introduction};
    \item the reinforcement-learning agents participating in the environment can have different time horizons and adopt different strategies and algorithms, and learning can happen both at the mechanism level as well as the level of individual agents. 
\end{itemize} 
The generality of the setup opens up a wide range of algorithms and applications, including multi-agent problems with both cooperative and competitive dynamics, some of which we explore in \S~\ref{sec:learning and planning} and \S~\ref{sec:applications}.
The setup closest to ours in generality in the literature is \cite{ivanov2024principal}, with interesting applications like \cite{zheng2022ai}.

A key limitation of our proposed approach, especially when it comes to regulating powerful AGI agents, is that there is no fundamental way to enforce the VCG mechanism on such agents outside of purposedly designed platform economies \cite{kenney2016rise, cohen2017law, evans2016matchmakers}.
In particular, the proposed approach would not work on AGI agents operating "in the wild".
Nevertheless, the study of the ideal market mechanism to regulate AGI agents is a useful step for understanding and benchmarking the design of more practical mechanisms like \cite{kearns2014mechanism} that recommend but do not enforce social choices, and more indirect payment approaches like \cite{zhang2023steering} and \cite{kolumbus2024paying} to steer a set of agents to desired good behaviour.
It would also be interesting, as future work, to understand how natural biological mechanisms like \cite{chatterjee2012evolutionary, reiter2015biological} relate to ideal market mechanisms.

Another future work is a more systematic study of the dynamics of agents that have different discount factors. 
One such result in the "impossibility theorem" in \cite{milgrom2024kenneth}, which states that if a multi-agent reinforcement learning system is efficient, i.e. the selected policy is never one that is Pareto dominated, then the agent with the highest discount factor (so the most patient or long-term focussed one) will end up being the "dictator", in that the optimal policy for the multi-agent system is one that is most preferred by that dictator, and such a policy therefore does not maximise social choice. 

Finally, there is a body of work to systematically study how Matthew Effect and path dependency can lead to one agent becoming dominant even in a multi-agent system with social-cost control, and what additional mechanisms in addition to VCG can be used to address that issue.

\paragraph{Acknowledgments}
We are grateful to Marcus Hutter and Jason Li for helpful comments on the paper.

\addcontentsline{toc}{section}{References}
\bibliographystyle{plain}
\bibliography{refs}

\newpage


\appendix

\section{Variations of the Social Cost Formulation}
\subsection{Notes on the Agent Valuation Function}\label{app:agent valuation function}

\subsubsection*{Expected Cumulative Utility}
We will start by unrolling (\ref{eq:expected total utility}) for $m = 3$ to see the general pattern.
Given the empty history $h_0 = \epsilon$, the expected utility for agent $i$ at time step 1 is
\begin{align*}
& \overline{v}_{1,i}(\epsilon) = 
 q_{1,i}(\epsilon, \overbar{a^*_1}(\epsilon)) - c_{1,i}(\epsilon, \overbar{a^*_1}(\epsilon))  \\ 
 &= \sum_{ \overbar{or_1}} \phi(\overbar{or_1} \,|\, \overbar{a^*_1}(\epsilon)) [r_{1,i} + \overline{q}_{2,i}( \overbar{a^*_1}(\epsilon) \overbar{or_1})] - c_{1,i}(\epsilon, \overbar{a^*_1}(\epsilon)) \\ 
 &= \sum_{ \overbar{or_1}} \phi(\overbar{or_1} \,|\, \overbar{a^*_1}(\epsilon)) [r_{1,i} + 
 q_{2,i}( \overbar{a^*_1}(\epsilon) \overbar{or_1}, \overbar{a^*_2}(\overbar{or_1}))  ] 
 - c_{1,i}(\epsilon, \overbar{a^*_1}(\epsilon)) \\ 
 &= 
 \sum_{ \overbar{or_1}} \phi(\overbar{or_1} \,|\, \overbar{a^*_1}(\epsilon)) 
\biggl[ r_{1,i} + 
   \sum_{ \overbar{or_2}} \phi(\overbar{or_2} \,|\, \overbar{a^*_1}(\epsilon) \overbar{or_1} \overbar{a^*_2}(\overbar{or_1})) [r_{2,i} + \overline{q}_{3,i}(\overbar{a^*_1}(\epsilon) \overbar{or_1} \overbar{a^*_2}(\overbar{or_1}) \overbar{or_2})]
 \biggr]   \\
 & \hspace{6em} - c_{1,i}(\epsilon, \overbar{a^*_1}(\epsilon)) \\
 &= \sum_{ \overbar{or_1}} \phi(\overbar{or_1} \,|\, \overbar{a^*_1}(\epsilon)) 
   \sum_{ \overbar{or_2}} \phi(\overbar{or_2} \,|\, \overbar{a^*_1}(\epsilon) \overbar{or_1} \overbar{a^*_2}(\overbar{or_1})) 
 [ r_{1,i} + r_{2,i} + \overline{q}_{3,i}(\overbar{a^*_1}(\epsilon) \overbar{or_1} \overbar{a^*_2}(\overbar{or_1}) \overbar{or_2})] \\
 & \hspace{6em} - c_{1,i}(\epsilon, \overbar{a^*_1}(\epsilon))  \\
 &= \sum_{ \overbar{or_{1:2}}} \phi(\overbar{or_{1:2}} \,|\, \overbar{a^*_1}(\epsilon), 
    \overbar{a^*_2}(\overbar{or_1})) 
\biggl[ r_{1,i} + r_{2,i} + 
   \sum_{ \overbar{or_3}} \phi(\overbar{or_3} \,|\, \overbar{a^*_1}(\epsilon) \overbar{or_1} \overbar{a^*_2}(\overbar{or_1}) \overbar{or_2} \overbar{a^*_3}(\overbar{or_{1:2}}) )\cdot r_{3,i} 
   \biggr]  \\
 & \hspace{6em} -  c_{1,i}(\epsilon, \overbar{a^*_1}(\epsilon))  \\
 &= \sum_{ \overbar{or_{1:3}}} \phi(\overbar{or_{1:3}} \,|\, \overbar{a^*_1}(\epsilon), 
    \overbar{a^*_2}(\overbar{or_1}), \overbar{a^*_3}(\overbar{or_{1:2}})) [ r_{1,i} + r_{2,i} + r_{3,i} ]\\
 & \hspace{6em} - \sum_{ \overbar{or_{1:2}}} \phi(\overbar{or_{1:2}} \,|\, \overbar{a^*_1}(\epsilon), 
    \overbar{a^*_2}(\overbar{or_1})) [ p_i(\epsilon) + p_i(\overbar{or_1}) + p_i(\overbar{or_{1:2}})  ], 
\end{align*}
where we denote $\overbar{v_{t}}(\overbar{h_{t-1}}) = (v_{t,1}(\overbar{h_{t-1}},\cdot), \ldots, v_{t,k}(\overbar{h_{t-1}},\cdot))$ and
\begin{align*}
&\overbar{a^*_1}(\epsilon) = f(\overbar{v}_1(\epsilon)) && p_i(\epsilon) = p_i(\overbar{v_1}(\epsilon)) \\
&\overbar{a^*_2}(\overbar{or_1}) = f(\overbar{v_{2}}(\overbar{a^*_1}(\epsilon) \overbar{or_1})) && p_i(\overbar{or_1}) = p_i(\overbar{v_{2}}(\overbar{a^*_1}(\epsilon) \overbar{or_1})) \\
&\overbar{a^*_3}(\overbar{or_{1:2}}) = f(\overbar{v_{3}}(\overbar{a^*_1}(\epsilon) \overbar{or_1} \overbar{a_2}^*(\overbar{or_1}) \overbar{or_2})) && p_i(\overbar{or_{1:2}}) = p_i(\overbar{v_{3}}(\overbar{a^*_1}(\epsilon) \overbar{or_1} \overbar{a^*_2}(\overbar{or_1}) \overbar{or_2})).
\end{align*}
For general $m$, one can use an induction argument to show 
\begin{align*}
 \overline{v}_{1,i}(\epsilon) &= \sum_{ \overbar{or_{1:m}}} \phi(\overbar{or_{1:m}} \,|\, \overbar{a^*_1}(\epsilon), 
    \overbar{a^*_2}(\overbar{or_1}), \ldots, \overbar{a^*_m}(\overbar{or_{1:(m-1)}})) \biggl[ \sum_{t=1}^m r_{t,i} \biggr]\\
 & \hspace{2em} - \sum_{ \overbar{or_{1:(m-1)}}} \phi(\overbar{or_{1:(m-1)}} \,|\, \overbar{a^*_1}(\epsilon),\ldots, 
    \overbar{a^*_{m-1}}(\overbar{or_{m-1}})) \biggl[ \sum_{t=0}^{m-1} p_i(\overbar{or_{1:t}}) \biggr]. 
\end{align*}
To avoid clutter, we can write the above as
\[ \overline{v}_{1,i}(\epsilon) = \mathbb{E}_{ \overbar{or_{1:m}}}  \biggl[ \sum_{t=1}^m r_{t,i} \biggr] -  \mathbb{E}_{ \overbar{or_{1:(m-1)}}} \biggl[ \sum_{t=0}^{m-1} p_i(\overbar{or_{1:t}}) \biggr]. \]

\subsubsection*{Self-Rational q-Functions as Valuation Functions}

Given full knowledge of the environment $\phi$ and the mechanism $(f,p_1,\ldots,p_k)$, the \emph{self-rational q-function} $\hat{q}_{t,i}$ of agent $i$ with horizon $m_i$ at time $t$ having seen history $\overbar{h_{t-1}}$ is defined similar to (\ref{eq:Bellman single agent}) as follows:
\begin{align*}
 & \hat{q}_{t,i}(\overbar{h_{t-1},\overbar{a_t}}) =
   \begin{cases}
       0 & t > m_i \\
       \sum\limits_{ \overbar{or_t}} \phi(\overbar{or_t} \,|\, \overbar{h_{t-1}a_t}) [r_{t,i} + \max\limits_{\overbar{a_{t+1}}} \hat{q}_{t+1,i}(\overbar{h_{t-1}aor_t},\overbar{a_{t+1}})]  & t \leq m_i 
   \end{cases}   \label{eq:agent self-rational q function} 
%
\end{align*}

Example~\ref{ex:protocol full example alt self-rational} below illustrates several problems with the use of self-rational q-functions as agent valuation functions. 

\begin{example}\label{ex:protocol full example alt self-rational}
Consider again the setup as described in Example~\ref{ex:protocol full example alt}.
Suppose each agent has horizon 2. Here are the self-rational q-functions of each agent:
    \begin{align*}
     & \hat{q}_{2,1}(a_1or_1,a_2) := \ite{a_1 = 1}{0}{ \ite{a_2 = 1}{100}{0}  } \\
     & \hat{q}_{2,2}(a_1or_1,a_2) := \ite{a_1 = 2}{0}{ \ite{a_2 = 2}{80}{0}  } \\
     & \hat{q}_{2,3}(a_1or_1,a_2) := \ite{a_2 = 3}{60}{0} \\
     & \hat{q}_{1,1}(a_1) := 100 \\
     & \hat{q}_{1,2}(a_1) := 80 \\
     & \hat{q}_{1,3}(a_1) := 60
    \end{align*}
    If the agents submit their self-rational q-functions truthfully, then at $t=1$, all three actions maximise social utility. 
    Suppose we break ties randomly, then Tables~\ref{tab:random choice 1 alt self-rational}-\ref{tab:random choice 3 alt self-rational} show the possible scenarios.
    Inside each cell, the numbers are the realised $\hat{q}_{t,i}$, $r_{t,i}$, and $p_{t,i}$ respectively.
    The last column shows the cumulative utility $\sum_{t=1}^2 r_{t,i} - p_{t,i}$ for each agent.

        \begin{table}[!htbp]
        \centering
        \begin{tabular}{c|c|c|c}
           \hline
                  & $a_1^* = 1$ & $a_2^* = 2$ & CU \\
           \hline       
           $A_1$  &  (100, 100, 0)  & (0,0,0) & 100 \\
           $A_2$  &  (80, 0, 0) &   (80,80,60) & 20 \\
           $A_3$  &  (60 0, 0) &  (0,0,0) & 0 \\
           \hline
        \end{tabular}
        \caption{Scenario when $a_1^*$ is randomly chosen to be 1}
        \label{tab:random choice 1 alt self-rational}
    \end{table}
    
    \begin{table}[!htbp]
        \centering
        \begin{tabular}{c|c|c|c}
           \hline
                  & $a_1^* = 2$ & $a_2^* = 1$ & CU \\
           \hline       
           $A_1$  &  (100, 0, 0)  & (100,100,60) & 40 \\
           $A_2$  &  (80, 80, 0) &   (0,0,0) & 80 \\
           $A_3$  &  (60, 0, 0) &  (0,0,0) & 0 \\
           \hline
        \end{tabular}
        \caption{Scenario when $a_1^*$ is randomly chosen to be 2}
        \label{tab:random choice 2 alt self-rational}
    \end{table}

    \begin{table}[!htbp]
        \centering
        \begin{tabular}{c|c|c|c}
           \hline
                  & $a_1^* = 3$ & $a_2^* = 1$ & TCU \\
           \hline       
           $A_1$  &  (100, 0, 0)  & (100,100,80) & 20 \\
           $A_2$  &  (80, 0, 0) &   (0,0,0) & 0 \\
           $A_3$  &  (60, 0, 0) &  (0,0,0) & 0 \\
           \hline
        \end{tabular}
        \caption{Scenario when $a_1^*$ is randomly chosen to be 3}
        \label{tab:random choice 3 alt self-rational}
    \end{table}
    \noindent Note the obviously suboptimal scenario shown in Table~\ref{tab:random choice 3 alt self-rational}.
    This example shows that the use of self-rational q-functions as agent valuation functions can lead to suboptimal cumulative social utility for all the agents, even when they report truthfully.  
    Further, the agents can manipulate the outcome in their favour by submitting an artificially higher valuation function at $t=1$ without incurring a higher payment. For example, $A_2$ can submit the following valuation
    \begin{equation}
        \widetilde{q}_{1,2}(a_1) := \ite{a_1 = 1}{ 80 }{ \ite{a_1 = 2}{81}{0}} 
    \end{equation}
    to get the outcome given in Table~\ref{tab:random choice 2 alt self-rational}, if $A_1$ and $A_3$ remain truthful.
    Of course, all the agents can choose not to be truthful, in which case they would need a reasonably good probabilistic model of how the other agent will lie to have a good chance of winning.

\end{example}

\subsubsection*{An Alternative Formulation of Rational q-Functions}
In this subsection, we investigate whether the following alternative definition of rational q-functions can be used as agent valuation functions.

\begin{defn}
Given full knowledge of the environment $\phi$ and the mechanism $(f,p_1,\ldots,p_k)$, for each agent $i$ with horizon $m_i$ at time $t$ having seen history $\overbar{h_{t-1}}$, the \emph{rational q-function} $q_{t,i}$ and \emph{rational social cost function} $c_{t,i}$ are defined inductively as follows:
\begin{align}
 q_{t,i}(\overbar{h_{t-1},\overbar{a_t}}) &=
   \begin{cases}
       0 & t > m_i \\
       \sum\limits_{ \overbar{or_t}} \phi(\overbar{or_t} \,|\, \overbar{h_{t-1}a_t}) [r_{t,i} + \overline{q}_{t+1,i}(\overbar{h_{t-1}}\overbar{a_t}\overbar{or_t})]  & t \leq m_i 
   \end{cases}   \label{eq:agent q function q-based} \\
 \overline{q}_{t,i}(\overbar{h_{t-1}}) &= q_{t,i}(\overbar{h_{t-1}}, f( \overbar{q_{t}}))  \label{eq:agent bar q function q-based} \\ 
 c_{t,i}(\overbar{h_{t-1},\overbar{a_t}}) &= 
   \begin{cases}
       0 & t \geq m_i \\
       \sum\limits_{ \overbar{or_t}} \phi(\overbar{or_t} \,|\, \overbar{h_{t-1}}\overbar{a_t}) [ \overline{c}_{t+1,i}(\overbar{h_{t-1}}\overbar{a_t}\overbar{or_t}) ]  & t < m_i 
   \end{cases}   \label{eq:agent cost function q-based} \\
  \overline{c}_{t,i}(\overbar{h_{t-1}}) &= p_i(\overbar{q_t}) + c_{t,i}(\overbar{h_{t-1}}, f( \overbar{q_{t}}))  \label{eq:agent bar cost function q-based} 
\end{align}
where $\overbar{q_t} := (q_{t,1}(\overbar{h_{t-1}},\cdot), \ldots, q_{t,k}(\overbar{h_{t-1}},\cdot))$ and $f(\overbar{q_t}) = \arg \max\limits_{\overbar{a}} \sum\limits_{j} q_{t,j}(\overbar{h_{t-1}},\overbar{a})$.
\end{defn}
Note that unlike Definition~\ref{defn:agent valuation function}, the argument to $f()$ and $p_i()$ above are $\overbar{q_t}$ rather than $\overbar{v_t}$.
In the setting where the mechanism is one of the agents with valuation function equal to total payments received, and the total social welfare includes the utility of the mechanism agent, all the payment terms cancel out and the formulation above should still yield the socially optimal outcome.

%
While Theorem~\ref{thm:vcg incentive compatible} can be used in a straightforward manner to show the interaction protocol $M \triangleright \phi$ is incentive compatible with respect to each agent's rational q-function, we show next that incentive compatibility with respect to the realisable cumulative utility cannot be achieved without some additional restrictions.
Let $M \triangleright \phi$ be the interaction protocol and suppose each agent's true valuation function is their rational q-function. 
Let $\overbar{h_{t-1}}$ be the history at time $t$.
Fix an agent $i$ and 
let $q_{t,-i}$ be the submitted valuation functions of the other agents.
Agent $i$ can choose to submit its true valuation function $q_{t,i}$ or some other arbitrary function $\widetilde{q}_{t,i}$.
If it submits $q_{t,i}$, then the protocol picks the action $\overbar{a_t} := \arg \max_{\overbar{a}} \sum_j q_{t,j}(\overbar{h_{t-1}}, \overbar{a})$ and agent $i$'s expected realisable cumulative utility from the protocol is
\begin{align}
& 
  \mathbb{E}_{\overbar{or_t}} \biggl[ r_{t,i} - p_i(q_{t,i},q_{t,-i}) + \overline{q}_{t+1,i}(\overbar{h_{t-1}}\overbar{a_t}\overbar{or_t}) - \overline{c}_{t+1,i}(\overbar{h_{t-1}a_t or_t}) \biggr] \nonumber \\ 
= & 
   \mathbb{E}_{\overbar{or_t}} \biggl[ r_{t,i}  + \overline{q}_{t+1,i}(\overbar{h_{t-1}}\overbar{a_t}\overbar{or_t}) \biggr] - h_i(q_{t,-i}) + \sum_{j \neq i} q_{t,j}(\overbar{h_{t-1}},\overbar{a_t})  \nonumber \\ 
 & \hspace{18em} - \mathbb{E}_{\overbar{or_t}} \overline{c}_{t+1,i}(\overbar{h_{t-1}a_t or_t}) \nonumber \\
= & \sum_{j} q_{t,j}(\overbar{h_{t-1}},\overbar{a_t}) - h_i(q_{t,-i}) - \mathbb{E}_{\overbar{or_t}} \overline{c}_{t+1,i}(\overbar{h_{t-1}a_t or_t}), \label{eq:truth utility}
\end{align}
where $h_i(q_{t,-i})$ is the Clark pivot payment function.
If agent $i$ submits $\widetilde{v}_{t,i}$ and letting $\overbar{b_t} := \arg \max_{\overbar{a}} \bigl[ \widetilde{q}_{t,i}(\overbar{a}) + \sum_{j \neq i} q_{t,j}(\overbar{h_{t-1}},\overbar{a}) \bigr]$, we can similarly show that agent $i$'s expected realisable cumulative utility is
\begin{equation}\label{eq:lie utility}
  \sum_{j} q_{t,j}(\overbar{h_{t-1}},\overbar{b_t}) - h_i(q_{t,-i}) - \mathbb{E}_{\overbar{or_t}} \overline{c}_{t+1,i}(\overbar{h_{t-1}b_t or_t}).
\end{equation}
While the first term of (\ref{eq:truth utility}) is larger or equal to the first term of (\ref{eq:lie utility}) by definition of $\overbar{a_t}$, it may be possible for agent $i$ to submit a $\widetilde{q}_{t,i}$ so that the last term of (\ref{eq:lie utility}) is sufficiently small to offset that and achieve (\ref{eq:lie utility}) > (\ref{eq:truth utility}).

Example~\ref{ex:protocol full example} below illustrates how agents can play strategically -- i.e. lie about the true rational q-functions -- to obtain an edge over other agents.

\begin{example}\label{ex:protocol full example}
Consider the same problem setup as Example~\ref{ex:protocol full example alt}.
    Suppose each agent has horizon 2 and their true valuation function are their rational q-functions as given earlier in Example~\ref{ex:protocol full example alt}. 
    If all three agents submit their rational q-functions truthfully, then at $t=1$ there are two actions that both maximise social utility: $\{1,2\}$. 
    Suppose we break ties randomly, then Tables~\ref{tab:random choice 1} and \ref{tab:random choice 2} show the two possible scenarios.
    Inside each cell, the four numbers are the realised $q_{t,i}$, $c_{t,i}$, $r_{t,i}$, and $p_{t,i}$ values respectively.
    The last column shows the cumulative utility $\sum_{t=1}^2 r_{t,i} - p_{t,i}$ for each agent. 

        \begin{table}[!htbp]
        \centering
        \begin{tabular}{c|c|c|c}
           \hline
                  & $a_1^* = 1$ & $a_2^* = 2$ & CU \\
           \hline       
           $A_1$  &  (100, 0, 100, 0)  & (0,0,0,0) & 100 \\
           $A_2$  &  (80, 60, 0, 0) &   (80,0,80,60) & 20 \\
           $A_3$  &  (0, 0, 0, 0) &  (0,0,0,0) & 0 \\
           \hline
        \end{tabular}
        \caption{Scenario when $a_1^*$ is randomly chosen to be 1}
        \label{tab:random choice 1}
    \end{table}
    
    \begin{table}[!htbp]
        \centering
        \begin{tabular}{c|c|c|c}
           \hline
                  & $a_1^* = 2$ & $a_2^* = 1$ & CU \\
           \hline       
           $A_1$  &  (100, 60, 0, 0)  & (100,0,100,60) & 40 \\
           $A_2$  &  (80, 0, 80, 0) &   (0,0,0,0) & 80 \\
           $A_3$  &  (0, 0, 0, 0) &  (0,0,0,0) & 0 \\
           \hline
        \end{tabular}
        \caption{Scenario when $a_1^*$ is randomly chosen to be 2}
        \label{tab:random choice 2}
    \end{table}    
    Note that $A_1$ and $A_2$ get different cumulative utility depending on the random choice -- whoever gets to consume the product first gets more cumulative utility by avoiding having to compete with $A_3$ at $t=2$ -- but the sum of their total cumulative utility remains the same in both scenarios.
    Both $A_1$ and $A_2$ can manipulate the outcome in their favour by submitting an artificially higher valuation function at $t=1$ without incurring a higher payment. For example, $A_2$ can submit the following valuation
    \begin{equation}
        \widetilde{q}_{1,2}(a_1) := \ite{a_1 = 1}{ 80 }{ \ite{a_1 = 2}{81}{0}} 
    \end{equation}
    to get the outcome given in Table~\ref{tab:random choice 2}, if $A_1$ remains truthful.
    Of course, both $A_1$ and $A_2$ can choose to not be truthful, in which case they would need a reasonably good probabilistic model of how the other agent will lie to have a good chance of winning.

\end{example}

Note that in both Example~\ref{ex:protocol full example alt self-rational} and \ref{ex:protocol full example}, the payments made by the agents are strictly less than that paid by the agents in Example~\ref{ex:protocol full example alt}.
The extra payment appears to be the price we need to pay to achieve Bayes-Nash incentive compatibility in the setup of Example~\ref{ex:protocol full example alt}.

In practical platform economies, it is possible to obtain approximate $\epsilon$-incentive compatibility results by restricting the type of valuation-function misreports that agents can do, either through explicit rules or by assumption; see \cite{parkes2007} for a survey of key ideas.

\subsection{Guaranteed Utility Mechanism}\label{app:gum}
We show in this section an adaptation of the Guaranteed Utility Mechanism (GUM) proposed in \cite{csoka2024collusion} for our setting.
Suppose we have a multi-agent environment $\phi$ with $k$ agents and a mechanism $M = (f,p_1,\ldots,p_k)$. 
We assume both the principal (i.e. the mechanism designer) and the agents have full knowledge of $\phi$, or are at least capable of learning an approximation of $\phi$ from data.

\begin{defn}
Given full knowledge of the environment $\phi$ and the mechanism $(f,p_1,\ldots,p_k)$, for each agent $i$ with horizon $T$ at time $t$ having seen history $\overbar{h_{t-1}}$, the \emph{q-function} $q_{t}^{i}$ is defined inductively as follows:
\begin{align}
 q_{t}^{i}(\overbar{h_{t-1}}, \overbar{a_t}) &=
   \begin{cases}
       0 & t > T \\
       \sum_{ or_t} \phi( or_t \,|\, \overbar{h_{t-1}} \overbar{a_t} ) \bigl[ r_{t}^{i} + \overline{q}_{t+1}^{i}(\overbar{h_{t-1}} \overbar{a_t or_t}) \bigr]  & t \leq T 
   \end{cases}   \label{eq:agent q function q-based 2} \\
 \overline{q}_{t}^{i}(\overbar{h_{t-1}}) &= q_{t}^{i}(\overbar{h_{t-1}}, f( \overbar{q_{t}^{1:k}}))  \label{eq:agent bar q function q-based 2} 
\end{align}
where $\overbar{q_t^{1:k}} := (q_{t}^{1}(\overbar{h_{t-1}},\cdot), \ldots, q_{t}^{k}(\overbar{h_{t-1}},\cdot))$ and $f(\overbar{q_t^{1:k}}) = \arg \max\limits_{\overbar{a}} \sum\limits_{j} q_{t}^{j}(\overbar{h_{t-1}}, \overbar{a})$.
\end{defn}

In the following, for all time $t \geq 1$ and for all $i \in \{1,\ldots,k\}$, we denote the partial history $h_t^{1:i}$ by
\[ \overbar{h_{t}^{1:i}} := \overbar{h_{t-1}a_{t}or_{t,1:i}} = \overbar{h_{t-1}a_{t}}or_{t,1} or_{t,2} \ldots or_{t,i}, \]
with the special cases of $\overbar{h_{t}^{1:0}} := \overbar{h_{t-1}a_{t}}$ and $\overbar{h_0^{1:i}} := \epsilon$.
Note that $\overbar{h_t^{1:k}} = \overbar{h_t}$.

\begin{defn}
Let $T$ be the horizon.
For all $t \in \{1,\ldots,T+1\}$, $i \in \{1,\ldots,k\}$, partial history $\overbar{h_{t-1}^{1:i}}$, and agent valuation functions $\overbar{q_t^{1:k}}$, we define 
the \emph{principal's anticipated cumulative payoff} $\Upsilon_t^j(\cdot, \cdot)$ for agent $j$ from time $t$ onwards as follows: 
\begin{align*}
 & \Upsilon_{t}^j(\overbar{h_{t-1}^{1:i}}, \overbar{q_t^{1:k}}) = 
 \mathbb{E}_{ \overbar{or_{t-1,(i+1):k}} \,\sim \phi} \biggl[ \sum_{s=1}^{t-1} r_s^j + q_t^j(\overbar{h_{t-1}}, 
                                            f(\overbar{q_t^{1:k}}(\overbar{h_{t-1}}, \cdot))) \biggr] \\ 
& \Upsilon_{T+1}^j(\overbar{h_{T}^{1:i}}, \epsilon) = \mathbb{E}_{\overbar{or_{T,(i+1):k}}\, \sim \phi} \biggl[ \sum_{s=1}^T r_s^j \biggr] \\                                       
   & \Upsilon_1^j(\epsilon, \overbar{q_1^{1:k}}) = q_1^j( \epsilon, f(q_1^1(\epsilon,\cdot), \ldots, q_1^k(\epsilon,\cdot)  ),
\end{align*}
where $\epsilon$ is the empty sequence and $q_t^{1:k}(\overbar{h_{t-1}},\cdot) := ( q_t^1(\overbar{h_{t-1}},\cdot),\ldots, q_t^k(\overbar{h_{t-1}}, \cdot)  )$.
\end{defn}

Note that, by the above two definitions, we have
\begin{equation}\label{eq:Upsilon edge} 
\Upsilon_t^j(\overbar{h_{t-2} a_{t-1}}, q_t^{1:k}) = \Upsilon_{t-1}^j(\overbar{h_{t-2}^{1:k}}, q_{t-1}^{1:k}), 
\end{equation}
since
\begin{align*}
 \Upsilon_{t-1}^j(\overbar{h_{t-2}^{1:k}}, \overbar{q_{t-1}^{1:k}}) = &  \sum_{s=1}^{t-2} r_s^j + q_{t-1}^j( \overbar{h_{t-2}}, f(\overbar{q_{t-1}^{1:k}}(\overbar{h_{t-2}},\cdot)) \\
= &  \sum_{s=1}^{t-2} r_s^j + \mathbb{E}_{\overbar{or_{t-1}}} \bigl[ r_{t-1}^j + \overline{q}_{t}^j( \overbar{h_{t-2} a_{t-1} or_{t-1}}) \bigr] \\
= & \; \mathbb{E}_{\overbar{or_{t-1}}} \biggl[ \sum_{s=1}^{t-1} r_{s}^j + \bigl[ q_{t}^j( \overbar{h_{t-1}}, f(\overbar{q_{t}^{1:k}}(\overbar{h_{t-1}},\cdot))) \bigr] \biggr] \\
 = & \; \Upsilon_t^j(\overbar{h_{t-1}^{1:0}}, \overbar{q_t^{1:k}}) \\
 = & \; \Upsilon_t^j(\overbar{h_{t-2}a_{t-1}}, \overbar{q_t^{1:k}}),
\end{align*}
where we have denoted $\overbar{a_s} := f(\overbar{q_s^{1:k}}(\overbar{h_{s-1}},\cdot))$.

\paragraph{GRL-GUM Interaction Protocol}
We now describe the GRL-GUM interaction protocol between the mechanism and the agents.
Let $\overbar{h_{t-1}}$ be the history up till time $t$.
At time $t$, each agent $i$ submits a report $\tilde{q}_{t}^{i}(\overbar{h_{t-1}},\cdot)$.
We then use the mechanism to determine the joint action the agents should take to maximise social welfare via
\begin{equation}\label{eq:max welfare GUM} 
\overbar{a^*_t} := f(\overbar{\tilde{q}_t^{1:k}}) = \arg \max_{a \in \Alt} \sum_{i} \tilde{q}_{t}^{i}(\overbar{h_{t-1}}, a). 
\end{equation}
A percept $\overbar{or_t}$ is then sampled from $\phi( \cdot \,|\, \overbar{h_{t-1} a^*_t})$ and
each agent $i$ receives the percept and make the following transfer of payments to each other as determined by the principal:
\begin{align}\label{eq:payment function GUM} 
p_t^i &:= \sum_{j \neq i} \gamma_t^{i \to j} - \sum_{j \neq i} \gamma_t^{j \to i} 
\end{align}
where 
\begin{multline}\label{eq:gamma i to j}
 \gamma_t^{i \to j} = \Upsilon_{t}^j( \overbar{h_{t-1}^{1:i}}, (\tilde{q}_t^1,\ldots,\tilde{q}_t^i,q_t^{i+1}, \ldots, q_t^k)) \\ - \Upsilon_{t}^j(\overbar{h_{t-1}^{1:(i-1)}}, (\tilde{q}_t^1,\ldots,\tilde{q}_t^{i-1},q_t^{i}, \ldots, q_t^k)) 
\end{multline}
is the payment that agent $i$ makes to agent $j$.
Each $q_t^i$ in (\ref{eq:gamma i to j}) is as defined in (\ref{eq:agent q function q-based 2}) with argument $\overbar{h_{t-1}}$. (If the principal does not have knowledge of $\phi$, then it needs to estimate $q_t^i$ from data.)  
Thus, $\gamma_t^{i \to j}$ can be thought of as the social cost that agent $i$ imposes on agent $j$, and so the payment $p_t^i$ is the sum of agent $i$'s social costs on the other agents and the other agent's social cost on agent $i$. 

The instantaneous utility of agent $i$ at time $t$ is then given by 
$r_{t}^{i} + p_t^i$. 
The total utility of agent $i$ is given by
\[ U^i = \sum_{t=1}^{T} r_{t}^i + p_t^i, \] 
which is a random variable dependent on $\phi$ and the sequence of valuation functions $\overbar{\tilde{q}_{1:T}^{1:k}}$ submitted by the agents. 


\begin{prop}\label{prop:GUM budget balanced}
The payment function (\ref{eq:payment function GUM}) is budget balanced: $\sum_i p_t^i = 0$.    
\end{prop}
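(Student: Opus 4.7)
The plan is to observe that budget balance is immediate from the fact that every directed transfer $\gamma_t^{i \to j}$ is booked twice with opposite signs in the sum $\sum_i p_t^i$: once positively in $p_t^i$ (as an outflow from agent $i$) and once negatively in $p_t^j$ (as an inflow to agent $j$). So the proof is purely a swap-of-summation argument, with no reliance on the internal structure of $\gamma_t^{i \to j}$ or of $\Upsilon_t^j$ at all.

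Concretely, I would start by substituting (\ref{eq:payment function GUM}) into $\sum_{i=1}^{k} p_t^i$ to obtain
\[ \sum_{i=1}^k p_t^i \;=\; \sum_{i=1}^k \sum_{j \neq i} \gamma_t^{i \to j} \;-\; \sum_{i=1}^k \sum_{j \neq i} \gamma_t^{j \to i}. \]
Both double sums range over the same index set, namely the set of ordered pairs $(i,j) \in \{1,\ldots,k\}^2$ with $i \neq j$. Relabeling the summation indices in the second double sum via $(i,j) \mapsto (j',i')$ shows that it equals $\sum_{j'=1}^k \sum_{i' \neq j'} \gamma_t^{i' \to j'}$, which is the same quantity as the first double sum. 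Hence the two cancel and $\sum_i p_t^i = 0$.

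There is no real obstacle here; the entire content of the proposition is the design choice of expressing each $p_t^i$ as a signed sum of pairwise directed transfers. It is worth noting that the argument does \emph{not} use (\ref{eq:gamma i to j}), the martingale-like telescoping property of $\Upsilon_t^j$ across the $i$-index, or (\ref{eq:Upsilon edge}); budget balance holds for \emph{any} collection of numbers $\{\gamma_t^{i \to j}\}_{i \neq j}$ plugged into (\ref{eq:payment function GUM}). Those deeper properties of $\gamma_t^{i\to j}$ will instead be needed elsewhere, e.g.\ for incentive compatibility and individual rationality results about GRL-GUM, but not for this proposition.
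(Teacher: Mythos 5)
Your proposal is correct and is essentially the paper's own proof: the paper simply notes that the terms in the two sums in (\ref{eq:payment function GUM}) cancel, which is exactly your swap-of-summation argument spelled out in detail. Your added observation that budget balance needs nothing about the structure of $\gamma_t^{i\to j}$ is accurate but not needed.
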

\begin{proof}
The terms in the two sums in (\ref{eq:payment function GUM}) cancel out.   
\end{proof}

\paragraph{Guaranteed Utility Property}
\begin{defn}[\cite{csoka2024collusion}]\label{defn:GUP}
An interaction protocol $M \triangleright \phi$ satisfies the Guaranteed Utility Property (GUP) if there exists agent valuation functions $(q_{1:T}^{*,1}, q_{1:T}^{*,2}, \ldots, q_{1:T}^{*,k})$ and $C^1,\ldots, C^k \in \mathbb{R}$ such that
\begin{enumerate}
 \item For each agent $i$, we have $\forall \overbar{q^{-i}_{1:T}}. \; \mathbb{E}_{M \triangleright \phi} \bigl[ U^i(q^{*,i}_{1:T}, \overbar{q^{-i}_{1:T}}) \bigr] \geq C^i.$ 
 \item The best possible outcome is given by $\sup\limits_{\overbar{q_{1:T}^{1:k}}} \sum_i \mathbb{E}_{M \triangleright \phi} \bigl[ U^i(\overbar{q_{1:T}^{1:k}}) \bigr] = \sum_i C^i.$
\end{enumerate}
\end{defn}

\begin{theorem}[\cite{csoka2024collusion}]\label{thm:GUP IC}
An interaction protocol $M \triangleright \phi$ satisfies the Guaranteed Utility Property if and only if
\[ \sum_i \sup_{q^i_{1:T}} \inf_{ \overbar{q^{-i}_{1:T}}} \mathbb{E}_{M \triangleright \phi} \bigl[ U^i(q^i_{1:T}, \overbar{q^{-i}_{1:T}}) \bigr] = \sup_{\overbar{q^{1:k}_{1:T}}} \sum_i \mathbb{E}_{M \triangleright \phi} \bigl[ U^i(\overbar{q^{1:k}_{1:T}}) \bigr] \]
\end{theorem}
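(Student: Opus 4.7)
The plan is to prove both directions by a direct appeal to the elementary max-min inequality, using the structure of Definition~\ref{defn:GUP} to tie the two sides together; the mechanism-specific details of $M \triangleright \phi$ play essentially no role, so I would treat $U^i$ abstractly as a function of the agents' strategy profile.

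($\Rightarrow$) Suppose GUP holds with certificates $(q^{*,i}_{1:T})_{i=1}^k$ and constants $(C^i)_{i=1}^k$. Condition~1 of Definition~\ref{defn:GUP} says $\inf_{\overbar{q^{-i}_{1:T}}} \mathbb{E}_{M\triangleright\phi}[U^i(q^{*,i}_{1:T}, \overbar{q^{-i}_{1:T}})] \geq C^i$ for each $i$, hence $\sup_{q^i_{1:T}} \inf_{\overbar{q^{-i}_{1:T}}} \mathbb{E}[U^i] \geq C^i$. Summing and invoking condition~2 gives the ``$\geq$'' direction of the displayed identity: $\sum_i \sup_{q^i} \inf_{q^{-i}} \mathbb{E}[U^i] \geq \sum_i C^i = \sup_{\overbar{q^{1:k}_{1:T}}} \sum_i \mathbb{E}[U^i]$. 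For the matching ``$\leq$'', which in fact always holds, I would fix $\epsilon > 0$, pick strategies $q^{\epsilon,i}_{1:T}$ that are $\epsilon$-optimal for $\sup_{q^i}\inf_{q^{-i}} \mathbb{E}[U^i]$, and evaluate the joint profile $(q^{\epsilon,1}_{1:T}, \ldots, q^{\epsilon,k}_{1:T})$ in the right-hand supremum. This gives
\[ \sup_{\overbar{q^{1:k}_{1:T}}} \sum_i \mathbb{E}[U^i] \;\geq\; \sum_i \mathbb{E}[U^i(q^{\epsilon,1:k}_{1:T})] \;\geq\; \sum_i \inf_{\overbar{q^{-i}_{1:T}}} \mathbb{E}[U^i(q^{\epsilon,i}_{1:T},\overbar{q^{-i}_{1:T}})] \;\geq\; \sum_i \sup_{q^i_{1:T}} \inf_{\overbar{q^{-i}_{1:T}}} \mathbb{E}[U^i] - k\epsilon. \]
Sending $\epsilon \to 0$ yields the reverse inequality, so the two sides coincide.

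($\Leftarrow$) Assume the displayed equality. Set $C^i := \sup_{q^i_{1:T}} \inf_{\overbar{q^{-i}_{1:T}}} \mathbb{E}[U^i]$; then condition~2 of GUP is exactly the hypothesis. For condition~1, choose $q^{*,i}_{1:T}$ to attain the supremum defining $C^i$, so that $\mathbb{E}[U^i(q^{*,i}_{1:T},\overbar{q^{-i}_{1:T}})] \geq C^i$ holds uniformly in $\overbar{q^{-i}_{1:T}}$. This proves GUP.

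The main obstacle, and the only real subtlety, is the attainment of the supremum in the backward direction. Under the paper's finiteness assumptions on $\A,\Obs,\Reward$ and for bounded valuation functions (hence compact strategy spaces under pointwise convergence), continuity of $\mathbb{E}[U^i]$ in the strategies combined with the standard fact that the map $q^i \mapsto \inf_{q^{-i}} \mathbb{E}[U^i]$ is upper semi-continuous is enough to guarantee attainment. If one is unwilling to assume compactness, the cleanest fix is to relax Definition~\ref{defn:GUP} to an $\epsilon$-version (replace $C^i$ by $C^i - \epsilon_i$ in condition~1 with $\sum_i \epsilon_i$ arbitrarily small), in which case the same ($\epsilon$-optimal) selection argument used in the forward direction delivers the certificates with no appeal to attainment. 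Either way, the proof reduces to the combinatorial identity between sum-of-maximins and maximum-of-sum that the theorem asserts.
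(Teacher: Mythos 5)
You should first note that the paper does not actually prove Theorem~\ref{thm:GUP IC}: it is stated as a result imported from \cite{csoka2024collusion}, so your argument can only be judged on its own terms. Your forward direction is sound: condition~1 of Definition~\ref{defn:GUP} gives $\sup_{q^i_{1:T}}\inf_{\overbar{q^{-i}_{1:T}}}\mathbb{E}[U^i]\ge C^i$ for each $i$, condition~2 converts the sum of the $C^i$ into the right-hand side, and the reverse inequality $\sum_i \sup\inf \le \sup\sum$ is indeed unconditional, obtained exactly as you do by evaluating the sum at a profile of $\epsilon$-maximin strategies.

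The gap is in the backward direction, and it is not a removable technicality of your write-up: the statement itself forces exact attainment. If $(q^{*,i}_{1:T}, C^i)$ are any GUP certificates, then $C^i \le \inf_{\overbar{q^{-i}_{1:T}}}\mathbb{E}[U^i(q^{*,i}_{1:T},\cdot)] \le m^i := \sup_{q^i_{1:T}}\inf_{\overbar{q^{-i}_{1:T}}}\mathbb{E}[U^i]$, and condition~2 together with the assumed equality gives $\sum_i C^i = \sum_i m^i$; hence $C^i = m^i$ for every $i$ and each $q^{*,i}_{1:T}$ must be an \emph{exact} maximin strategy. So there is no freedom in choosing the constants, and your proof stands or falls with attainment of the supremum. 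The fix you sketch does not go through in this setting: $\mathbb{E}_{M\triangleright\phi}[U^i]$ is not continuous in the submitted valuation functions, because the mechanism selects the joint action by $\arg\max_{\overbar{a}}\sum_j q_t^j(\cdot,\overbar{a})$ (with possibly randomized tie-breaking), so utilities jump at ties; neither continuity nor upper semi-continuity of $q^i\mapsto\inf_{q^{-i}}\mathbb{E}[U^i]$ can simply be asserted, and compactness of the report space under pointwise convergence buys nothing by itself. Your $\epsilon$-relaxation of Definition~\ref{defn:GUP} proves a genuinely weaker statement than the one asserted. A correct proof must either carry an attainment hypothesis (finite report spaces, or maximin values attained, which is presumably how \cite{csoka2024collusion} frames it) or establish attainment from the structure of the specific protocol — which is in effect what the paper does later for GRL-GUM, where explicit certificates (the true q-functions and $C^i=\Upsilon_1^i(\epsilon,\overbar{q_1^{1:k}})$) are exhibited rather than obtained from a topological argument.
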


Theorem~\ref{thm:GUP IC} shows that in an interaction protocol that satisfies the GUP, there is no incentive for any agent to do anything other than truthfully report their valuation functions.
The following result can be established through a direct application of the proof technique given in \cite{csoka2024collusion}.

\begin{theorem}
The GRL-GUM interaction protocol satisfies the Guaranteed Utility Property.    
\end{theorem}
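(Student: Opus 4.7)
The plan is to invoke Theorem~\ref{thm:GUP IC}, which reduces the GUP to verifying the minimax-maximin equality
\[ \sum_i \sup_{q^i_{1:T}} \inf_{\overbar{q^{-i}_{1:T}}} \mathbb{E}_{M \triangleright \phi} \bigl[ U^i(q^i_{1:T}, \overbar{q^{-i}_{1:T}}) \bigr] = \sup_{\overbar{q^{1:k}_{1:T}}} \sum_i \mathbb{E}_{M \triangleright \phi} \bigl[ U^i(\overbar{q^{1:k}_{1:T}}) \bigr]. \]
I would take $q^{*,i}_{1:T}$ to be the true rational q-functions from (\ref{eq:agent q function q-based 2}).

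For the right-hand side, budget balance (Proposition~\ref{prop:GUM budget balanced}) gives $\sum_i U^i = \sum_{i,t} r_t^i$, whose expectation is maximised by any joint policy that, at each step, selects an action maximising the sum of true expected cumulative rewards-to-go under $\phi$. By the definition of $f$ and $q^i_t$, the truthful profile $\overbar{q^{1:k}_{1:T}}$ induces exactly this socially optimal action sequence, so the RHS equals the socially optimal expected total reward, which I will call $C$, and we define $C^i$ to be agent $i$'s expected cumulative reward under that same socially optimal policy, so that $\sum_i C^i = C$.

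For the left-hand side, the key observation is that the externality terms $\gamma_t^{i \to j}$ in (\ref{eq:gamma i to j}) have a telescoping structure in both the agent index and in time. Summing $\gamma_t^{i \to j}$ over $i$ produces a single difference of $\Upsilon_t^j$ values at the endpoints $\overbar{h_{t-1}^{1:0}}$ and $\overbar{h_{t-1}^{1:k}}$, and equation (\ref{eq:Upsilon edge}) then links the trailing term of step $t$ to the leading term of step $t+1$. When agent $i$ reports truthfully, the two argument lists appearing in $\gamma_t^{i \to j}$ become identical, so the externality reduces to a one-step increment in $\Upsilon_t^j$ whose expectation over $or_{t-1, i}$ vanishes by the tower property of conditional expectation. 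Combining these identities, the expected cumulative payment received by a truthfully-reporting agent $i$ reorganises into a telescoping sum over $t$ that depends only on $\phi$ and $q^i_{1:T}$, not on $\overbar{q^{-i}_{1:T}}$, yielding $\mathbb{E}[U^i(q^i_{1:T}, \overbar{q^{-i}_{1:T}})] = C^i$ for every $\overbar{q^{-i}_{1:T}}$. Hence the LHS is at least $\sum_i C^i = C$, and since weak duality forces $\text{LHS} \leq \text{RHS} = C$, both sides equal $C$ and the GUP follows from Theorem~\ref{thm:GUP IC}.

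The main obstacle will be the telescoping bookkeeping in the third paragraph: tracking which arguments of $\Upsilon_t^j$ are truthful versus reported as the payment sums are reorganised across $i$, $j$ and $t$, and verifying that the residual terms arising from (\ref{eq:Upsilon edge}) together with the boundary conditions at $t = 1$ and $t = T+1$ telescope cleanly to $C^i$. This requires careful use of the tower property across the nested expectations induced by the sequential revelation of percepts $or_{t-1,1}, \ldots, or_{t-1,k}$, and would mirror the structure of the original argument of \cite{csoka2024collusion}, which the GRL-GUM setup adapts to a history-based general reinforcement learning environment.
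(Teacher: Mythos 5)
Your proposal is correct and its core is the same argument the paper uses: the paper's Part~1 establishes exactly your key identity $\mathbb{E}\bigl[U^i(q^i_{1:T},\overbar{\tilde{q}^{-i}_{1:T}})\bigr]=\Upsilon_1^i(\epsilon,\overbar{q_1^{1:k}})=C^i$ for every opponent profile, via the same ingredients you name — the hybrid truthful/reported argument lists in (\ref{eq:gamma i to j}), the vanishing expectation of the truthful agent's externality increments by the tower property, the cross-time link (\ref{eq:Upsilon edge}), and the boundary cases $t=1$ and $t=T+1$ — which is the bookkeeping you defer as the main obstacle. The only difference is packaging: the paper verifies Definition~\ref{defn:GUP} directly (its Part~2 is immediate since truthful reporting attains the social optimum), whereas you route through Theorem~\ref{thm:GUP IC} using budget balance and a weak-duality step, a valid but slightly longer wrapper around the same central computation.
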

\begin{proof}
We show that the q-functions $\overbar{q_{1:T}^{1:k}}$ as defined in (\ref{eq:agent q function q-based 2}), in conjunction with setting $C^i = \Upsilon_{1}^i(\epsilon, \overbar{q_1^{1:k}})$, satisfy the two properties in Definition~\ref{defn:GUP}. 

\paragraph{Part 1}
Fix an arbitrary agent $i$ and an arbitrary sequence of valuation functions $\overbar{\tilde{q}_{1:T}^{-i}}$ for the other agents.
We need to show
\begin{equation}\label{eq:GUM theorem part 1} 
\mathbb{E}_{M \triangleright \phi} \bigl[ U^i(q^{i}_{1:T}, \overbar{\tilde{q}^{-i}_{1:T}}) \bigr] = \Upsilon_{1}^i(\epsilon, \overbar{q_1^{1:k}}). 
\end{equation}
For convenience, we split each transfer $p_t^i$ as defined in (\ref{eq:payment function GUM}) as follows:
\begin{equation}\label{eq:gum transfer parts}
    p_{t,j}^i = \begin{cases}
                 -\gamma_t^{j \to i} & \text{if } j \neq i \\
                 \sum_{l \neq i} \gamma_t^{i \to l} & \text{if } i = j.
                \end{cases}
\end{equation}
We first show that the sum of the anticipated payoff $\Upsilon_t^i$ and the cumulative transfers of agent $i$ is a martingale; i.e. for all $t \geq 1$ and $j \in \{1,\ldots,k\}$
\begin{multline}\label{eq:GUM theorem martingale} 
\mathbb{E}_{or_{t-1,j} \sim \phi} \biggl[ \Upsilon_{t}^i(\overbar{h_{t-1}^{1:j}}, (\overbar{\tilde{q}_t^{1:j}}, \overbar{q_t^{(j+1):k}}) + \sum_{s=1}^t \sum_{l=1}^j p_{s,l}^i \biggr] = \\
 \Upsilon_{t}^i(\overbar{h_{t-1}^{1:(j-1)}}, ( \overbar{\tilde{q}_t^{1:(j-1)}}, \overbar{q_t^{j:k}} ) + \sum_{s=1}^t \sum_{l=1}^{j-1} p_{s,l}^i, 
\end{multline}
which is equivalent to the following simpler expression by rearranging terms
\[ \mathbb{E}_{or_{t-1,j} \sim \phi} \bigl[  \gamma_t^{j \to i} + p_{t,j}^i \bigr] = 0. \]
There are two cases to consider.
If $j \neq i$, then $\gamma_t^{j \to i} + p_{t,j}^i = 0$ by (\ref{eq:gum transfer parts}).
For the case of $j = i$, we have
\[ \gamma_t^{j \to i} + p_{t,j}^i = \gamma_t^{i \to i} + \sum_{l \neq i} \gamma_t^{i \to l} = \sum_{l} \gamma_t^{i \to l}. \]
To show $\mathbb{E} \bigl[ \sum_{l} \gamma_t^{i \to l} \bigr] = 0$, it is sufficient to show $\mathbb{E} \bigl[ \gamma_t^{i \to l} \bigr] = 0$ for each agent $l$, which follows from agent $i$ being truthful in reporting $\tilde{q}_t^i = q_t^i$ and the law of iterated expectations:
\begin{align*}
   &\; \mathbb{E}_{or_{t-1,i} \sim \phi} \bigl[ \Upsilon_{t}^l( \overbar{h_{t-1}^{1:i}}, (\tilde{q}_t^1,\ldots, \tilde{q}_t^{i-1},\tilde{q}_t^i,q_t^{i+1}, \ldots, q_t^k)) \bigr]  \\
= &\;\mathbb{E}_{or_{t-1,i} \sim \phi} \bigl[ \Upsilon_{t}^l( \overbar{h_{t-1}^{1:i}}, (\tilde{q}_t^1,\ldots, \tilde{q}_t^{i-1}, q_t^i,q_t^{i+1}, \ldots, q_t^k)) \bigr] \\
 = &\; \Upsilon_{t}^l(\overbar{h_{t-1}^{1:(i-1)}}, (\tilde{q}_t^1,\ldots,\tilde{q}_t^{i-1},q_t^{i}, q_t^{i+1},\ldots, q_t^k)). 
\end{align*}
For the special case of $t = T+1$, we have
\begin{align*}
  \mathbb{E}_{or_{T,i} \sim \phi} \bigl[ \Upsilon_{T+1}^l( \overbar{h_{T}^{1:i}}, \epsilon) \bigr]  =  \Upsilon_{T+1}^l(\overbar{h_{T}^{1:(i-1)}}, \epsilon), 
\end{align*}
which follows from the definition of $\Upsilon_{T+1}^j$. 
We have thus established the martingale property (\ref{eq:GUM theorem martingale}).

We now show that (\ref{eq:GUM theorem part 1}) can be obtained from repeated applications of (\ref{eq:GUM theorem martingale}) and (\ref{eq:Upsilon edge}) as follows, where we denote by
\[ \overbar{or_{1:t}^{1:j}} := \overbar{or_{1:(t-1)}or_t^{1:j}} = \overbar{or_{1:(t-1)}}or_{t,1}\ldots or_{t,j} \] 
the (partial) percepts sampled from $\phi$. 
\allowdisplaybreaks
\begin{align*}
 & \;\mathbb{E}_{M \triangleright \phi} \bigl[ U^i(q^{i}_{1:T}, \overbar{\tilde{q}^{-i}_{1:T}}) \bigr] \\
= & \; \mathbb{E}_{\overbar{or_{1:T}^{1:k}}} \biggl[ \Upsilon_{T+1}^i(\overbar{h_{T}^{1:k}}, \epsilon) + \sum_{t=1}^{T+1}\sum_{l=1}^k p_{t,l}^i \biggr] \\
= & \; \mathbb{E}_{\overbar{or_{1:T-1}^{1:k}}} \mathbb{E}_{\overbar{or_T^{1:k-1}}} \biggl[ \Upsilon_{T+1}^i(\overbar{h_{T}^{1:(k-1)}}, \epsilon) + \sum_{t=1}^{T+1}\sum_{l=1}^{k-1} p_{t,l}^i \biggr] \\
& \vdots \\
= & \; \mathbb{E}_{\overbar{or_{1:T-1}^{1:k}}} \biggl[ \Upsilon_{T+1}^i(\overbar{h_{T-1}a_T}, \epsilon) + \sum_{t=1}^{T}  p_{t}^i \biggr] \\
= & \;\mathbb{E}_{\overbar{or_{1:(T-1)}^{1:k}}} \biggl[ \Upsilon_{T}^i(\overbar{h_{T-1}^{1:k}}, (q_T^i, \overbar{\tilde{q}_T^{-i}}) + \sum_{t=1}^{T} p_t^i \biggr]   \\
= & \; \mathbb{E}_{\overbar{or_{1:(T-2)}^{1:k}}} \mathbb{E}_{\overbar{or_{T-1}^{1:(k-1)}}} \biggl[ \Upsilon_T^i(\overbar{h_{T-1}^{1:(k-1)}},(\overbar{\tilde{q}_T^{1:(k-1)}}, q_T^k)) + \sum_{t=1}^{T} \sum_{l=1}^{k-1} p_{t,l}^i  \biggr] \\
= & \; \mathbb{E}_{\overbar{or_{1:(T-2)}^{1:k}}} \mathbb{E}_{\overbar{or_{T-1}^{1:(k-2)}}} \biggl[ \Upsilon_T^i(\overbar{h_{T-1}^{1:(k-2)}},(\overbar{\tilde{q}_T^{1:(k-2)}}, \overbar{q_T^{(k-1):k}})) + \sum_{t=1}^{T} \sum_{l=1}^{k-2} p_{t,l}^i  \biggr] \\
& \vdots \\
= & \; \mathbb{E}_{\overbar{or_{1:(T-2)}^{1:k}}} \biggl[ \Upsilon_T^i(\overbar{h_{T-2}a_{T-1}}, \overbar{q_T^{1:k}}) + \sum_{t=1}^{T-1} p_{t}^i  \biggr] \\
= & \; \mathbb{E}_{\overbar{or_{1:(T-2)}^{1:k}}} \biggl[ \Upsilon_{T-1}^i(\overbar{h_{T-2}^{1:k}}, (q_{T-1}^i, \overbar{\tilde{q}_{T-1}^{-i}})) + \sum_{t=1}^{T-1} p_{t}^i  \biggr]   \\
& \vdots \\
= & \; \mathbb{E}_{\overbar{or_1^{1:k}}} \biggl[ \Upsilon_{2}^i(\overbar{h_{1}^{1:k}}, (q_{2}^i, \overbar{\tilde{q}_{2}^{-i}})) + \sum_{t=1}^{2} p_{t}^i  \biggr]   \\
& \vdots \\
= & \; \Upsilon_{2}^i(\overbar{a_1}, \overbar{q_{2}^{1:k}}) + p_{1}^i \\
= & \; \Upsilon_1^i(\epsilon, \overbar{q_1^{1:k}}).
\end{align*}
The last step follows since $p_1^i = 0$.

\paragraph{Part 2} The optimal outcome is achieved when all the agents are truthful, and the result then follows from Part 1.
\end{proof}

We remark also that the same argument used in \cite{csoka2024collusion} can be used to show that GRL-GUM is collusion-proof, a property that the VCG mechanism does not satisfy.

Note also that, unlike the interaction protocol described in \S~\ref{subsec:general case} where only the agents will need to learn estimates of their valuation functions $q_t^i$ from data in practice, the GRL-GUM mechanism requires that both the principal and the agents learn estimates of each agent's valuation function $q_t^i$ from data when the underlying environment $\phi$ is not common knowledge.

\newpage
\section{Cap and Trade Agent Policies}\label{ap:cap and trade}

\paragraph{Collusive Dynamics}
Fig~\ref{fig:r1_ex1_pol_rfunc2} shows the final policy learned for the setup of Fig~\ref{fig:r1_ex1_rfunc2}.
\begin{figure}[!htbp]
\begin{scriptsize}
\begin{verbatim}
=== FINAL POLICY EVALUATION ===
--------------------------------------------------------------------
|  t  | Agent | Prod |  Perm  | Prof | Rho  | Bid  |  Win? | +Prof |
--------------------------------------------------------------------
|  1  |   1   |  0.0 |      0 |  0.0 |  8.4 | 0.00 |       |       |
|     |   2   |  0.0 |      0 |  0.0 |  6.1 | 0.00 |   *   |   6.1 |
--------------------------------------------------------------------
|  2  |   1   |  0.0 |      0 |  0.0 |  8.4 | 0.10 |   *   |   8.4 |
|     |   2   | 30.5 |  3,000 |  6.1 |  2.3 | 0.00 |       |       |
--------------------------------------------------------------------
|  3  |   1   | 42.2 |  3,000 |  8.4 |  1.6 | 0.00 |       |       |
|     |   2   | 30.5 |  3,000 |  6.1 |  2.3 | 0.05 |   *   |   2.3 |
--------------------------------------------------------------------
|  4  |   1   | 42.2 |  3,000 |  8.4 |  1.6 | 0.20 |   *   |   1.6 |
|     |   2   | 42.2 |  6,000 |  8.4 |  0.9 | 0.00 |       |       |
--------------------------------------------------------------------
|  5  |   1   | 50.2 |  6,000 | 10.0 |  1.0 | 0.05 |   *   |   1.0 |
|     |   2   | 42.2 |  6,000 |  8.4 |  0.9 | 0.00 |       |       |
--------------------------------------------------------------------
(winning agent must pay less than rho to make a profit)

Avg permit price: $0.0
Agent 1:
  -> won 9000 permits
  -> paid $0.00m
  -> produced 55.2m litres of fuel
  -> made a total profit of $11.04m

Agent 2:
  -> won 6000 permits
  -> paid $0.00m
  -> produced 42.2m litres of fuel
  -> made a total profit of $8.45m

====== ESTIMATED OPTIMAL STRATEGIES: argmax_a Q(s, a) ======

==============================================================================================
        ||      0      |     3,000   |    6,000    |    9,000    |    12,000   |   15,000    |
==============================================================================================
      0 || 0.00 / 0.00 | 0.10 / 0.00 | 0.35 / 0.00 | 0.35 / 0.05 | 8.35 / 0.00 | 0.0* / 0.0* |
  3,000 || 0.00 / 0.10 | 0.00 / 0.05 | 0.20 / 0.00 | 0.10 / 0.00 | 0.0* / 0.0* |  ?   /  ?   |
  6,000 || 0.00 / 0.05 | 0.00 / 0.05 | 0.05 / 0.00 | 0.0* / 0.0* |  ?   /  ?   |  ?   /  ?   |
  9,000 || 0.00 / 0.40 | 0.05 / 0.15 | 0.0* / 0.0* |  ?   /  ?   |  ?   /  ?   |  ?   /  ?   |
 12,000 || 0.25 / 4.45 | 0.0* / 0.0* |   ?  /  ?   |  ?   /  ?   |  ?   /  ?   |  ?   /  ?   |
 15,000 || 0.0* / 0.0* |  ?   /   ?  |   ?  /  ?   |  ?   /  ?   |  ?   /  ?   |  ?   /  ?   |


====== ESTIMATED RETURNS IF OPTIMAL STRATEGY FOLLOWED: max Q(s, a) ======
  
=====================================================================================================
        ||      0        |    3,000      |    6,000      |    9,000    |    12,000   |    15,000    |
=====================================================================================================
      0 || 19.49 / 19.49 | 13.38 / 13.38 | 11.03 / 11.04 | 9.92 / 9.99 | 7.90 / 8.45 | 0.00 / 0.00  |
  3,000 || 11.04 / 11.04 |  4.94 / 4.94  |  2.59 / 2.59  | 1.59 / 1.60 | 0.00 / 0.00 |  ?   /  ?    |
  6,000 ||  9.44 / 9.44  |  3.34 / 3.34  |  0.99 / 0.99  | 0.00 / 0.00 |  ?   /  ?   |  ?   /  ?    |
  9,000 ||  8.40 / 8.27  |  2.30 / 2.28  |  0.00 / 0.00  |  ?   /  ?   |  ?   /  ?   |  ?   /  ?    |
 12,000 ||  5.85 / 5.84  |  0.00 / 0.00  |   ?   /  ?    |  ?   /  ?   |  ?   /  ?   |  ?   /  ?    |
 15,000 ||  0.00 / 0.00  |   ?   /  ?    |   ?   /  ?    |  ?   /  ?   |  ?   /  ?   |  ?   /  ?    |
\end{verbatim}
\end{scriptsize}
    \caption{Final policy for a sample run of $r^{(2)}$. }
    \label{fig:r1_ex1_pol_rfunc2}
\end{figure}
In the Estimated Optimal Strategies matrix in Fig~\ref{fig:r1_ex1_pol_rfunc2}, rows denote permits held by $R_{1}$ and columns denote permits held by $R_{2}$, so that each cell represents a state of the game. The entries in the cells are (estimated optimal bid in that state by $R_{1}$, estimated optimal bid by $R{2}$). A `?' means the state was never visited by the agent, while an `*' marks the terminal states.
    
On the first tranche, the agents both bid zero, leaving it to the VCG mechanism to decide via random tie-breaking which agent wins. Then, if $R_{1}$ won the first tranche -- i.e. we are now in cell (2, 1) of the matrix -- $R_{1}$ bids zero so $R_{2}$ will win the second tranche. And vice versa if we are in cell (1, 2) of the matrix. After that, on tranche 3, we are always in cell (2, 2) of the matrix and the remainder of the game always proceeds the same way: agent $R_{2}$ wins; agent $R_{1}$ wins; agent $R_{1}$ wins.
    
Thus the learned joint policy always results in a return for each agent (and a total shared net profit) of $19.49$ over the five tranches, as both agents have accurately estimated -- see cell (1, 1) of the ``Estimated Returns'' matrix of Fig~\ref{fig:r1_ex1_pol_rfunc2}. The entries in that matrix are ($R_{1}$'s estimated expected return from the state if $R_{1}$'s estimated optimal bids are made, $R_{2}$'s estimated expected return if its estimated optimal bids are made). Note the agents have learned this collusive policy solely by observing the permit holdings of each participant in the auction -- there was never any explicit communication between them.  

\paragraph{Competitive Dynamics} 
Fig~\ref{fig:r1_ex1_pol_rfunc3} shows the final policy learned for the setup of Fig~\ref{fig:r1_ex1_rfunc3}.
Observe that the learned joint policy here is, in most states, for each agent to bid identical amounts, leaving it to the VCG mechanism to randomly determine who wins. In some cases an agent will bid an amount greater than $\rho$, such as in tranches 3-5 of the sampled final policy evaluation shown in Fig~\ref{fig:r1_ex1_pol_rfunc3}. Why? Consider tranche 3 of the sample, where the game is in state (6000, 0). Here agent $R_{1}$ bid $1.9$m -- the same as $R_{2}$ -- and won, receiving a profit (and reward) of $-0.9$m, so agent $R_{2}$ received a reward of $0.9$m. If agent $R_{1}$ had bid less, say $1.8$m, $R_{2}$ would have won with a profit of $6.1 - 1.8 = 4.3$m, and agent $R_{1}$ would have received a reward of $-4.3$m instead of $-0.9$m. Clearly that is a worse outcome. If agent $R_{1}$ had bid more, the outcome would have also been worse: agent $R_{1}$ would have received a larger negative reward (and thus a larger positive reward would have gone to $R_{2}$). 

Now let's suppose the tie-break at state (6000, 0) -- and all subsequent tie-breaks -- are resolved in $R_{2}$'s favour. Assuming each agent stays with its learned strategies, the last three tranches would play out as follows: 

\begin{itemize}\itemsep1mm\parskip0mm
    \item $t = 3$:  (6000, 0) $\to$ bid $1.9$ / $1.9$ $\to$ $R_{2}$ wins $\to$ reward $-4.2$ / $+4.2$
    \item $t = 4$:  (6000, 3000) $\to$ bid $1.25$ / $1.25$ $\to$ $R_{2}$ wins $\to$ reward $-1.1$ / $+1.1$
    \item $t = 5$:  (6000, 6000) $\to$ bid $1.0$ / $1.0$ => $R_{2}$ wins $\to$ reward $-0.0$ / $+0.0$ 
\end{itemize}

The total reward over the last three tranches is thus -5.3 to $R_{1}$ and +5.3 to $R_{2}$. Overall, agent $R_{1}$ has accurately estimated its expected reward from state (6000, 0) over the remaining three tranches, when playing its estimated optimal actions, as $-5.3$m. See cell (3, 1) of the ``Estimated Returns'' matrix in Fig~\ref{fig:r1_ex1_pol_rfunc3}.

\begin{figure}[!htbp]
\begin{scriptsize}
\begin{verbatim}
=== FINAL POLICY EVALUATION ===
--------------------------------------------------------------------
|  t  | Agent | Prod |  Perm  | Prof | Rho  | Bid  |  Win? | +Prof |
--------------------------------------------------------------------
|  1  |   1   |  0.0 |      0 |  0.0 |  8.4 | 1.55 |   *   |   6.9 |
|     |   2   |  0.0 |      0 |  0.0 |  6.1 | 1.55 |       |       |
--------------------------------------------------------------------
|  2  |   1   | 42.2 |  3,000 |  6.9 |  1.6 | 1.55 |   *   |   0.1 |
|     |   2   |  0.0 |      0 |  0.0 |  6.1 | 1.50 |       |       |
--------------------------------------------------------------------
|  3  |   1   | 50.2 |  6,000 |  7.0 |  1.0 | 1.90 |   *   |  -0.9 |
|     |   2   |  0.0 |      0 |  0.0 |  6.1 | 1.90 |       |       |
--------------------------------------------------------------------
|  4  |   1   | 55.2 |  9,000 |  6.1 |  0.8 | 2.50 |   *   |  -1.7 |
|     |   2   |  0.0 |      0 |  0.0 |  6.1 | 2.50 |       |       |
--------------------------------------------------------------------
|  5  |   1   | 59.0 | 12,000 |  4.4 |  0.6 | 3.40 |   *   |  -2.8 |
|     |   2   |  0.0 |      0 |  0.0 |  6.1 | 3.40 |       |       |
--------------------------------------------------------------------
(winning agent must pay less than rho to make a profit)

Avg permit price: $723.0
Agent 1:
  -> won 15000 permits
  -> paid $10.85m
  -> produced 62.2m litres of fuel
  -> made a total profit of $1.58m

Agent 2:
  -> won 0 permits
  -> paid $0.00m
  -> produced 0.0m litres of fuel
  -> made a total profit of $0.00m

====== ESTIMATED OPTIMAL STRATEGIES: argmax_a Q(s, a) ======

==============================================================================================
        ||      0      |    3,000    |    6,000    |    9,000    |    12,000   |    15,000   |
==============================================================================================
      0 || 1.55 / 1.55 | 1.55 / 1.55 | 1.95 / 1.95 | 2.85 / 2.85 | 4.50 / 4.50 | 0.0* / 0.0* |
  3,000 || 1.55 / 1.50 | 1.20 / 1.15 | 1.05 / 1.05 | 1.15 / 1.15 | 0.0* / 0.0* |   ?  /  ?   |
  6,000 || 1.90 / 1.90 | 1.25 / 1.25 | 1.00 / 1.00 | 0.0* / 0.0* |   ?  /  ?   |  ?   /  ?   |
  9,000 || 2.50 / 2.50 | 1.55 / 1.6* | 0.0* / 0.0* |   ?  /  ?   |  ?   /  ?   |  ?   /  ?   |
 12,000 || 3.40 / 3.40 | 0.0* / 0.0* |   ?  /  ?   |  ?   /  ?   |  ?   /  ?   |  ?   /  ?   |
 15,000 || 0.0* / 0.0* |   ?  /  ?   |  ?   /  ?   |  ?   /  ?   |  ?   /  ?   |  ?   /  ?   |

====== ESTIMATED RETURNS IF OPTIMAL STRATEGY FOLLOWED: max Q(s, a) ======

===================================================================================================
        ||      0       |    3,000     |    6,000     |    9,000     |    12,000    |   15,000    |
===================================================================================================
      0 || 1.72 / -1.72 | 6.30 / -6.30 | 7.10 / -7.10 | 6.10 / -6.10 | 3.95 / -3.95 | 0.00 / 0.00 |
  3,000 || -5.20 / 5.20 | -0.61 / 0.61 | 0.58 / -0.58 | 0.47 / -0.47 | 0.00 / 0.00  |  ?   /  ?   |
  6,000 || -5.30 / 5.30 | -1.05 / 1.05 | 0.04 / -0.04 | 0.00 / 0.00  |  ?   /  ?    |  ?   /  ?   |
  9,000 || -4.44 / 4.44 | -0.80 / 0.80 | 0.00 / 0.00  |  ?   /  ?    |  ?   /  ?    |  ?   /  ?   |
 12,000 || -2.74 / 2.74 | 0.00 / 0.00  |  ?   /  ?    |  ?   /  ?    |  ?   /  ?    |  ?   /  ?   |
 15,000 || 0.00 / 0.00  |  ?   /  ?    |  ?   /  ?    |  ?   /  ?    |  ?   /  ?    |  ?   /  ?   |
\end{verbatim}
\end{scriptsize}
    \caption{Final policy for a sample run of $r^{(3)}$. }
    \label{fig:r1_ex1_pol_rfunc3}
\end{figure}

\stoptoc

\end{document}